\documentclass{article}
\usepackage{arxiv}
\usepackage{microtype}
\usepackage{hyphenat}
\makeatother

\usepackage{lmodern}

\usepackage{microtype}
\usepackage{hyperref}
\usepackage{url}
\usepackage{booktabs}
\usepackage{minted}
\usepackage{float} 
\usepackage{microtype}
\usepackage{caption}
\usepackage{subcaption}
\usepackage{hyperref}
\usepackage{url}
\usepackage{amsthm}
\usepackage{bm}
\usepackage{bbm}
\usepackage{booktabs}
\usepackage{textcomp}
\usepackage{mathabx}
\usepackage{mathabx}
\usepackage{bbm}

\definecolor{darkblue}{rgb}{0, 0, 0.5}
\hypersetup{colorlinks=true, citecolor=darkblue, linkcolor=darkblue, urlcolor=darkblue}

\RequirePackage[T1]{fontenc}
\RequirePackage[tt=false, type1=true]{libertine}
\RequirePackage[varqu]{zi4}

\usepackage{etoolbox}
\usepackage{appendix}
\usepackage{xspace}

\usepackage{amsmath, amssymb, amsthm}
\usepackage[ruled,vlined,linesnumbered]{algorithm2e}
\usepackage{hyperref}
\usepackage{booktabs}
\usepackage{enumitem}

\usepackage{tikz}
\usetikzlibrary{shapes.geometric, arrows.meta, positioning, calc, fit, backgrounds, patterns,patterns.meta,matrix, decorations.pathreplacing, decorations.pathmorphing,calc, 
                  shadows.blur, positioning}

\usepackage[normalem]{ulem}

\usepackage[capitalize,noabbrev]{cleveref}
\crefname{algocf}{Algorithm}{Algorithms}
\Crefname{algocf}{Algorithm}{Algorithms}
\usepackage{listings}
\lstdefinestyle{prompt}{
  basicstyle=\ttfamily\scriptsize,
  breaklines=true,
  breakatwhitespace=false,
  frame=single,
  backgroundcolor=\color{gray!8},
  rulecolor=\color{gray!40},
  xleftmargin=4pt,
  xrightmargin=4pt,
  aboveskip=4pt,
  belowskip=4pt,
  literate={—}{{---}}1,
}
\usepackage{amsthm}
\theoremstyle{plain}                                   \newtheorem{theorem}{Theorem}[section]
         \newtheorem{lemma}[theorem]{Lemma}
             \theoremstyle{definition}                              \newtheorem{definition}[theorem]{Definition}
\newtheorem{assumption}[theorem]{Assumption}           \theoremstyle{remark}                                                     
\usepackage{mathtools}
\usepackage{multirow}                                  \usepackage{adjustbox}
\usepackage{pgfplots}
\usepackage{pgfplotstable}
\usepackage{placeins}
\usepackage{wrapfig}                                   \usepackage{xcolor}
\pgfplotsset{compat=1.18}
  
\definecolor{border}{HTML}{1E293B}
\definecolor{borderm}{HTML}{334155}
\definecolor{vis}{HTML}{3B82F6}
\definecolor{visDark}{HTML}{1D4ED8}
\definecolor{visLight}{HTML}{DBEAFE}
\definecolor{visBg}{HTML}{EFF6FF}
\definecolor{evictA}{HTML}{EF4444}
\definecolor{evictLight}{HTML}{FEE2E2}
\definecolor{accent}{HTML}{10B981}
\definecolor{accentDark}{HTML}{047857}
\definecolor{accentLight}{HTML}{D1FAE5}
\definecolor{traincolor}{HTML}{7C3AED}
\definecolor{trainLight}{HTML}{EDE9FE}
\definecolor{labelc}{HTML}{475569}
\definecolor{gridkept}{HTML}{93C5FD}
\definecolor{promptc}{HTML}{64748B}
\definecolor{tokbg}{HTML}{F8FAFC}

\usepackage{amsmath,amsfonts,bm}

\def\eqref#1{equation~\ref{#1}}

\def\1{\bm{1}}

\DeclareMathAlphabet{\mathsfit}{\encodingdefault}{\sfdefault}{m}{sl}
\SetMathAlphabet{\mathsfit}{bold}{\encodingdefault}{\sfdefault}{bx}{n}

\usepackage{lineno}

\definecolor{darkblue}{rgb}{0, 0, 0.5}
\hypersetup{colorlinks=true, citecolor=darkblue, linkcolor=darkblue, urlcolor=darkblue}
\usepackage{tikz}
\usetikzlibrary{arrows.meta, positioning, decorations.pathreplacing, calc, patterns}

\definecolor{border}{HTML}{3B4252}
\definecolor{borderm}{HTML}{4C566A}
\definecolor{labelc}{HTML}{6B7394}
\definecolor{tokbg}{HTML}{F0F1F5}
\definecolor{vis}{HTML}{5E81AC}
\definecolor{visDark}{HTML}{3B6EA5}
\definecolor{visLight}{HTML}{D8E6F3}
\definecolor{visBg}{HTML}{E8F0F8}
\definecolor{evictA}{HTML}{BF616A}
\definecolor{evictLight}{HTML}{F2DEDE}
\definecolor{gridkept}{HTML}{A3BE8C}
\definecolor{promptc}{HTML}{8B7EC8}
\definecolor{accent}{HTML}{4CAF50}
\definecolor{accentDark}{HTML}{2E7D32}
\definecolor{accentLight}{HTML}{E8F5E9}
\definecolor{traincolor}{HTML}{D08C3E}
\definecolor{trainLight}{HTML}{FDF0E0}
\definecolor{budgetcolor}{HTML}{C0392B}
\definecolor{budgetBg}{HTML}{FDF2F0}
\definecolor{border}{HTML}{4A4A4A}
\definecolor{gradcol}{HTML}{C0392B}
\definecolor{autogradcol}{HTML}{E67E22}

\definecolor{sumcolor}{HTML}{E67E22}
\definecolor{sumLight}{HTML}{FDF2E9}
\definecolor{sumDark}{HTML}{BF6516}
\definecolor{sumKept}{HTML}{F0C27A}

\definecolor{vis}{HTML}{4A90D9}       
\definecolor{evictA}{HTML}{E05A4F}    
\definecolor{evictB}{HTML}{E8A735}    
\definecolor{fut}{HTML}{F0F0F0}       
\definecolor{border}{HTML}{4A4A6A}
\definecolor{labelc}{HTML}{3A3A5C}

\definecolor{metaBg}{HTML}{FFFBF5}
\definecolor{metaBorder}{HTML}{C05621}
\definecolor{metaText}{HTML}{9C4221}
\definecolor{dumpColor}{HTML}{7B2D8E}
\definecolor{dumpLight}{HTML}{F5E6FF}

\newlength{\fscale}
\title{BALAR : A Bayesian Agentic Loop for Active Reasoning}
\author{\normalsize
Aymen Echarghaoui\thanks{Correspondence to \texttt{aymen20@stanford.edu}. Code: \url{https://github.com/AymenEcharghaoui/BALAR}}\\
\normalsize
Department of Statistics, Stanford University
\And
\normalsize
Dongxia Wu\\
\normalsize
Department of Computer Science, Stanford University
\And
\normalsize
Emily B.~Fox\\
\normalsize
Department of Statistics, Stanford University\\
Department of Computer Science, Stanford University
}

\begin{document}

\maketitle

\begin{abstract}
Large language models increasingly operate in interactive settings where solving a task requires multiple rounds of information exchange with a user. However, most current systems treat dialogue reactively and lack a principled mechanism to reason about what information is missing and which question should be asked next.
We propose \textbf{BALAR} (\textbf{B}ayesian \textbf{A}gentic \textbf{L}oop for \textbf{A}ctive \textbf{R}easoning), a task-agnostic outer-loop algorithm that requires no fine-tuning and enables structured multi-turn interaction between an LLM agent and a user. BALAR maintains a structured belief over latent states, selects clarifying questions by maximizing expected mutual information, and dynamically expands its state representation when the current one proves insufficient. We evaluate BALAR on three diverse benchmarks: AR-Bench-DC (detective cases), AR-Bench-SP (thinking puzzles), and iCraft-MD (clinical diagnosis).
BALAR significantly outperforms all baselines across all three benchmarks, with $\mathbf{14.6\%}$ higher accuracy on AR-Bench-DC, $\mathbf{38.5\%}$ on AR-Bench-SP, and 
$\mathbf{30.5\%}$ on iCraft-MD.
\end{abstract}

\section{Introduction}
\label{sec:intro}

Modern deployments of large language models (LLMs) span domains where user intent is rarely unambiguous: a patient asking a medical AI for advice might omit critical symptoms, a customer service agent must resolve which product a user is asking about from a vague description. In all these cases, the bottleneck is not the raw reasoning capacity of the LLM, but the absence of a principled mechanism that (1) detects when a prompt is ambiguous, (2) formulates targeted clarifying questions, (3) integrates responses coherently, and (4) decides when enough information has been gathered to commit to an answer.

Existing approaches address this challenge in different ways. Methods such as Tree-of-Thoughts (ToT) \citep{tot} and Uncertainty-of-Thoughts (UoT) \citep{uot} both support multi-step reasoning beyond a single forward pass, but they target different forms of interaction. ToT focuses on internal search over intermediate reasoning states using BFS/DFS to maintain multiple partial solutions, whereas UoT relies on forward simulation of interaction trees to select follow-up questions. Interactive methods such as \texttt{CollabLLM} \citep{collabLLM} learn to select questions by fine-tuning on simulated trajectories, introducing costly training dependencies. The \texttt{MediQ Expert} system \citep{mediq} operates through a fixed pipeline of LLM calls without a formal model of user intent. Benchmarks such as AR-Bench \citep{arbench} and MediQ \citep{mediq} expose the gap: state-of-the-art LLMs at the time of their publication fall far below an oracle that has access to all private information.

We address this gap with \textbf{BALAR} (\textbf{B}ayesian \textbf{A}gentic \textbf{L}oop for \textbf{A}ctive \textbf{R}easoning), a \emph{task-agnostic}, \emph{training-free} Bayesian outer loop. The key insight is to model user intent as a latent discrete variable over a structured product space of disambiguating dimensions where each dimension captures one facet of potential ambiguity (e.g.\ \emph{severity level}, \emph{product type}). BALAR initializes this belief at ``sleep time'' (before interaction begins) using a number of parallel LLM calls, then iteratively selects the unasked (question, user) pair with highest mutual information with the belief, updates the posterior via Bayes' rule upon receiving each response, and dynamically expands the state space by proposing new dimensions when the current representation is insufficient. We illustrate the full pipeline in \cref{fig:balar-example-red} using a running medical example that we revisit throughout \cref{sec:method}.

\paragraph{Contributions.} We make the following contributions:
\begin{itemize}[topsep=2pt,itemsep=1pt,leftmargin=*]
  \item We propose BALAR, a task-agnostic, training-free Bayesian outer loop that enables LLM agents to engage in structured multi-turn interaction, actively selecting informative questions and updating a belief over latent task states during dialogue (\cref{sec:method}).
  \item We introduce a \emph{dynamic state expansion} mechanism, combining ASK and EXPAND actions guided by an entropy gap criterion (\cref{subsec:expand}).
  \item We evaluate BALAR on three diverse benchmarks across open-weight LLMs. Results show
BALAR outperforms baselines by $14.6\%$ on AR-Bench-DC, $38.5\%$ on AR-Bench-SP, and 
$30.5\%$ on iCraft-MD (\cref{sec:results}).
\end{itemize}

\begin{figure}
    \centering
    \includegraphics[width=1\linewidth]{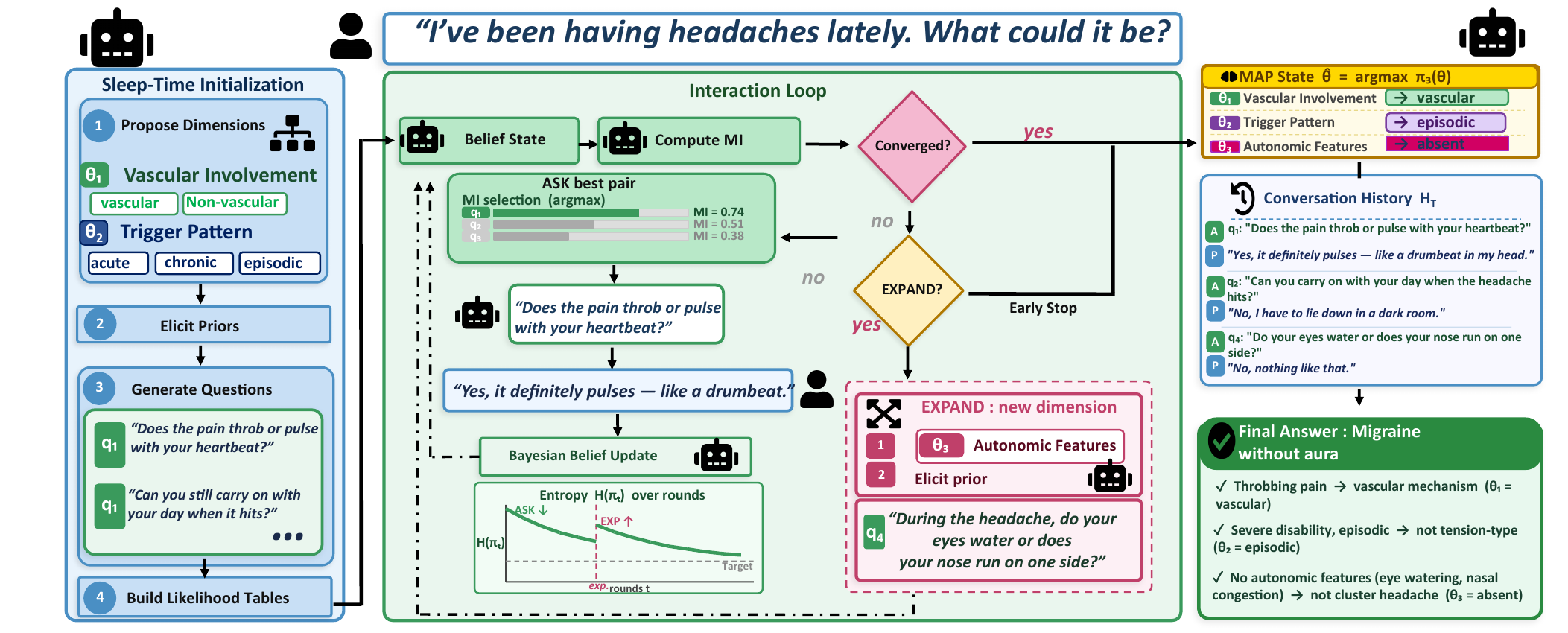}
    \caption{
\textbf{BALAR overview.}
Given an ambiguous query, BALAR performs structured multi-turn reasoning in two stages.
\emph{Sleep-time initialization} (left): the agent constructs a latent state representation by proposing disambiguating dimensions $\{\theta_j\}$, eliciting priors $\pi^{(j)}$, generating candidate questions $\mathcal{Q}$, and estimating likelihood tables $L_{q,u,\theta_j}(y \mid \theta_j)$.
\emph{Interaction loop} (center): the agent maintains a belief $\pi_t(\theta)$ and iteratively selects the unasked $(q,u)$ pair maximizing mutual information $I_t(\theta; Y \mid \mathcal{H}_t)$.
User responses are incorporated via a Bayesian update.
When the entropy gap cannot be closed within the remaining budget, BALAR triggers \textsc{Expand}, introducing new dimensions and targeted questions.
\emph{Final answer} (right): once the belief concentrates, the MAP state $\hat{\theta}=\arg\max_\theta \pi_T(\theta)$ and the history $\mathcal{H}_T$ condition a final LLM call to produce the answer.}
\label{fig:balar-example-red}
\end{figure}

\section{Related Work}
\label{sec:related}
\paragraph{Active reasoning benchmarks.}
In \emph{active reasoning}, a model must iteratively acquire missing information through interaction rather than solving a problem from a fully specified prompt. The \textsc{AR-Bench} benchmark \citep{arbench} evaluates this capability by placing language models in multi-turn environments where they must ask informative questions to uncover hidden facts before producing a final answer. Empirical results reveal a substantial gap between passive and active reasoning performance: even state-of-the-art models achieve relatively low accuracy, and models frequently ask vague or redundant questions while struggling to accumulate useful information across turns. These findings highlight the need for architectures that explicitly reason about uncertainty and guide question selection in a principled manner. BALAR addresses this by maintaining a structured posterior over latent user intent and selecting clarifying questions by maximizing expected information gain, providing a principled mechanism for strategic information gathering across interaction rounds.

\paragraph{Interactive medical dialogue.}
The \textsc{MEDIQ} framework \citep{mediq} introduces a benchmark for evaluating the ability of LLMs to proactively seek missing information in clinical decision-making tasks. MEDIQ converts existing datasets into interactive tasks by revealing only limited initial patient information and requiring the model to iteratively gather missing evidence before making a decision. Experiments show that prompting state-of-the-art LLMs to ask questions often \emph{degrades} performance relative to answering directly with partial information, highlighting the difficulty of proactive information-seeking for current models. In contrast to task-specific pipelines such as MEDIQ-Expert, BALAR provides a general-purpose Bayesian outer-loop that operates across domains without relying on specialized heuristics.

\paragraph{Search-based and uncertainty-aware reasoning.}
\textsc{Tree-of-Thoughts (ToT)} \citep{tot} extends chain-of-thought reasoning by organizing intermediate reasoning steps as a search tree, combining LM generation with BFS/DFS to maintain multiple partial solutions and backtrack when necessary. \textsc{Uncertainty of Thoughts (UoT)} \citep{uot} similarly relies on forward simulation of interaction trees, propagating information-theoretic rewards across hypothetical dialogue trajectories to select the question with highest expected uncertainty reduction. While both approaches improve reasoning through structured exploration, they incur substantial computational cost and do not maintain an explicit probabilistic model of the problem state. The idea of selecting questions by maximizing expected entropy reduction over a maintained belief state has classical precedent in expert 
systems. \citet{pathfinder} introduced this strategy in the PATHFINDER system for lymph-node pathology diagnosis, where a probability distribution over diseases is updated after each observation. BALAR maintains and updates a posterior belief over a structured latent intent space, selecting questions through closed-form mutual information. Although it also requires multiple LLM calls, these are decomposed into independent computations that can be executed in parallel, avoiding the sequential overhead of trajectory-based methods.

\paragraph{Proactive dialogue through learning and prompting.}
\textsc{CollabLLM} \citep{collabLLM} uses reinforcement learning to fine-tune LLMs to optimize long-term collaboration outcomes, estimating a multiturn-aware reward by simulating future conversations and evaluating trajectories for task success and efficiency. \textsc{Proactive Chain-of-Thought (ProCoT)} \citep{procot} instead induces proactive behaviors through prompting, augmenting standard prompting with intermediate reasoning steps that describe the dialogue state and plan the next action.  \textsc{STaR-GATE} \citep{stargate} takes a self-improvement approach: starting from a pretrained model, it iteratively fine-tunes on questions that increase the likelihood of high-quality task responses, bootstrapping better clarification-seeking behavior without requiring human-labeled trajectories. Both approaches demonstrate that models can be made to ask clarifying questions, but rely on either additional training or heuristic prompt engineering. BALAR requires neither: its Bayesian formulation directly identifies informative queries from a maintained posterior, enabling principled proactive dialogue without fine-tuning or prompt heuristics.

\section{Problem Setup}
\label{sec:setup}

We consider an LLM agent interacting with a set $\mathcal{U}$ of users to resolve an ambiguous prompt.

\begin{definition}[Interaction instance]
An interaction instance is a tuple $(\mathbf{p}, \mathbf{c}, \mathcal{U})$ where $\mathbf{p}$ is an \emph{ambiguous prompt}, $\mathbf{c}$ is an optional \emph{meta-context} (publicly known background), and $\mathcal{U} = \{u_1,\ldots,u_N\}$ is a set of users each holding \emph{private information} $\mathbf{f}_i$ not visible to the agent.
\end{definition}

The agent's goal is to produce the correct answer $y^*$ to $\mathbf{p}$, where correctness depends on the users' private information. The agent may ask questions $q \in \mathcal{Q}$ to any user $u \in \mathcal{U}$, receiving a natural language response $r$. We treat this as a \emph{Bayesian active information gathering} problem: the agent maintains a belief over a dynamic latent state $\theta$ representing user intent, selects actions to reduce uncertainty, and commits to an answer when sufficiently confident.

\section{BALAR: Bayesian Agentic Loop for Active Reasoning}
\label{sec:method}

BALAR operates in two phases: a \emph{sleep-time initialization} \citep{sleep} that constructs a structured belief and a question bank from LLM calls, and an \emph{interaction loop} that adaptively selects questions, updates the belief, and expands the state space as needed.

\noindent\textbf{Running example.} Throughout \cref{sec:method}, we trace a concrete instance: a patient submits the ambiguous prompt \emph{``I've been having headaches lately. What could I do ?''} The same complaint could
indicate migraine, tension headache, cluster headache, or hypertensive crisis, and the patient cannot be expected to use clinical terminology such as \emph{vascular involvement} or \emph{trigger pattern}. BALAR's goal is to infer these latent medical concepts from patient-friendly conversational questions, without the patient ever encountering medical jargon.

\subsection{Sleep-Time Initialization}
\label{subsec:init}
Given $(\mathbf{p}, \mathbf{c}, \mathcal{U})$, the agent performs four
initialization steps before any user interaction.

\subsubsection*{Step 1: Disambiguating dimensions.}
A single LLM call proposes $p$ \emph{disambiguating dimensions}
$(\theta_1,\ldots,\theta_p)$, each with a finite value set
$\Theta_j = \{v_{j,1},\ldots,v_{j,n_j}\}$.
A dimension captures one axis of potential variation in user intent such
that, once its value is fixed, the ambiguity of $\mathbf{p}$ is reduced. See \cref{fig:ex-step1} for the running example.
\begin{figure}[H]
  \centering
  \includegraphics[width=0.80\fscale]{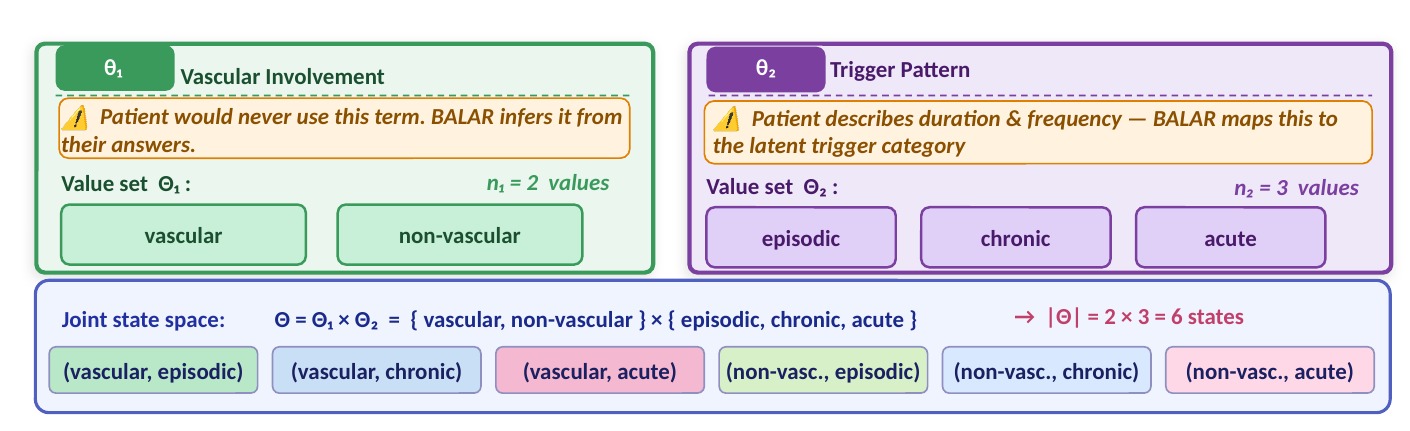}
  \caption{A single LLM call proposes two dimensions: $\theta_1 =$ \emph{Vascular Involvement}
  ($\Theta_1 = \{\text{vascular}, \text{non-vascular}\}$) and $\theta_2 =$
  \emph{Trigger Pattern} ($\Theta_2 = \{\text{episodic}, \text{chronic}, \text{acute}\}$),
  yielding a joint state space of $|\Theta| = 6$ states.}
  \label{fig:ex-step1}
\end{figure}

\subsubsection*{Step 2: Prior elicitation.}

Let $\mathcal{L} = \{\ell_1, \ldots, \ell_r\}$ be a finite label set (e.g., ``likely'', ``neutral'', ``unlikely'') and
$\phi : \mathcal{L} \to \Delta^{r-1}$ a fixed \emph{label-to-probability map},
where $\phi(\ell_i)$ denotes the probability mass assigned to label $\ell_i$,
with $\sum_{i=1}^r \phi(\ell_i) = 1$. For each dimension value $v_{j,k}$ (e.g., ``vascular'', ``non-vascular''), a separate LLM call returns a label
$\ell_{j,k} \in \mathcal{L}$. The per-dimension prior is then
$
\pi^{(j)}(v_{j,k}) \;=\;
  \phi(\ell_{j,k})/\sum_{k'=1}^{n_j} \phi(\ell_{j,k'}).
$ The label set $\mathcal{L}$ and the map $\phi$ are
treated as hyperparameters and specified in
\cref{subsec:hypers}. See \cref{fig:ex-step2} for the running example.

\begin{figure}[H]
  \centering
  \includegraphics[width=0.90\fscale]{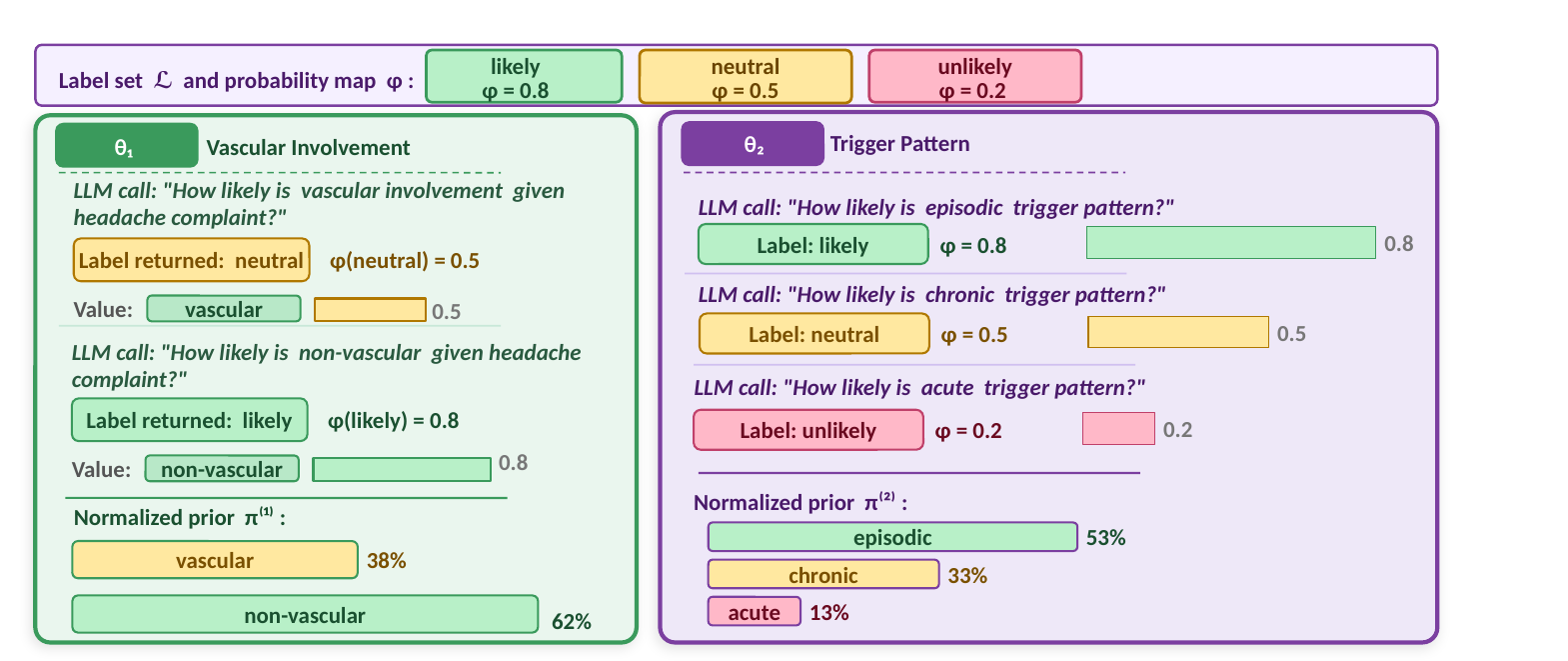}
  \caption{Parallel LLM calls assign a label $\ell \in \mathcal{L}$ to each dimension value.
  Here, the LLM judges \emph{vascular} as \texttt{neutral} and
  \emph{non-vascular} as \texttt{likely}, yielding
  $\pi^{(1)} = [0.38, 0.62]$, while \emph{episodic} is \texttt{likely},
  \emph{chronic} \texttt{neutral}, \emph{acute} \texttt{unlikely},
  giving $\pi^{(2)} = [0.53, 0.33, 0.13]$.}
  \label{fig:ex-step2}
\end{figure}

\subsubsection*{Step 3: Question generation.}
A single LLM call generates $|\mathcal{Q}|$ candidate clarifying questions,
each with a discrete answer set $\mathcal{Y}_q$. Questions are designed to
be informative about the dimensions from Step~1. See \cref{fig:ex-step3} for the running example. Note that the user provides free-form responses to these questions and is not shown the discrete answer set $\mathcal{Y}_q$. Instead, $\mathcal{Y}_q$ is used internally for tractable belief updates (see \cref{subsec:update}).
\begin{figure}[H]
  \centering
  \includegraphics[width=0.90\fscale]{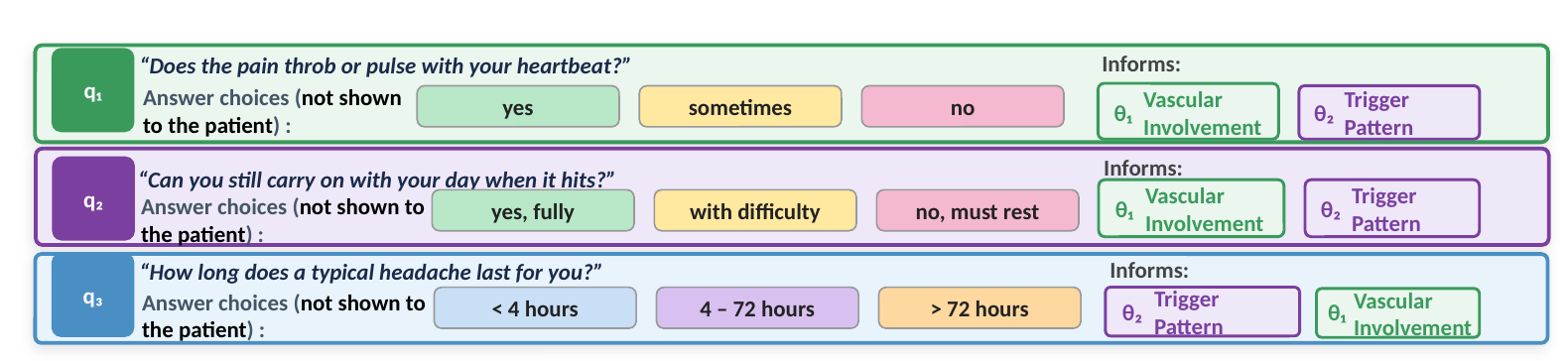}
  \caption{An LLM call generates $|\mathcal{Q}|=3$ questions
  conditioned on the proposed dimensions. Each question informs many
  dimensions simultaneously. No question uses medical jargon.}
  \label{fig:ex-step3}
\end{figure}

\subsubsection*{Step 4: Likelihood table construction.}
For each triple $(q, u, j) \in \mathcal{Q} \times \mathcal{U} \times [p]$,
a separate LLM call returns a label
$\ell_{q,u,j,k,y} \in \mathcal{L}$ for each cell
$(v_{j,k}, y) \in \Theta_j \times \mathcal{Y}_q$,
yielding the \emph{dimension-level likelihood matrix}:
\[
L_{q,u,\theta_j}(y \mid v_{j,k})
  \;=\; \frac{\phi(\ell_{q,u,j,k,y})}
             {\sum_{y' \in \mathcal{Y}_q} \phi(\ell_{q,u,j,k,y'})},
  \quad y \in \mathcal{Y}_q.
\]
Each call handles exactly one triple, keeping contexts short and avoiding
the hallucination that arises from joint evaluation of many cells.
All $|\mathcal{Q}| \cdot N \cdot p$ calls are dispatched concurrently. See \cref{fig:ex-step4} for the running example.

\begin{figure}[H]
  \centering
  \includegraphics[width=\fscale]{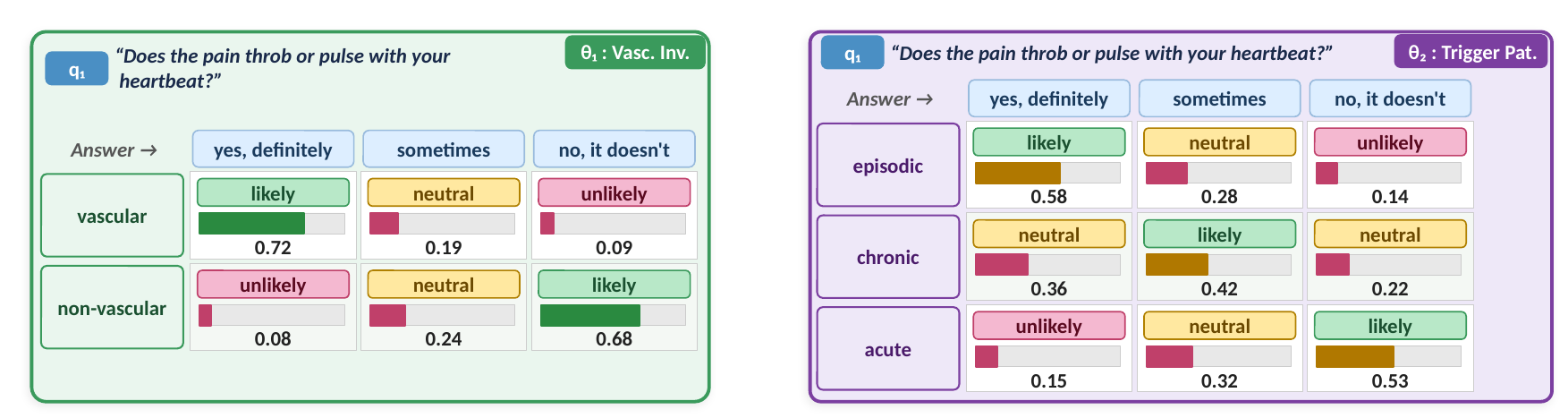}
  \caption{Six parallel LLM calls fill the $|\mathcal{Q}| \times p = 3 \times 2$ likelihood
  matrices. Shown here are the two tables for $q_1$: vascular states are
  much more likely to answer \emph{``yes, definitely''} ($L=0.72$) than
  non-vascular states ($L=0.08$).}
  \label{fig:ex-step4}
\end{figure}

\subsection{Structured Belief State}
\label{subsec:belief}

Under the apriori \emph{independence assumption} across dimensions, the prior over the joint state $\theta = (\theta_1,\ldots,\theta_p) \in \Theta := \prod_{j=1}^p \Theta_j$ factorizes:
\[
  \pi_0(\theta) = \prod_{j=1}^p \pi^{(j)}(\theta_j), \qquad \theta = (\theta_1,\ldots,\theta_p) \in \Theta.
\]
The belief state $\pi_t$ is stored as a log-probability tensor of shape $n_1 \times \cdots \times n_p$, initialized from marginals and updated in log-space for numerical stability. The \emph{state-level likelihood} for pair $(q, u)$ is obtained by combining dimension-level likelihoods. For $\theta \in \Theta$ and $y \in \mathcal{Y}_q$:
\[
K_{q,u}(\theta, y) = \frac{\prod_{j=1}^p L_{q,u,\theta_j}(y \mid \theta_j)}
{\sum_{y' \in \mathcal{Y}_q} \prod_{j=1}^p L_{q,u,\theta_j}(y' \mid \theta_j)}.
\]
This is a modeling choice that enables efficient tensor computation.

\subsection{Information-Theoretic Question Selection}
\label{subsec:mi}

At round $t$, let $\pi_t(\theta) = \mathbb{P}(\theta = \theta \mid
\mathcal{H}_t)$ denote the posterior over states given the interaction
history $\mathcal{H}_t$. Define the \emph{belief entropy}:
\[
  \mathbb{H}(\pi_t)
  \;=\; -\sum_{\theta \in \Theta} \pi_t(\theta)\log \pi_t(\theta).
\]
For a candidate pair $(q, u) \in \mathcal{Q} \times [N]$, the
\emph{predictive distribution} over answers~\footnote{We use the semicolon to separate conditioning variables from fixed parameters: $f_D(A \mid B;\, C)$ denotes quantity $f$ indexed by $D$ (e.g., the round index $t$), where $A \mid B$ is read as ``$A$ given $B$'' in the probabilistic sense, and $C$ denotes a fixed parameter or context not treated as a random variable (e.g., the query--user pair $(q, u)$).} is:
\[
  p_t^{q,u}(y)
  \;=\; \mathbb{P}(Y = y \mid \mathcal{H}_t;\, q, u)
  \;=\; \sum_{\theta \in \Theta} \pi_t(\theta)\, K_{q,u}(\theta, y).
\]
Upon observing answer $y$, the posterior becomes:
\[
\pi_t^{q,u,y}(\theta)
  \;=\; \mathbb{P}(\theta = \theta \mid \mathcal{H}_t,\, Y = y;\, q, u)
  \;=\; \frac{K_{q,u}(\theta, y)\,\pi_t(\theta)}
             {\sum_{\theta' \in \Theta} K_{q,u}(\theta', y)\,\pi_t(\theta')}.
\]
The \emph{conditional entropy} after querying $(q, u)$ is:
\[
  \mathbb{H}(\theta \mid Y, \mathcal{H}_t;\, q, u)
  \;=\; \sum_{y \in \mathcal{Y}_q}
    p_t^{q,u}(y)\;\mathbb{H}(\pi_t^{q,u,y}),
\]
and the \emph{mutual information} between $\theta$ and the answer $Y$ is:
\[
  I_t(\theta;\, Y \mid \mathcal{H}_t;\, q, u)
  \;=\; \mathbb{H}(\pi_t) - \mathbb{H}(\theta \mid Y, \mathcal{H}_t;\, q, u).
\]
The agent selects the unasked pair with highest MI:
\[
  (q_t^*, u_t^*)
  \;=\; \arg\max_{(q,u)\,\text{unasked}}
    I_t(\theta;\, Y \mid \mathcal{H}_t;\, q, u).
\]

\paragraph{Theoretical Guarantee.}
Under the assumption that answers $\{Y_{q,u}\}$ are conditionally independent given $\theta$, BALAR's greedy MI-maximizing selection within the \textsc{Ask} sub-loop with a fixed pair space $\mathcal{Q} \times \mathcal{U}$ satisfies : 
\[
  G_k \;\geq\; \left(1 - \tfrac{1}{e}\right) G^*,
\]
where $G_k$ is the expected information gain of the greedy policy after $k$ rounds and $G^*$ is that of the optimal $k$-budgeted adaptive policy. The full proof is given in Appendix~\ref{subsec:theory}.

\subsection{Soft Bayesian Belief Update}
\label{subsec:update}
User $u_t^*$ provides a free-form natural language answer $r_t$, which
need not coincide with any element of $\mathcal{Y}_{q_t^*}$.
An LLM maps $r_t$ to a label $\ell_y \in \mathcal{L}$ for each choice
$y \in \mathcal{Y}_{q_t^*}$, inducing a probability vector $\hat{\omega}(r_t, q) \in \Delta^{|\mathcal{Y}_q|-1}$ over the choices. For $y \in \mathcal{Y}_{q_t^*} : $
\[
\hat{\omega}_y(r_t, q)
  \;=\; \frac{\phi(\ell_y)}
             {\sum_{y' \in \mathcal{Y}_{q_t^*}} \phi(\ell_{y'})}.
\]
The \emph{effective per-state likelihood} under this soft observation is 
$
  \hat{L}(\theta \mid r_t, q, u)
  \;=\; \sum_{y \in \mathcal{Y}_{q_t^*}}
    \hat{\omega}_y(r_t, q)\, K_{q,u}(\theta, y).
$
The posterior then updates as:
\[
\pi_{t+1}(\theta)
  \;=\; \frac{\hat{L}(\theta \mid r_t, q_t^*, u_t^*)\,\pi_t(\theta)}
             {\sum_{\theta' \in \Theta} \hat{L}(\theta' \mid r_t, q_t^*, u_t^*)\,\pi_t(\theta')},
\]
which is computed in log-space for numerical stability. See \cref{fig:ex-round1} for the running example.
\begin{figure}[H]
  \centering
  \includegraphics[width=0.86\fscale]{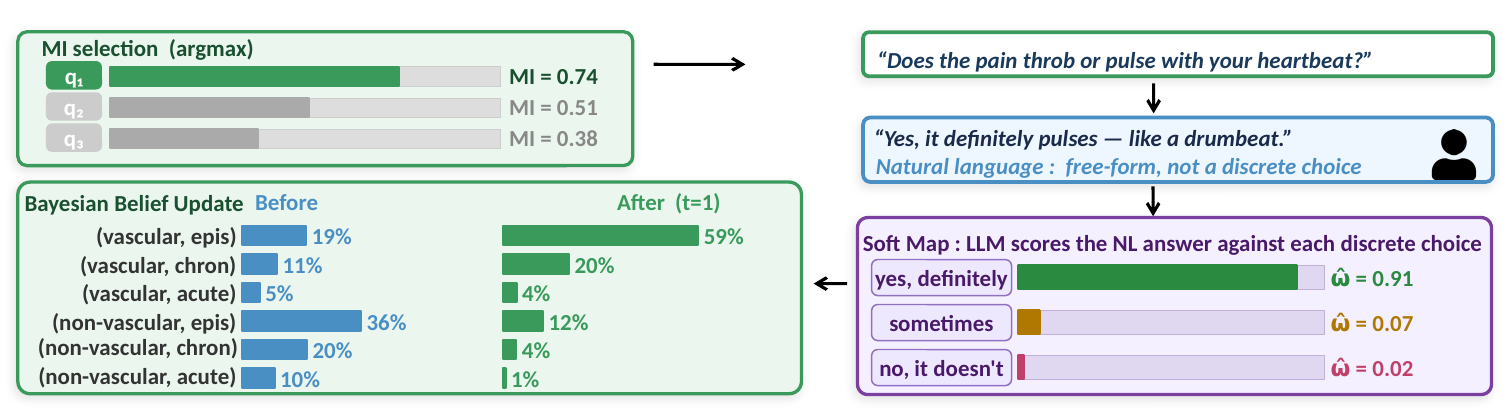}
  \caption{$q_1$ has the highest MI and is asked first.
  The patient replies in free-form natural language. The soft-map LLM
  assigns $\hat{\omega} = [0.91, 0.07, 0.02]$ over the discrete choices.
  A Bayesian update shifts mass toward vascular states:
  $(\text{vasc}, \text{epis})$ rises from $19\%$ to $59\%$.}
  \label{fig:ex-round1}
\end{figure}

\subsection{Dynamic State Expansion}
\label{subsec:expand}

When the existing question bank is exhausted or the best available MI is insufficient to close the entropy gap within the remaining budget, BALAR triggers an \textbf{EXPAND} step.

\paragraph{Entropy gap criterion.}
Let $\mathbb{H}_t = \mathbb{H}(\pi_t)$ be the current entropy, $\alpha \in [0,1)$ and $\mathbb{H}_\alpha$ be the target entropy:
\[
  \mathbb{H}_\alpha = -(1-\alpha)\log(1-\alpha) - \alpha \log\!\left(\tfrac{\alpha}{|\Theta|-1}\right),
\]
the entropy of a distribution that assigns probability $1-\alpha$ to a single state and distributes the remaining mass $\alpha$ uniformly over the other $|\Theta|-1$ states. 
Equivalently, among all distributions satisfying
$
\max_{\theta \in \Theta} \pi(\theta) \ge 1-\alpha,
$
this choice achieves the largest possible entropy. The gap is $\Delta_t = \max(0, \mathbb{H}_t - \mathbb{H}_\alpha)$. The EXPAND condition is:
\[
  \Delta_t > \lambda \cdot I_t^* \cdot (T - t),
\]
where $I_t^* = \max_{(q,u)\,\text{unasked}} I_t(\theta; Y \mid \mathcal{H}_t; q, u)$ and $T$ is the maximum number of rounds. This condition has a natural interpretation in terms of the minimum number of rounds
required to close the entropy gap. Recall that for any question--user pair $(q, u)$, the
mutual information is
\[
  I_t(\theta;\, Y \mid \mathcal{H}_t;\, q, u)
  = \mathbb{H}(\pi_t) - \mathbb{H}(\theta \mid Y, \mathcal{H}_t;\, q, u),
\]
so each interaction round can reduce the entropy of $\pi_t$ by at most $I_t^*$, the
information gain of the most informative unasked pair. More precisely, suppose the
agent asks the sequence of maximally informative pairs at each remaining round. The
cumulative reduction in entropy over $k$ rounds satisfies
\[
  \mathbb{H}(\pi_t) - \mathbb{H}(\pi_{t+k})
  = \sum_{i=0}^{k-1} I_{t+i}(\theta;\, Y \mid \mathcal{H}_{t+i};\, q_i^*, u_i^*)
  \;\leq\; k\, I_t^*,
\]
where the inequality follows from the fact that $I_t^*$ upper bounds the per-round
information gain at time $t$. To reach the target entropy $\mathbb{H}_\alpha$ from the
current entropy $\mathbb{H}(\pi_t)$, the agent must therefore close a gap of
$\Delta_t = \max(0,\,\mathbb{H}(\pi_t) - \mathbb{H}_\alpha)$, which requires at least
\[
  k^* = \left\lceil \frac{\Delta_t}{I_t^*} \right\rceil
\]
rounds even under optimal pair selection. The EXPAND condition
$\Delta_t > \lambda\, I_t^*\,(T - t)$ implies that $k^* > \lambda\,(T - t)$, meaning the minimum number of rounds needed to reach the target entropy exceeds a
fraction $\lambda$ of the remaining budget $T - t$. When this condition holds, the
current state space $\Theta$ is insufficiently resolved relative to the remaining
interaction horizon, and expanding $\Theta$ with new candidate dimensions is
warranted. The threshold $\lambda \in (0,1)$ controls how conservatively the agent
triggers expansion: smaller values require the gap to be more severe before expansion
is triggered, while larger values cause earlier and more frequent expansions.

\paragraph{EXPAND procedure.} When triggered, BALAR:
\begin{enumerate}[topsep=2pt,itemsep=0pt]
  \item Generates a new dimension $\theta_{p+1}$ via an LLM call conditioned on the current conversation history and all existing dimensions.
  \item Elicits a prior $\pi^{(p+1)}$ for the new dimension, conditioned on the conversation.
  \item Expands the belief state under independence: $\pi_t^{\text{new}}(\theta, \theta_{p+1}) = \pi_t(\theta) \cdot \pi^{(p+1)}(\theta_{p+1})$.
  \item Recomputes likelihood tables for all existing questions over the new dimension.
  \item Generates up to $|\mathcal{Q}'|$ new clarifying questions targeting the new dimension $\theta_{p+1}$ and the $k$ existing dimensions with the highest marginal entropy, where $k$ is a configurable hyperparameter.
  \item Computes likelihood tables for the new questions over all dimensions.
\end{enumerate}

The Expand mechanism allows BALAR to refine its state representation, analogous to gradient descent on the manifold of possible intents. The \textsc{Expand} action momentarily
increases entropy by adding a new dimension before subsequent \textsc{Ask} rounds reduce it again. This interplay is analogous to exploration (expanding the hypothesis space) vs.\ exploitation (narrowing within the current space). 

\subsection{Convergence and Final Answer}
\label{subsec:final}

\paragraph{Convergence criterion.} BALAR uses two convergence conditions depending on whether the task exposes a discrete answer set $\mathcal{A}$.

\emph{Answer-probability convergence}. BALAR maintains \emph{answer-level likelihoods} $P(a \mid \theta)$ for each possible answer $a \in \mathcal{A}$, estimated via parallel LLM calls at initialization. These are structured like the question-level likelihood tables: for each dimension $j$, a table $P(a \mid \theta_j)$ is computed, and the joint answer likelihood factorizes analogously. The answer probability at round $t$ is:
\[
  \hat{p}_t(a) = \sum_{\theta \in \Theta} \pi_t(\theta)\, P(a \mid \theta).
\]
The loop terminates when $\max_{a \in \mathcal{A}} \hat{p}_t(a) \ge 1 - \alpha$.

\emph{Marginal-fraction convergence}. When the answer is a free-form explanation with no fixed answer set, BALAR instead declares convergence when a $\beta$-fraction of dimensions are individually concentrated:
\[
  \frac{1}{p}\sum_{j=1}^p \mathbf{1}\!\left[\max_{\theta_j \in \Theta_j} \pi_t^{(j)}(\theta_j) \ge 1-\alpha\right] \;\ge\; \beta,
\]
where $\pi_t^{(j)}$ is the marginal of $\pi_t$ over dimension $\theta_j$. The parameter $\beta \in (0,1]$ controls how many dimensions must converge before termination. $\beta = 1$ requires all dimensions to concentrate marginally. In both cases, the loop also terminates when the total round budget $T$ or the ASK-round budget $T_{\text{ask}}$ is exhausted.

\paragraph{Final answer.} The MAP state $\hat{\theta} = \arg\max_{\theta\in \Theta} \pi_T(\theta)$ is extracted. It is formatted as a structured disambiguation summary appended to the original prompt and conversation history $H_T$, and a final LLM call produces the answer. See \cref{fig:ex-final} for the running example.

\begin{figure}[H]
  \centering
  \includegraphics[width=\fscale]{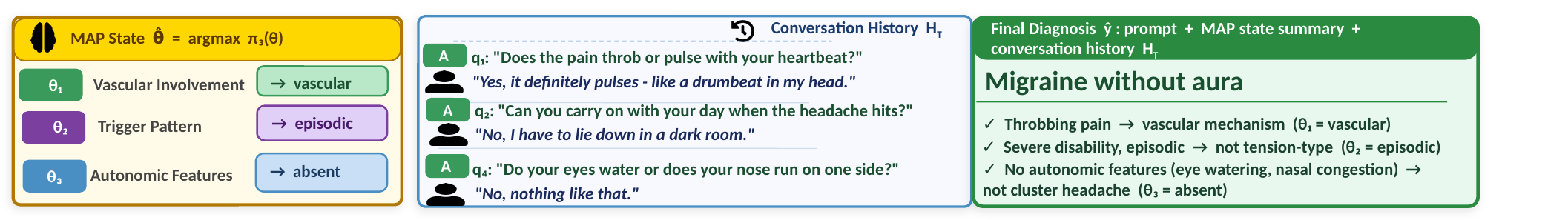}
  \caption{The MAP state $\hat{\theta} = (\text{vascular}, \text{episodic}, \text{absent})$, combined with the  conversation history
  $\mathcal{H}_T$, conditions a final LLM call that produces the diagnosis \emph{Migraine without aura}.}
  \label{fig:ex-final}
\end{figure}

The full algorithm is summarized in \cref{alg:BALAR}. Implementation details including atomic reasoning, parallelism and the verifier are discussed in Appendix~\ref{subsec:practical}. LLM call complexity is analyzed in Appendix~\ref{subsec:complexity}.

\renewcommand{\thealgocf}{}
\begin{algorithm}[htbp]
\SetAlgoLined
\DontPrintSemicolon
\small
\setlength{\algomargin}{0.6em}

\caption{Bayesian Agentic Loop for Active Reasoning (BALAR)}
\label{alg:BALAR}

\KwIn{$\mathbf{p}$ ambiguous prompt, $\mathbf{c}$ context, users $\mathcal{U}$, config $(\alpha,\beta,T,T_{\text{ask}},\lambda, \Phi, \mathcal{L})$}
\KwOut{Final answer $\hat y$}
$\{\theta_j,\Theta_j\}_{j=1}^p \gets \textsc{ProposeDimensions}(\mathbf{p},\mathbf{c})$

$\pi^{(j)} \gets \textsc{ElicitPrior}(\mathbf{p},\mathbf{c},\theta_j)$ for all $j$ (parallel)

$\mathcal Q \gets \textsc{GenerateQuestions}(\mathbf{p},\mathbf{c},\{\theta_j\})$

$L_{q,u,j} \gets \textsc{EstimateLikelihood}(\mathbf{p},\mathbf{c},q,u,\theta_j)$ for all $(q,u,j)$ (parallel)

$\pi_0 \gets \bigotimes_{j=1}^p \pi^{(j)}$,  
$K_{q,u} \gets \textsc{BuildStateLikelihood}(L_{q,u,\cdot})$

$t\gets1$, $n_{\text{asked}}\gets0$, $\mathrm{asked}\gets\emptyset$

\While{$t\le T$ \textbf{and} $n_{\text{asked}}<T_{\text{ask}}$}{

\If{$\mathcal A$ exists \textbf{and} $\max_a \hat p_t(a)\ge1-\alpha$}{break}

\If{$\frac1p\sum_{j=1}^p \mathbf1[\max_v\pi^{(j)}_{t-1}(v)\ge1-\alpha]\ge\beta$}{break}

$I_{q,u}\gets I_{t-1}(\theta;Y|\mathcal{H}_{t-1};q,u)$ for all $(q,u)\notin\mathrm{asked}$

\eIf{no unasked $(q,u)$}{
    \If{state cap reached}{break}
    $I_t^*\gets0$
}{
    $(q^*,u^*)\gets\arg\max I_{q,u}$,
    $I_t^*\gets I_{q^*,u^*}$
}

$\Delta_t\gets\max(0,\mathbb{H}(\pi_{t-1})-\mathbb{H}_\alpha)$

\uIf{no unasked pair \textbf{or} $\Delta_t>\lambda I_t^*(T-t)$}{

\If{state cap reached}{break}

$\theta_{p+1},\Theta_{p+1}\gets\textsc{NewDimension}(\mathbf p,\mathbf c,\mathcal{H}_{t-1})$

$\pi^{(p+1)}\gets\textsc{ElicitPrior}(\mathbf p,\mathbf c,\theta_{p+1},\mathcal{H}_{t-1})$

$\pi_{t-1}\gets\pi_{t-1}\otimes\pi^{(p+1)}$, $p\gets p+1$

$L_{q,u,p+1}\gets\textsc{EstimateLikelihood}(\cdot)$ for $q\in\mathcal Q,u\in\mathcal U$

$\mathcal Q_{\text{new}}\gets\textsc{GenerateExpandedQuestions}(\cdot)$

$L_{q,u,j}\gets\textsc{EstimateLikelihood}(\cdot)$ for $q\in\mathcal Q_{\text{new}},u\in\mathcal U,j\in[p]$

$\mathcal Q\gets\mathcal Q\cup\mathcal Q_{\text{new}}$

$K_{q,u}\gets\textsc{BuildStateLikelihood}(L_{q,u,\cdot})$
}
\Else{

$r_t\gets\textsc{GetUserAnswer}(u^*,q^*)$

$\hat\omega\gets\textsc{SoftMap}(r_t,q^*)$

$\hat L(\mathbf \theta)\gets\sum_{y\in\mathcal Y_{q^*}}\hat\omega_yK_{q^*,u^*}(\mathbf \theta,y)$

$\pi_t\gets\hat L\cdot\pi_{t-1}/Z$

$\mathrm{asked}\gets\mathrm{asked}\cup\{(q^*,u^*)\}$,  
$n_{\text{asked}}\gets n_{\text{asked}}+1$
}

$t\gets t+1$
}

$\hat\theta\gets\arg\max_{\theta}\pi_{t-1}(\theta)$

$\hat y\gets\textsc{FinalAnswer}(\mathbf p,\mathbf c,\mathcal{H}_{t-1},\hat\theta)$

\Return{$\hat y$}

\end{algorithm}

\section{Experimental Setup}
\label{sec:experiments}

\subsection{Datasets}
\label{subsec:datasets}

We evaluate on three structurally distinct benchmarks : \textbf{AR-Bench-DC} (detective reasoning), \textbf{AR-Bench-SP} (situation puzzles), and \textbf{iCraft-MD} (clinical diagnosis).

\paragraph{AR-Bench-DC \citep{arbench}.} 100 detective cases, each with scene metadata, victim description, and up to 5 suspect profiles. Each suspect holds private information (story, task, alibi). The agent must identify the true murderer by interrogating suspects. Scored by exact match.

\paragraph{AR-Bench-SP \citep{arbench}.} 100 situation puzzles, where the agent must reconstruct the hidden explanation of a puzzling scenario by asking yes/no questions to a user who knows the solution. The original AR-Bench evaluation measures the final answer using a character-level F1 similarity with the ground-truth explanation \citep{arbench}. However, this metric is insensitive to semantic structure and word order. For instance, a random permutation of the same characters can still achieve a perfect score. We therefore do not use the character-level F1 metric and instead evaluate solutions using non-strict semantic equivalence \citep{semantic} which we explain in Appendix~\ref{subsec:semantic_equivalence}.

\paragraph{iCraft-MD \citep{mediq}.} 140 patient cases from CRAFT-MD. Each case contains a clinical multiple-choice question, patient demographic information, a chief complaint, and a list of atomic clinical facts private to the user. Scored by exact match on the correct answer option.

For AR-Bench-DC, each user is a suspect with its own private info (story + task). For the other three datasets, there is a single user (the patient or puzzle narrator). The maximum number of ASK rounds $T_{\text{ask}}$ is a swept hyperparameter (see \cref{subsec:hypers}). The global round budget $T = 100$ is set large enough to be non-binding.

\subsection{Baselines}
\label{subsec:baselines}

We compare against dataset-specific interactive baselines implemented in our experimental pipeline.

\textbf{AR-Bench baselines.} For the detective reasoning and situation puzzle benchmarks (AR-Bench-DC and AR-Bench-SP), we evaluate four baselines: \textbf{Few-Shot (AR-Bench)} which is the best-performing method reported in \citet{arbench}, \textbf{UoT} \citep{uot}, \textbf{ToT} \citep{tot}, and \textbf{Proactive CoT} \citep{procot}. All baselines are run with a maximum of 25 interaction turns, using the best hyperparameter settings reported in \citet{arbench}.

\textbf{Medical dialogue baselines.} For the medical benchmark iCraft-MD, we evaluate two baselines: \textbf{Zero-Shot}, where an LLM iteratively generates clarifying questions from the conversation history and produces a final answer after the interaction budget is exhausted, and \textbf{MediQA Expert} \citep{mediq}, a five-module pipeline comprising initial assessment, abstention, question generation, information integration, and decision making. Both baselines are run with a maximum of 25 interaction rounds. For \textbf{MediQA Expert}, we adopt the best-performing configuration reported in \citet{mediq} (Scale + Rational Generation + Self-Consistency (3)).

\subsection{Models and User Simulator}
\label{subsec:models}

We evaluate five open-weight LLMs spanning a range of parameter scales and training paradigms. Specifically, we use \texttt{Qwen2.5-7B/14B/32B-Instruct} to evaluate scaling with model size, \texttt{QwQ-32B} to assess the effect of reasoning-tuned models, and \texttt{Llama-3.1-8B-Instruct} as an instruction-tuned model from a separate family to evaluate generality beyond the Qwen family. All models are served locally via vLLM using bfloat16 precision and bitsandbytes 4-bit quantization. For the Qwen-family models (\texttt{Qwen2.5-7B/14B/32B-Instruct}, \texttt{QwQ-32B}), we additionally enable YaRN rope scaling (factor $4\times$, original context length 32768) to support a maximum context length of 131072 tokens for long interaction histories. This is not used for \texttt{Llama-3.1-8B-Instruct}. The user simulator is fixed to \texttt{Qwen2.5-32B-Instruct} across all experiments. The user simulator is prompted with the user's private information and must respond to the agent's question in natural language.

\section{Results}
\label{sec:results}

\subsection{Main Results}
\label{subsec:main}

\cref{fig:main_results} reports outcome scores for all agent models against the baselines. For each BALAR entry, we report the performance associated with the best configuration over $(\alpha, \beta, T_{\text{ask}}, p, |\mathcal{Q}|)$. We report exact answer accuracy for AR-Bench-DC and iCraft-MD, and non-strict semantic equivalence for AR-Bench-SP \citep{semantic}. 

BALAR outperforms all baselines across all three benchmarks and both agent models,
with one exception: on AR-Bench-SP with \texttt{Llama-3.1-8B-Instruct}, BALAR
falls slightly behind ToT ($26.0\%$ vs.\ $31.0\%$) and UoT ($26.0\%$ vs.\ $29.0\%$)\footnote{We attribute this to the lack of reliability of smaller models under BALAR's structured prompting.}. Under \texttt{Qwen2.5-32B-Instruct}, BALAR achieves
relative gains of $\mathbf{14.6\%}$ on AR-Bench-DC, $\mathbf{38.5\%}$ on AR-Bench-SP, and $\mathbf{30.5\%}$ on iCraft-MD over the strongest
respective baselines. To contextualize this gain on iCraft-MD, we compare against an oracle that receives full patient information : BALAR with \texttt{Qwen2.5-32B-Instruct} without access to any private information achieves $73.6\%$, closing $88\%$ of the gap to the oracle upper bound $83.6\%$ (Appendix~\ref{subsec:oracale}). This suggests that BALAR's structured Bayesian formulation
provides more robust performance than methods relying on prompt-driven or
search-based heuristics, or task-tailored designs such as MediQ Expert.

\begin{figure}[!t]
    \centering
    \begin{subfigure}{\linewidth}
        \centering
        \includegraphics[width=0.85\linewidth]{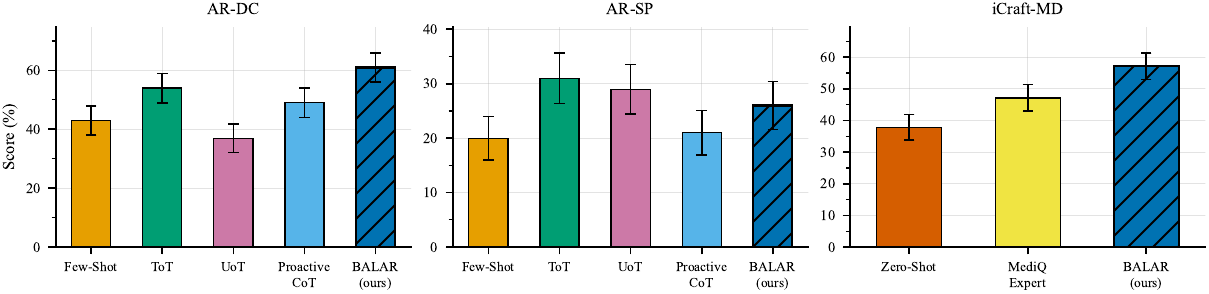}
        \caption{\texttt{Llama-3.1-8B-Instruct}}
        \label{fig:bars_llama}
    \end{subfigure}
    \vspace{1em}
    \begin{subfigure}{\linewidth}
        \centering
        \includegraphics[width=0.85\linewidth]{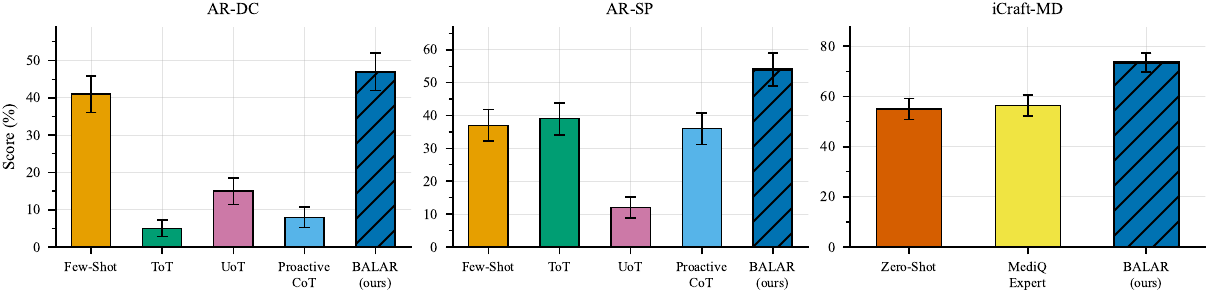}
        \caption{\texttt{Qwen2.5-32B-Instruct}}
        \label{fig:bars_qwen}
    \end{subfigure}
    \vspace{-0.2in}
    \caption{Main results (\%). Outcome is \emph{exact accuracy} for AR-Bench-DC, iCraft-MD, and \emph{non-strict semantic equivalence} for AR-Bench-SP. $T_{\text{ask}}=25$ and Standard Errors $= \sqrt{p(1-p)/n}$.}
    \label{fig:main_results}
\end{figure}

\subsection{Effect of Reasoning Mode}
\label{subsec:reas}
\cref{tab:reasoning} reports the effect of replacing \texttt{Qwen2.5-32B-Instruct}
with the reasoning-tuned \texttt{QwQ-32B} at $T_{\text{ask}}{=}10$.
Switching models yields relative gains of $10.9\%$ on AR-Bench-DC and $36.5\%$
on AR-Bench-SP, while performance remains unchanged on iCraft-MD suggesting
reasoning capacity matters more for deductive tasks than for clinical decision-making: structured hypothesis elimination benefits most from deeper reasoning, while clinical diagnosis appears less bottlenecked by reasoning depth than by information gathering.

\subsection{Scaling with Model Size}
\label{subsec:scaling}
To isolate the effect of model scale from hyperparameter choice, we fix the best hyperparameter configuration found for \texttt{Qwen2.5-32B-Instruct} and evaluate the \emph{same configuration} on \texttt{Qwen2.5-7B-Instruct} and \texttt{Qwen2.5-14B-Instruct}. \cref{tab:scaling} shows that BALAR scales consistently with model size on iCraft-MD: accuracy improves from $50.7\%$ (7B) to $62.1\%$ (14B) to $73.6\%$ (32B), suggesting that BALAR scales with model capacity rather than saturating at smaller scales.

\begin{table}[htbp]
  \centering
  \caption{Effect of reasoning mode (left) and model scale (right) on BALAR performance.}
  \begin{subtable}[t]{0.48\linewidth}
    \centering
    \caption{Effect of Reasoning Mode.}
    \label{tab:reasoning}
    \small
    \begin{tabular}{lcc}
    \toprule
    \textbf{Dataset} & \textbf{Qwen-32B} & \textbf{QwQ-32B} \\
    \midrule
    AR-DC & 46.0 ± 5.0 & 51.0 ± 5.0 \\
    AR-SP & 52.0 ± 5.0 & 71.0 ± 4.5 \\
    iCraft-MD & 71.4 ± 3.8 & 70.0 ± 3.9 \\
    \bottomrule
    \end{tabular}
  \end{subtable}
  \hfill
  \begin{subtable}[t]{0.48\linewidth}
    \centering
    \caption{Scaling analysis.}
    \label{tab:scaling}
    \small
    \begin{tabular}{lc}
    \toprule
    \textbf{Model} & \textbf{iCraft-MD} \\
    \midrule
    Qwen2.5-7B-Instruct & 50.7 ± 4.2 \\
    Qwen2.5-14B-Instruct & 62.1 ± 4.1 \\
    Qwen2.5-32B-Instruct & 73.6 ± 3.7 \\
    \bottomrule
    \end{tabular}
  \end{subtable}
\end{table}

\subsection{Ablation Studies} 
\label{subsec:ablation}

\cref{tab:ablation_grpA} isolates two key components of BALAR. For question selection, we compare : (i) random selection from the question bank, (ii) LLM-prompted selection (``which question should I ask next given the history?''), and (iii) MI maximization (ours). This isolates the value of the information-theoretic criterion from the rest of the framework. For dynamic expansion, we disable EXPAND and fix the state space and question bank to their initial values. Replacing MI maximization with random or LLM-prompted
selection drops accuracy by $23.3\%$ and $9.6\%$ respectively, confirming the
value of the information-theoretic criterion. 
\begin{table}[H]
  \centering
  \caption{Question-selection and expansion ablation on iCraft-MD / Qwen2.5-32B-Instruct. Config: $T_{\text{ask}}{=}10$, $\alpha{=}0.1$, $p{=}5$, $|\mathcal{Q}|{=}10$.}
  \label{tab:ablation_grpA}
  \begin{tabular}{lc}
  \toprule
  \textbf{Variant} & \textbf{Accuracy (\%)}  \\
  \midrule
  \textbf{MI (ours)} & 67.9 ± 3.9 \\
  Random sel. & 52.1 ± 4.2 \\
  LLM sel. & 61.4 ± 4.1 \\
  No expansion & 58.6 ± 4.2 \\
  \bottomrule
  \end{tabular}
\end{table}
Disabling \textsc{Expand} reduces
accuracy by $13.7\%$, highlighting that the initial state space can be insufficient and BALAR benefits from dynamic expansion. Together, these results confirm that the gains reported in \cref{subsec:main} are driven by the combination of principled question selection and adaptive state refinement, rather than by the Bayesian scaffolding alone. We additionally study sensitivity of the choice of prior mapping in Appendix~\ref{subsec:prior_sens}.

\subsection{Information Gain from Questions}
\label{subsec:info_gain}

We analyze how much uncertainty is removed by successive \textsc{Ask} actions. \cref{fig:combined} (bottom) plots the cumulative entropy reduction as a function of the number of questions asked under \texttt{Qwen2.5-32B-Instruct}. Let $\pi_{t_k}^{\mathrm{old}}$ denote the belief state immediately before the $k$-th \textsc{Ask} update, and $\pi_{t_k}^{\mathrm{new}}$ the posterior after incorporating the observed response. 
The entropy change at round $k$ is
$
    \Delta_k = H(\pi_{t_k}^{\mathrm{new}}) - H(\pi_{t_k}^{\mathrm{old}}).
$ We report the cumulative information gain
$
   -\sum_{i=1}^{k} \Delta_i,
$
indexed by the number of questions asked $k$, which measures the total uncertainty removed by questions alone. \cref{fig:combined} (bottom) shows that it grows
consistently with questions asked across all datasets, confirming that
MI-based selection extracts meaningful uncertainty reduction at every round.
AR-Bench-SP plateaus around $k{=}20$, suggesting diminishing returns once
key dimensions are resolved, while AR-Bench-DC and iCraft-MD remain
approximately linear throughout.

\cref{fig:combined} (top) shows BALAR accuracy as a function of the number of interaction rounds $K$ under \texttt{Qwen2.5-32B-Instruct}, obtained by truncating the dialogue at round $K \in \{5,10,15,20,25\}$ and recomputing the final answer from the resulting partial history. BALAR improves monotonically with more rounds on AR-Bench-SP and iCraft-MD,
demonstrating that additional questions consistently refine the belief toward the
correct answer. On AR-Bench-DC, performance is less monotone, likely due to the
multi-user setting. The distribution of \textsc{Ask} and \textsc{Expand} rounds per instance is analyzed in Appendix~\ref{subsec:round_distribution}.

\begin{figure*}[htbp]
    \centering
    \includegraphics[width=0.75\textwidth]{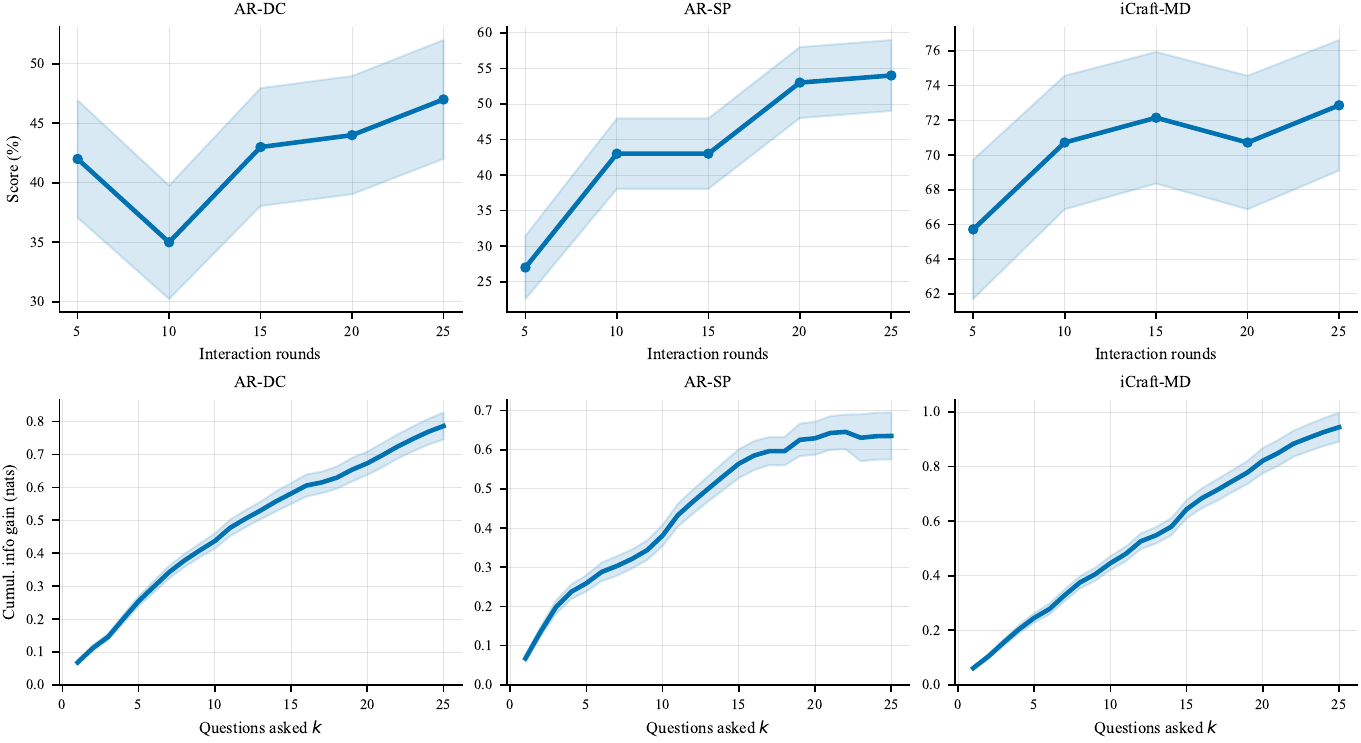}
    \caption{
    \textbf{Top:} BALAR score (\%) as a function of interaction rounds $K$, 
    obtained by truncating dialogue at $K \in \{5,10,15,20,25\}$. 
    \textbf{Bottom:} Cumulative entropy reduction induced by successive 
    \textsc{Ask} actions. Solid lines show the mean across runs, shaded 
    regions denote standard errors. All results under 
    \texttt{Qwen2.5-32B-Instruct}. Fixed configs: DC ($T_{\text{ask}}{=}25$, 
    $\alpha{=}0.3$, $p{=}5$, $|\mathcal{Q}|{=}10$), SP ($T_{\text{ask}}{=}25$, 
    $\alpha{=}0.3$, $\beta{=}0.5$, $p{=}5$, $|\mathcal{Q}|{=}10$), iCraft-MD 
    ($T_{\text{ask}}{=}25$, $\alpha{=}0.3$ (top) / $0.1$ (bottom), $p{=}1$, $|\mathcal{Q}|{=}2$).}
    \label{fig:combined}
\end{figure*}

\section{Discussion}
\label{sec:discussion}

\paragraph{Sleep-time compute.} BALAR's initialization is decoupled from the interaction phase : dimensions, priors, questions, and likelihood tables are computed before the first user message. Following the \emph{sleep-time compute} paradigm \citep{sleep}, this pre-interaction computation means the per-turn latency is dominated only by a single LLM call for the user simulator and the soft-map inference, making the interaction loop fast.

\paragraph{Hyperparameter selection.}
BALAR exposes several hyperparameters: the confidence threshold $\alpha$,
the marginal convergence fraction $\beta$, the ASK budget $T_{\text{ask}}$,
the initial number of dimensions $p$ and questions $|\mathcal{Q}|$, and the
expand multiplier $\lambda$. The appropriate values of these parameters are
inherently task-dependent. When labeled interaction data is available, for instance, a held-out set of solved cases with known ground-truth answers, a standard grid search or random sweep over these hyperparameters can be
used to select the configuration that maximizes validation accuracy. In this supervised regime, $\alpha$
controls the tradeoff between committing early (low
$\alpha$, fewer questions asked) and gathering more evidence before answering
(high $\alpha$), and can be tuned directly against task accuracy.

When no labeled data is available, the threshold $\alpha$ should reflect the acceptable residual uncertainty: for high-stakes decisions (e.g.\ medical
diagnosis), a small $\alpha$ (e.g.\ $0.1$) is appropriate, whereas for lower-stakes tasks a larger value (e.g.\ $0.3$) reduces unnecessary
questioning. The budget $T_{\text{ask}}$ should be set to the maximum number of questions a user is willing to answer, which is typically a product constraint rather than a modeling choice. The initial state-space richness $(p, |\mathcal{Q}|)$ trades off initialization cost against the expressiveness of the prior representation. Starting with $(p{=}1, |\mathcal{Q}|{=}2)$ and relying on \textsc{Expand} is a safe default when the structure of ambiguity is unknown. The expand multiplier $\lambda$ controls how aggressively BALAR triggers expansion: $\lambda{=}1$ is a natural default that expands only when the entropy gap cannot be closed within the remaining budget under optimal questioning.

\paragraph{Limitations.} The independence assumption across dimensions introduces approximation error, and the functional form $K_{q,u}(\theta, y) \propto \prod_j L_{q,u,\theta_j}(y \mid \theta_j)$ is a modeling choice. Likelihood tables estimated by label-to-probability maps may be miscalibrated. The method is most beneficial when the initial context is genuinely ambiguous. For well-specified prompts, the overhead of initialization is unnecessary.

\paragraph{Future work.}
Promising extensions include: (1) online calibration of likelihood tables using observed answers, (2) supervised fine-tuning on BALAR trajectories from cases with known ground truth as in STaR \citep{star}, and (3) exploring methods to better calibrate the model’s posterior estimates.

\section{Conclusion}
\label{sec:conclusion}

We presented BALAR, a training-free Bayesian outer loop that equips LLM agents with a principled mechanism to detect, track, and resolve prompt ambiguity through targeted multi-turn dialogue. The framework maintains a structured factored belief over a space of disambiguating dimensions, selects questions by mutual information maximization, and dynamically expands the state representation when the existing belief is insufficient. BALAR is task-agnostic, compatible with any instruction-following LLM, and its sleep-time initialization keeps per-turn interaction latency low. More broadly, it can be seen as an inference-time belief-state controller for LLM agents, providing a principled foundation for building more reliable and collaborative AI systems.

\section{Acknowledgement}
We especially thank Michael Y.~Li for helpful comments, and the Dynamode group meeting for peer-review feedback. This work was supported in part by ONR Grant N00014-22-1-2110, NSF Grant 2205084, and the Stanford Institute for Human-Centered Artificial Intelligence (HAI). EBF is a Biohub, San Francisco, Investigator.

\newpage
\bibliographystyle{abbrvnat}
\bibliography{references}
\newpage
\appendix

\section{Practical Considerations}
\label{subsec:practical}

\paragraph{Atomic reasoning.}
Most LLM calls in BALAR are kept as close to atomic as possible: each call typically handles a single dimension, question, user, or value. This keeps prompts small and helps avoid the degradation that occurs when too many likelihood judgments are packed into one context window. In particular, if too many likelihood entries are evaluated jointly, the model tends to return nearly uniform likelihoods, which in turn drives mutual information toward zero throughout the loop.

There is, however, one deliberate exception: likelihood estimation is performed at the level of a \emph{single} $(\text{user}, \text{question}, \text{dimension})$ triple, producing the full table over that dimension's values and the question's answer choices in one call. In principle, one could push this further and score each individual cell atomically, but this would be substantially more expensive in the number of parallel LLM calls. We therefore adopt this as a practical trade-off between granularity and computational cost.

\paragraph{Verifier.}
BALAR uses two layers of output checking. First, every LLM response is required to conform to a structured output schema (a Pydantic model). If the response does not parse into the required schema, the call is retried, up to a fixed maximum number of attempts. This handles formatting and structural failures such as malformed JSON, missing required fields, or invalid types.

An optional verifier LLM provides a second layer of checking. Unlike schema validation, which is purely syntactic and structural, the verifier is used to assess whether the response is logically coherent and semantically valid for the task. For example, it can flag issues such as inconsistent reasoning, or other task-level errors that still satisfy the formal schema. When the verifier rejects an output, it returns corrective feedback, and the model is queried again with that feedback appended to the prompt.

A lightweight version of this verification principle which is active in all current experiments is that 
every prompt explicitly asks the model to generate a short \texttt{reason} before 
committing to its output label or value. This chain-of-thought prior to the structured 
output encourages the model to surface its reasoning explicitly, making logical 
inconsistencies easier to detect and 
reducing labeling errors even without a full verification pass.

\paragraph{Parallelism.}
BALAR is implemented using asynchronous execution. Most LLM calls are written as \texttt{async} functions and are dispatched concurrently using \texttt{asyncio.gather}, with a semaphore controlling the maximum number of simultaneous API requests. This design is used not only during initialization (Steps~1--4), but also throughout the interaction loop for tasks such as likelihood-table construction, prior estimation, and expansion-related computations. As a result, large batches of independent LLM calls can run in parallel, so the dominant cost becomes the latency of the slowest request rather than the sum of all calls. This allows the system to maintain interactive latency even when many likelihood evaluations are required.

\paragraph{State space size.}
The joint state space $|\Theta| = \prod_j n_j$ grows exponentially with the number of dimensions. To keep inference tractable, BALAR enforces a configurable upper bound on the total number of states. Before triggering an \texttt{EXPAND} step, the algorithm checks whether adding another dimension would keep the total state space within this cap. If the expansion would exceed the limit, no further dimensions are introduced.

\section{LLM Calls Complexity}
\label{subsec:complexity}

We characterize the number of LLM calls made by BALAR and how parallelism
is exploited to keep wall-clock latency low.
Let $p$ denote the number of dimensions, $n$ the maximum number of values per
dimension, $|\mathcal{Q}|$ the number of questions, $|\mathcal{U}|$ the number of users,
$|\mathcal{A}|$ the number of possible answers (when defined), and
$|\mathcal{Q}'|$ the number of new questions generated per \textsc{Expand}
step.

\paragraph{Sleep-time initialization (Steps 1--4).}
Step~1 issues a single LLM call to propose all $p$ dimensions jointly.
Step~2 elicits one call per dimension value to assign a prior label, for
a total of $\sum_j n_j \leq p \cdot n$ calls. These are dispatched
concurrently via \texttt{asyncio.gather}.
Step~3 issues a single call to generate all $|\mathcal{Q}|$ initial
questions.
Step~4 issues one call per $(q, u, \theta_j)$ triple to fill the
likelihood table for that triple, for a total of
$|\mathcal{Q}| \cdot |\mathcal{U}| \cdot p$ calls, all dispatched concurrently.
When possible answers $\mathcal{A}$ are available, an additional
$p$ parallel calls estimate the answer-level likelihood tables
(one per dimension).
The total number of initialization calls is therefore:
\[
  C_{\text{init}} = 1 + p\cdot n + 1 + |\mathcal{Q}|\cdot |\mathcal{U}| \cdot p + p
  \;=\; \mathcal{O}(|\mathcal{Q}| \cdot |\mathcal{U}| \cdot p),
\]
all parallelizable except for the sequential dependency between
Steps~1--2--3--4 (each step uses outputs of the previous one).

\paragraph{ASK step.}
Each \textsc{Ask} round issues exactly 2 sequential LLM calls: one to the
user simulator to obtain the natural language answer $r_t$, and one to
soft-map $r_t$ to a probability vector $\hat{\omega}$ over the discrete
choices. The MI computation over all candidate $(q,u)$ pairs is a single
batched tensor operation (no LLM call). Over $T_{\text{ask}}$ rounds, the
total is $2 T_{\text{ask}}$ sequential calls.

\paragraph{EXPAND step.}
Each \textsc{Expand} round issues the following calls:
\begin{enumerate}[topsep=2pt,itemsep=1pt,leftmargin=*]
  \item One call to propose the new dimension $\theta_{p+1}$.
  \item $n_{p+1}$ parallel calls to elicit its prior (one per value).
  \item One call (optional) for the answer-level likelihood of the new
  dimension (when $\mathcal{A}$ is defined).
  \item $|\mathcal{Q}| \cdot |\mathcal{U}|$ parallel calls to compute likelihood tables
  for all old questions over the new dimension.
  \item One call to generate up to $|\mathcal{Q}'|$ new questions.
  \item $|\mathcal{Q}'| \cdot |\mathcal{U}| \cdot (p+1)$ parallel calls to compute
  likelihood tables for the new questions over all dimensions.
\end{enumerate}
The dominant cost per \textsc{Expand} round is
$\mathcal{O}((|\mathcal{Q}| + |\mathcal{Q}'|(p+1)) \cdot |\mathcal{U}|)$ LLM calls,
all within each group executed concurrently.

\paragraph{Final answer.}
A single LLM call produces the final answer conditioned on the MAP state
and conversation history.

\section{Near-Optimality of Greedy MI Maximization}
\label{subsec:theory}

We prove that BALAR's greedy question-selection
recovers at least a $(1-1/e)$ fraction of the information gain of the best adaptive policy with a fixed pair space $\mathcal{Q} \times \mathcal{U}$. This result can also be viewed as a special case of \citet{submodularity}. In particular, it follows from adaptive monotonicity (Definition~2) and adaptive submodularity (Definition~3) together with Theorem~5 of \citet{submodularity}.

\paragraph{Setup.}
Let $\theta \in \Theta$ be the latent state with prior $\pi_0$.
Each \emph{(question, user) pair} $(q,u) \in \mathcal{Q} \times \mathcal{U}$
yields an answer $Y_{q,u} \in \mathcal{Y}_q$ when queried.
A history $\mathcal{H}_t = \{(q_i, u_i, Y_{q_i,u_i})\}_{i=1}^t$ records the
pairs asked and answers received up to round $t$. We write
$(q,u) \notin \mathcal{H}_t$ to mean the pair has not yet been asked.
We make the following assumption throughout.

\begin{assumption}[Conditional independence]\label{ass:ci}
The answers $\{Y_{q,u}\}_{(q,u) \in \mathcal{Q} \times \mathcal{U}}$
are conditionally independent given $\theta$.
\end{assumption}

The \emph{greedy policy} $\pi^g$ selects at each round
\[
  (q_t^*, u_t^*) \;\in\;
  \arg\max_{(q,u)\,\notin\,\mathcal{H}_t}\;
  I_t(\theta;\, Y_{q,u} \mid \mathcal{H}_t;\, q, u),
\]
which is exactly BALAR's \textsc{Ask} selection rule of \cref{subsec:mi}.
The \emph{optimal adaptive policy} $\pi^*$ is any $k$-budgeted policy
maximising the expected total information gain.
Define
\[
  G_t \;:=\; \mathbb{E}\!\left[
    I\!\left(\theta;\, \{Y_{q_i,u_i}\}_{i=1}^{t}\right)
  \right],
  \qquad
  G^* \;:=\; \mathbb{E}\!\left[
    I\!\left(\theta;\, \{Y_{q,u}\}_{(q,u)\in\mathcal{H}_k^{\pi^*}}\right)
  \right],
\]
where the expectations are over the randomness of the respective policies.

\begin{theorem}\label{thm:greedy}
Under Assumption~\ref{ass:ci}, BALAR's greedy MI-maximising $(q,u)$-selection
satisfies
\[
  G_k \;\geq\; \left(1 - \tfrac{1}{e}\right) G^*,
\]
recovering at least a $(1-1/e) \approx 63\%$ fraction of the cumulative
information gain of the optimal $k$-budgeted adaptive policy.
\end{theorem}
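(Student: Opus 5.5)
The plan is to cast BALAR's \textsc{Ask} selection as an instance of adaptive stochastic maximization and invoke Theorem~5 of \citet{submodularity}, as the remark preceding the theorem suggests. Take ground set $E = \mathcal{Q}\times\mathcal{U}$. Take the realization $\Phi$ to be the full vector of answers $(Y_{q,u})_{(q,u)\in E}$, jointly distributed with $\theta$ through $\pi_0$ and the likelihoods $K_{q,u}$. For a set $A\subseteq E$ of asked pairs and a realization $\phi$, define the utility as the realized entropy reduction
\[
f(A,\phi) \;=\; \mathbb{H}(\pi_0) - \mathbb{H}\bigl(\pi_0(\cdot \mid \phi_A)\bigr),
\]
where $\phi_A$ is the restriction of $\phi$ to $A$. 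With this choice, $\mathbb{E}[f(A^{\pi}_k,\Phi)]$ for a policy $\pi$ is exactly the expected mutual information $G_k$ (resp.\ $G^*$), since expected posterior entropy equals conditional entropy. The conditional expected marginal gain $\Delta(e\mid\psi)$ of adding pair $e$ at partial realization $\psi$ is then $I_t(\theta;Y_e\mid\psi)$, the quantity BALAR maximizes. So the greedy policy of the theorem is precisely the adaptive greedy policy of \citet{submodularity}.

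The steps, in order, are as follows.

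\textbf{Step 1 (representation).} Verify that $\Delta(e\mid\psi)=\mathbb{H}(\pi_\psi)-\mathbb{E}_{Y_e\mid\psi}\bigl[\mathbb{H}(\pi_{\psi\cup\{(e,Y_e)\}})\bigr]$. This is a direct computation using the predictive distribution $p_t^{q,u}$ and the Bayes update of \cref{subsec:mi}.

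\textbf{Step 2 (adaptive monotonicity).} Show $\Delta(e\mid\psi)\ge 0$. This is the nonnegativity of conditional mutual information given the event $\psi$ ("conditioning reduces entropy on average").

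\textbf{Step 3 (adaptive submodularity).} Show $\Delta(e\mid\psi)\ge\Delta(e\mid\psi')$ whenever $\psi\subseteq\psi'$. Here Assumption~\ref{ass:ci} is used. Given $\theta$, $Y_e$ is independent of the answers in $\psi'\setminus\psi$, so
\[
I(\theta;Y_e\mid\psi)-\mathbb{E}\bigl[I(\theta;Y_e\mid\psi')\bigr]
\;=\; I\bigl(Y_e; Y_{\psi'\setminus\psi}\mid\psi\bigr)\;\ge\;0 .
\]
This follows from the chain rule together with $I(Y_e;Y_{\psi'\setminus\psi}\mid\theta,\psi)=0$.

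\textbf{Step 4 (conclusion).} With monotonicity and submodularity in hand, apply Theorem~5 of \citet{submodularity} with budget $k$. It gives $\mathbb{E}[f(A^{g}_k,\Phi)]\ge(1-1/e)\,\mathbb{E}[f(A^{*}_k,\Phi)]$, i.e.\ $G_k\ge(1-1/e)G^*$.

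Step~3 is the main obstacle. The identity above establishes diminishing returns only \emph{in expectation over} the extra answers $Y_{\psi'\setminus\psi}$. Adaptive submodularity as defined in \citet{submodularity} asks for the inequality for \emph{every} consistent extension $\psi'$. It is known that realized entropy reduction can fail this pointwise, because a surprising answer can raise posterior uncertainty and make a later question more informative.

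My first attempt is to check whether the proof of Theorem~5 actually needs only the averaged inequality. In that proof, submodularity is used to bound the gain of running the optimal policy after the greedy one, and that gain is averaged over the greedy policy's realizations. If so, the conditional-independence identity suffices, and I would re-run that telescoping argument directly: the step-$i$ greedy gain is at least $(G^*-G_{i})/k$, which gives $G^*-G_k\le(1-1/k)^k G^*\le G^*/e$.

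If the averaged version does not suffice, I fall back to a weaker claim. One option is the non-adaptive comparison via submodularity of $A\mapsto I(\theta;Y_A)$, which does hold under Assumption~\ref{ass:ci}. The other is to replace the entropy utility with an adaptive-submodular surrogate, such as hypothesis-pruning or equivalence-class edge cutting, and state the resulting bound for that surrogate.
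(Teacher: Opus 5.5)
\textbf{There is a genuine gap, and it cannot be closed, because under Assumption~\ref{ass:ci} alone the theorem is false.} Your diagnosis of Step~3 is right, and your Step~3 identity is correct. But it controls $\mathbb{E}_{Y_A}[I(\theta;Y_e\mid\psi,Y_A)]$ only for a pair $e$ and an extension set $A$ fixed in advance. In the proof of Theorem~5, and in the paper's bound~(\ref{eq:chain}), the $j$-th pair asked by $\pi^*$ is a function of $\pi^*$'s earlier answers. An adaptive comparator can therefore ask $e$ exactly on the branches where its informativeness has jumped, so your ``first attempt'' fails.

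Here is a counterexample. Let $\theta=(Z,W_1,\dots,W_n)$ have independent uniform coordinates, with $Z\in\{1,\dots,n\}$ and $W_i\in\{1,\dots,2^{n^2}\}$. Pair $a$ returns $Z$. Pair $b_i$ returns $W_i$ if $Z=i$, and an independent uniform symbol otherwise. Assumption~\ref{ass:ci} holds.
\begin{itemize}
\item In bits, $I(\theta;Y_a)=\log_2 n$ and $I(\theta;Y_{b_i})=n^2-(1-\tfrac1n)n^2=n$.
\item No answer to a $b_i$ moves the posterior of $Z$, since each symbol has probability $2^{-n^2}$ under every value of $Z$. So these values persist after one round.
\item Greedy with $k=2$ therefore asks two $b$'s and gets $G_2=2n$.
\item The adaptive policy ``ask $a$, then $b_Z$'' gains $\log_2 n+n^2$.
\end{itemize}
For $n=4$ this gives $8<(1-1/e)\cdot 18\approx 11.4$, and the ratio tends to $0$ as $n$ grows.

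The same instance refutes Lemma~\ref{lem:dr} for realized histories, since $I(\theta;Y_{b_z}\mid Y_a=z)=n^2>n=I(\theta;Y_{b_z})$. The lemma's proof (``conditioning reduces entropy'') is valid only on average, and (\ref{eq:chain}) then applies it to adaptively chosen pairs. So the paper's proof is exactly your first-attempt route, and it breaks exactly where you suspected.

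What survives is your first fallback, in a slightly stronger form than non-adaptive greedy: BALAR's adaptive greedy satisfies $G_k\ge(1-1/e)\max_{|S|\le k}I(\theta;Y_S)$. The telescoping goes through against a fixed set $S$, for the following reasons.
\begin{itemize}
\item For every realized greedy history $\tau$, Assumption~\ref{ass:ci} gives $\mathbb{H}(Y_S\mid\theta,\tau)=\sum_{e\in S}\mathbb{H}(Y_e\mid\theta,\tau)$. Pairs of $S$ already in $\tau$ contribute zero.
\item Together with subadditivity of $\mathbb{H}(Y_S\mid\tau)$, this gives pointwise $I(\theta;Y_S\mid\tau)\le\sum_{e\in S}I(\theta;Y_e\mid\tau)\le k\max_{e}I(\theta;Y_e\mid\tau)$.
\item Let $T_t$ be the greedy transcript. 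Monotonicity and the chain rule give $I(\theta;Y_S)\le G_t+\mathbb{E}[I(\theta;Y_S\mid T_t)]$.
\item Combining these yields $G^{\mathrm{na}}-G_t\le k(G_{t+1}-G_t)$, where $G^{\mathrm{na}}=\max_{|S|\le k}I(\theta;Y_S)$, and hence the $(1-1/e)$ bound.
\end{itemize}
Any guarantee against adaptive policies needs additional structure, or a different objective such as your edge-cutting surrogate. The theorem should be restated accordingly.
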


\paragraph{Proof of Theorem~\ref{thm:greedy}.} We first show that the MI of any fixed pair is non-increasing as the history grows.

\begin{lemma}\label{lem:dr}
Under Assumption~\ref{ass:ci}, for any $\mathcal{H}_t \subseteq \mathcal{H}_{t^{'}}$
and any $(q,u) \notin \mathcal{H}_{t^{'}}$,
\[
  I(\theta;\, Y_{q,u} \mid \mathcal{H}_{t^{'}})
  \;\leq\;
  I(\theta;\, Y_{q,u} \mid \mathcal{H}_t).
\]
\end{lemma}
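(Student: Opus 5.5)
The plan is to rewrite both mutual informations in terms of the entropy of the answer $Y_{q,u}$, and then use the fact that conditioning reduces entropy. Write $\mathcal{H}_t$ for the answers already in the smaller history. Write $\mathcal{H}_{t'} = \mathcal{H}_t \cup \mathcal{D}$, where $\mathcal{D}=\{Y_{q_i,u_i}\}_{i=t+1}^{t'}$ collects the additional answers. By the symmetric form of mutual information,
\[
I(\theta; Y_{q,u}\mid \mathcal{H}) = \mathbb{H}(Y_{q,u}\mid \mathcal{H}) - \mathbb{H}(Y_{q,u}\mid \theta,\mathcal{H})
\]
for any history $\mathcal{H}$. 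This is the same quantity as the $\mathbb{H}(\pi_t)-\mathbb{H}(\theta\mid Y,\mathcal{H}_t)$ form of \cref{subsec:mi}, by the chain rule.

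First I will treat the noise term. Since $(q,u)\notin\mathcal{H}_{t'}$, Assumption~\ref{ass:ci} says $Y_{q,u}$ is conditionally independent of every answer in $\mathcal{H}_{t'}$ given $\theta$. Hence
\[
\mathbb{H}(Y_{q,u}\mid\theta,\mathcal{H}_{t'}) = \mathbb{H}(Y_{q,u}\mid\theta) = \mathbb{H}(Y_{q,u}\mid\theta,\mathcal{H}_t),
\]
so the subtracted term is the same for both histories. In the model this is explicit: the conditional law of $Y_{q,u}$ given $\theta$ is $K_{q,u}(\theta,\cdot)$ regardless of the history. Second, for the predictive term I will use
\[
\mathbb{H}(Y_{q,u}\mid \mathcal{H}_t,\mathcal{D}) \le \mathbb{H}(Y_{q,u}\mid\mathcal{H}_t),
\]
that is, conditioning on more variables cannot increase entropy. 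Subtracting the two displays gives the claim.

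The main obstacle is making precise what "conditioning on $\mathcal{H}_{t'}$" means. The inequality "conditioning reduces entropy" holds only on average over the extra answers $\mathcal{D}$. For a fixed realization of $\mathcal{D}$, a surprising answer can increase $\mathbb{H}(p^{q,u}_{t'})$, so a pointwise statement would be false. I will therefore state and prove the lemma with the answers in $\mathcal{H}_{t'}\setminus\mathcal{H}_t$ treated as random variables: the realization of $\mathcal{H}_t$ is fixed, and the expectation is taken over the predictive distribution of the additional answers given $\mathcal{H}_t$. Equivalently, the claim is
\[
\mathbb{E}_{\mathcal{D}\mid\mathcal{H}_t}\bigl[I(\theta;Y_{q,u}\mid\mathcal{H}_t,\mathcal{D})\bigr]\le I(\theta;Y_{q,u}\mid\mathcal{H}_t).
\]
This expected form is exactly the adaptive-submodularity condition (Definition~3 of \citet{submodularity}) needed for Theorem~\ref{thm:greedy}.

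To handle the adaptively chosen pairs in $\mathcal{D}$, I will add them one at a time. Each step adds a single answer, which is a deterministic function of the current realized history under the given policy, and the tower property then chains the one-step inequalities. I will also record adaptive monotonicity, $I(\theta;Y_{q,u}\mid\mathcal{H})\ge 0$, which is immediate from nonnegativity of mutual information; the cited Theorem~5 then yields Theorem~\ref{thm:greedy}.
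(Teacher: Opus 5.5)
Your core argument is the paper's argument. You decompose $I(\theta;Y_{q,u}\mid\mathcal{H})=\mathbb{H}(Y_{q,u}\mid\mathcal{H})-\mathbb{H}(Y_{q,u}\mid\theta,\mathcal{H})$, use Assumption~\ref{ass:ci} to freeze the noise term, and finish with ``conditioning reduces entropy.''

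Your caveat about realized histories is correct, and it exposes a real weakness in the paper's write-up. The paper's lemma is about realized histories, which are what BALAR's greedy rule acts on, and read that way the lemma fails. For example, take binary $\theta$ with $\pi_0(1)=0.9$, let $Y_a=\theta$, and let $Y_b$ equal $\theta$ flipped with probability $0.2$. Observing $Y_b=0$ moves the posterior to about $0.69$. This raises $I(\theta;Y_a\mid\cdot)=\mathbb{H}(\theta\mid\cdot)$ from about $0.47$ to about $0.89$ bits.

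Your averaged version fixes $h_t$ and integrates out the new answers, adding adaptively chosen pairs one at a time via the tower property. That is exactly what this argument proves, and the adaptive extension is a genuine addition. One precision is needed. With $h_t$ fixed, every entropy must be taken under $\mathbb{P}(\cdot\mid h_t)$. The noise term is then $\sum_{\vartheta}\pi_{h_t}(\vartheta)\,\mathbb{H}(K_{q,u}(\vartheta,\cdot))$, weighted by the posterior $\pi_{h_t}$, not by the prior. It matches the corresponding term for $(h_t,\mathcal{D})$ only after averaging over $\mathcal{D}$, by the martingale property of the posterior. It does not match realization by realization.

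What does not hold is your last paragraph. Definition~3 of \citet{submodularity} is not an averaged condition. Adaptive submodularity requires $\Delta(e\mid\psi)\ge\Delta(e\mid\psi')$ for \emph{every} pair of partial realizations $\psi\subseteq\psi'$. For this objective $\Delta(e\mid\psi)=I(\theta;Y_e\mid\psi)$, so the condition is precisely the pointwise inequality you just argued can fail. Your expected form is strictly weaker than this.

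The expected form also does not suffice for the proof of Theorem~\ref{thm:greedy} directly. There, the pairs asked by $\pi^*$ are chosen as functions of the realized answers. You therefore need each term $I(\theta;Y_{e_i}\mid h_t,\psi_{i-1})$ bounded for the realized $\psi_{i-1}$, and $\mathbb{E}[\max_e\,\cdot\,]$ can exceed $\max_e\mathbb{E}[\,\cdot\,]$.

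So you have a correct proof of a correct averaged lemma. The claim that adaptive monotonicity plus Theorem~5 then yields Theorem~\ref{thm:greedy} is unsupported. In fact, the pointwise failure you identified shows that this information-gain objective is not adaptive submodular.
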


\begin{proof}
Write:
\[
  I(\theta;\, Y_{q,u} \mid \mathcal{H}_t)
  \;=\; \mathbb{H}(Y_{q,u} \mid \mathcal{H}_t)
        - \mathbb{H}(Y_{q,u} \mid \theta,\, \mathcal{H}_t).
\]
By Assumption~\ref{ass:ci}, $Y_{q,u} \perp\!\!\!\perp \mathcal{H}_t \mid \theta$, so
\[
  \mathbb{H}(Y_{q,u} \mid \theta,\, \mathcal{H}_t) \;=\; \mathbb{H}(Y_{q,u} \mid \theta),
\]
and therefore
\begin{equation}
  I(\theta;\, Y_{q,u} \mid \mathcal{H}_t)
  \;=\; \mathbb{H}(Y_{q,u} \mid \mathcal{H}_t) - \mathbb{H}(Y_{q,u} \mid \theta).
  \label{eq:mi-expand}
\end{equation}
Since $\mathcal{H}_t \subseteq \mathcal{H}_{t^{'}}$, conditioning on
$\mathcal{H}_{t^{'}}$ provides at least as much information as conditioning
on $\mathcal{H}_t$, so $\mathbb{H}(Y_{q,u} \mid \mathcal{H}_{t^{'}}) \leq
\mathbb{H}(Y_{q,u} \mid \mathcal{H}_t)$.
Applying \eqref{eq:mi-expand} to both sides gives
the result.
\end{proof}

Let $\mathcal{H}_k^{\pi^*}(\mathcal{H}_t^g)$ denote the (random) set
of at most $k$ pairs that $\pi^*$ would ask if started from the greedy
history $\mathcal{H}_t^g$.
By the chain rule for mutual information and Assumption~\ref{ass:ci},
\begin{equation}
  I\!\left(\theta;\,
    \{Y_{q,u}\}_{(q,u)\in\mathcal{H}_k^{\pi^*}(\mathcal{H}_t^g)}
    \,\Big|\, \mathcal{H}_t^g\right)
  \;=\;
  \sum_{(q,u)\,\in\,\mathcal{H}_k^{\pi^*}(\mathcal{H}_t^g)}
    I(\theta;\, Y_{q,u} \mid \mathcal{H}_t^g,\,
      \{Y_{q',u'}\}_{(q',u') \prec (q,u)})
  \label{eq:chain-1}
\end{equation}
so 
\begin{equation}
  I\!\left(\theta;\,
    \{Y_{q,u}\}_{(q,u)\in\mathcal{H}_k^{\pi^*}(\mathcal{H}_t^g)}
    \,\Big|\, \mathcal{H}_t^g\right)
  \;\leq\;
  \sum_{(q,u)\,\in\,\mathcal{H}_k^{\pi^*}(\mathcal{H}_t^g)}
    I(\theta;\, Y_{q,u} \mid \mathcal{H}_t^g),
  \label{eq:chain}
\end{equation}
where $(q',u') \prec (q,u)$ denotes pairs asked before $(q,u)$ by
$\pi^*$. Since
$|\mathcal{H}_k^{\pi^*}(\mathcal{H}_t^g)| \leq k$, we can further
bound \eqref{eq:chain} by
\begin{equation}
  \sum_{(q,u)\,\in\,\mathcal{H}_k^{\pi^*}(\mathcal{H}_t^g)}
    I(\theta;\, Y_{q,u} \mid \mathcal{H}_t^g)
  \;\leq\;
  k \cdot \max_{(q,u)\,\notin\,\mathcal{H}_t^g}\;
    I(\theta;\, Y_{q,u} \mid \mathcal{H}_t^g).
  \label{eq:max-bound}
\end{equation}
But,
\[
  G^* - G_t
  \;\leq\;
  \mathbb{E}\!\left[
    I\!\left(\theta;\,
      \{Y_{q,u}\}_{(q,u)\in\mathcal{H}_k^{\pi^*}(\mathcal{H}_t^g)}
      \,\Big|\, \mathcal{H}_t^g\right)
  \right].
\]
Combining with \eqref{eq:max-bound} and the greedy selection rule:
\begin{equation}
  G^* - G_t
  \;\leq\;
  k\;\mathbb{E}\!\left[
    \max_{(q,u)\,\notin\,\mathcal{H}_t^g}
    I(\theta;\,Y_{q,u}\mid \mathcal{H}_t^g)
  \right]
  \;\leq\;
  k\,(G_{t+1} - G_t),
  \label{eq:recurrence}
\end{equation}
where the last inequality holds because the greedy policy selects the
maximally informative pair at round $t+1$.

Rearranging \eqref{eq:recurrence}:
\[
  G^* \;\leq\; k\,G_{t+1} - (k-1)\,G_t,
\]
which gives
\[
  G^* - G_{t+1} \;\leq\; \left(1 - \tfrac{1}{k}\right)(G^* - G_t).
\]
So with $G_0 = 0$:
\[
  G^* - G_k
  \;\leq\;
  \left(1 - \tfrac{1}{k}\right)^k G^*
  \;\leq\;
  e^{-1}\, G^*,
\]
The result of Theorem~\ref{thm:greedy} then follows.

\paragraph{Scope.}
This guarantee applies to the \textsc{Ask} sub-loop with a fixed pair
space $\mathcal{Q} \times \mathcal{U}$.
The \textsc{Expand} action dynamically enlarges $\mathcal{Q}$ and
thereby the pair space, and no
analogous bound is claimed for rounds that trigger \textsc{Expand}.

\section{Likelihood labels}

We use a discrete label set $\mathcal{L} = \{\texttt{likely}, \texttt{neutral}, \texttt{unlikely}\}$ to elicit priors and likelihoods from the LLM. Each label corresponds to a qualitative judgment grounded in the available context:
\begin{itemize}[topsep=2pt,itemsep=1pt,leftmargin=*]
    \item \textbf{likely}: the value or answer is explicitly stated, strongly implied, or represents the most natural assumption given the prompt and context.
    \item \textbf{neutral}: the value or answer is plausible but not supported or contradicted by specific evidence; there is insufficient information to prefer it over alternatives.
    \item \textbf{unlikely}: the value or answer is contradicted by the prompt or context, or would require assumptions inconsistent with the provided information.
\end{itemize}
These qualitative labels are mapped to probabilities via a fixed function $\phi$, enabling consistent numerical priors and likelihoods across all LLM calls.

\section{Hyperparameters}
\label{subsec:hypers}

We perform a grid search over the hyperparameters listed in \cref{tab:hypers}. Fixed parameters are shared across all datasets. Swept parameters are varied independently. The label set is $\mathcal{L} = \{\texttt{likely}, \texttt{neutral},
\texttt{unlikely}\}$ with map $\phi(\texttt{likely}) = 0.8$,
$\phi(\texttt{neutral}) = 0.5$, $\phi(\texttt{unlikely}) = 0.2$. This mapping is not tuned for performance, but chosen as simple and heuristically reasonable. The ablation in \cref{tab:ablation_grpB} shows it performs well in practice.

\begin{table}[htbp]
\centering
\caption{Hyperparameter settings.}
\label{tab:hypers}
\begin{adjustbox}{width=0.5\textwidth}
\begin{tabular}{lcc}
\toprule
\textbf{Parameter} & \textbf{DC/iCraft-MD} & \textbf{SP} \\
\midrule
\multicolumn{3}{l}{\emph{Swept}} \\
Max ask rounds ($T_{\text{ask}}$)       & $\{10, 25\}$  & $\{10, 25\}$ \\
Confidence threshold ($\alpha$)         & $\{0.1, 0.3\}$ & $\{0.1, 0.3\}$ \\
Initial dims / questions ($(p, |\mathcal{Q}|)$) & \multicolumn{2}{c}{$\{(1,2),(5,3),(5,5),(5,10)\}$} \\
Marginal convergence ($\beta$)          & $1.0$ (fixed) & $\{0.5, 0.7\}$ \\
\midrule
\multicolumn{3}{l}{\emph{Fixed}} \\
Total round budget ($T$)               & \multicolumn{2}{c}{$100$} \\
Max values per dim                     & \multicolumn{2}{c}{$4$} \\
Max choices per question               & $4$  & $2$ (yes/no) \\
Max new questions per EXPAND           & \multicolumn{2}{c}{$4$} \\
Top-entropy dims for EXPAND            & \multicolumn{2}{c}{$2$} \\
Expand multiplier ($\lambda$)          & \multicolumn{2}{c}{$1.0$} \\
Temperature / top-$p$                  & \multicolumn{2}{c}{$0.1$ / $1.0$} \\
\bottomrule
\end{tabular}
\end{adjustbox}
\end{table}
\section{Semantic Equivalence Metric}
\label{subsec:semantic_equivalence}

For tasks where no single canonical ground-truth answer exists but multiple phrasings can express the same underlying meaning, such as AR-Bench-SP, metrics such as character-level F1 similarity are inadequate: a random permutation of the ground-truth characters can achieve a perfect score while a faithful paraphrase may receive a low one. 

Following \citet{semantic}, let $a$ denote the agent's predicted answer and $a^*$ the ground-truth solution, both conditioned on the same prompt $\mathbf{p}$. We use a judge LLM $\mathcal{J}$ (\texttt{Qwen2.5-32B-Instruct}) to assess the directional entailment relation
\[
  \mathcal{J}(a_1, a_2 \mid \mathbf{p}) \;\in\;
  \{\texttt{entailment},\, \texttt{neutral},\, \texttt{contradiction}\},
\]
which evaluates whether $a_1$ semantically entails $a_2$ in the context of $\mathbf{p}$. We query $\mathcal{J}$ in both directions, obtaining
\[
  e_{\rightarrow} = \mathcal{J}(a, a^* \mid \mathbf{p}),
  \qquad
  e_{\leftarrow} = \mathcal{J}(a^*, a \mid \mathbf{p}).
\]
We declare $a$ and $a^*$ \emph{semantically equivalent} (non strict) when neither direction is a contradiction and the pair is not jointly neutral:
\[
  \mathrm{Eq}(a, a^*) \;=\;
  \mathbf{1}\!\left[
    \left(\texttt{contradiction} \notin \{e_{\rightarrow}, e_{\leftarrow}\}\right)
    \;\wedge\;
    \left((e_{\rightarrow}, e_{\leftarrow}) \neq (\texttt{neutral}, \texttt{neutral})\right)
  \right].
\]
This criterion accepts predictions that are entailed by the reference in at least one direction, provided no direction is contradicted. It reflects the nature of such tasks, where multiple correct phrasings of the hidden explanation are possible and a partial match (one direction entailment, the other neutral) often corresponds to a correct answer that omits minor details.
\section{Main Results}
\label{subsec:main-table}

We report full numerical results for all agent models and baselines in \cref{tab:acc}, complementing the bar charts in \cref{fig:main_results}.

\begin{table}[htbp]
\centering
\caption{Main results (\%). Columns correspond to agent models. Outcome is \emph{exact accuracy} for AR-Bench-DC, iCraft-MD, and \emph{non-strict semantic equivalence} for AR-Bench-SP. $T_{\text{ask}}=25$ and Standard Errors $= \sqrt{p(1-p)/n}$.}
\label{tab:acc}
\begin{adjustbox}{width=0.5\textwidth}
\begin{tabular}{llcc}
\toprule
\textbf{Dataset} & \textbf{Method} & \textbf{Llama-3.1-8B-Instruct} & \textbf{Qwen2.5-32B-Instruct} \\
\midrule
AR-DC & Few-Shot & 43.0 ± 5.0 & 41.0 ± 4.9 \\
AR-DC & ToT & 54.0 ± 5.0 & 5.0 ± 2.2 \\
AR-DC & UoT & 37.0 ± 4.8 & 15.0 ± 3.6 \\
AR-DC & Proactive CoT & 49.0 ± 5.0 & 8.0 ± 2.7 \\
AR-DC & \textbf{BALAR (ours)} & \textbf{61.0 ± 4.9} & \textbf{47.0 ± 5.0} \\
\midrule
AR-SP & Few-Shot & 20.0 ± 4.0 & 37.0 ± 4.8 \\
AR-SP & ToT & \textbf{31.0 ± 4.6} & 39.0 ± 4.9 \\
AR-SP & UoT & 29.0 ± 4.5 & 12.0 ± 3.2 \\
AR-SP & Proactive CoT & 21.0 ± 4.1 & 36.0 ± 4.8 \\
AR-SP & \textbf{BALAR (ours)} & 26.0 ± 4.4 & \textbf{54.0 ± 5.0} \\
\midrule
iCraft-MD & Zero-Shot & 37.9 ± 4.1 & 55.0 ± 4.2 \\
iCraft-MD & MediQ Expert & 47.1 ± 4.2 & 56.4 ± 4.2 \\
iCraft-MD & \textbf{BALAR (ours)} & \textbf{57.1 ± 4.2} & \textbf{73.6 ± 3.7} \\
\bottomrule
\end{tabular}
\end{adjustbox}
\end{table}

\section{Oracle}
\label{subsec:oracale}

\paragraph{Oracle.}
The LLM is given the prompt, public context, and all users' private
information, providing an upper bound on performance.

\paragraph{Oracle dimensions.}
The oracle returns a minimal set of ambiguity dimensions and their
ground-truth values. \cref{tab:oracle_dims} reports the average number
of dimensions ($\pm$ SE), measuring intrinsic disambiguation complexity.
\begin{table}[htbp]
  \centering
  \caption{Oracle Dimensions. Entries are the average number of oracle dimensions required to resolve a case, reported as mean $\pm$ SE over instances.}
  \label{tab:oracle_dims}
  \begin{tabular}{lccc}
  \toprule
  \textbf{Model} & \textbf{AR-DC} & \textbf{AR-SP} & \textbf{iCraft-MD} \\
  \midrule
  Llama-3.1-8B-Instruct & 4.7 ± 0.05 & 4.2 ± 0.1 & 5.3 ± 0.1 \\
  Qwen2.5-32B-Instruct & 3.9 ± 0.03 & 2.4 ± 0.1 & 3.6 ± 0.1 \\
  \bottomrule
  \end{tabular}
\end{table}

\paragraph{Oracle accuracy.}
We report oracle accuracy for iCraft-MD in \cref{tab:oracle_acc}. For
AR-Bench-DC and AR-Bench-SP, full private information deterministically
fixes the answer, so oracle accuracy is trivially $100\%$ and omitted.

\begin{table}[htbp]
  \centering
  \caption{Oracle accuracy (\%) on iCraft-MD: upper bound when the agent has access to all patient information.}
  \label{tab:oracle_acc}
  \begin{tabular}{lc}
  \toprule
  \textbf{Model} & \textbf{Accuracy\ (\%)} \\
  \midrule
  Llama-3.1-8B-Instruct & 72.7 ± 3.8 \\
  Qwen2.5-7B-Instruct & 65.9 ± 4.2 \\
  Qwen2.5-14B-Instruct & 78.6 ± 3.5 \\
  Qwen2.5-32B-Instruct & 83.6 ± 3.1 \\
  QwQ-32B & 83.6 ± 3.1 \\
  \bottomrule
  \end{tabular}
\end{table}
\cref{tab:oracle_dims} shows that resolving an ambiguous prompt requires on average $3$--$5$
disambiguating dimensions, with larger models proposing fewer, more precise
dimensions. On iCraft-MD, oracle accuracy ranges from
$65.9\%$ to $83.6\%$ in \cref{tab:oracle_acc},
reflecting irreducible ambiguity in the underlying clinical questions.
BALAR with \texttt{Qwen2.5-32B-Instruct} achieves $73.6\%$ without access to
private information, closing $88\%$ of the gap to the oracle upper bound.

\section{Prior Sensitivity}
\label{subsec:prior_sens}
\begin{table}[H]
  \centering
  \caption{Prior sensitivity ablation on iCraft-MD / Qwen2.5-32B-Instruct. Config: $T_{\text{ask}}{=}25$, $\alpha{=}0.1$, $p{=}1$, $|\mathcal{Q}|{=}2$.}
  \label{tab:ablation_grpB}
  \begin{tabular}{lc}
  \toprule
  \textbf{Variant} & \textbf{Accuracy (\%)} \\
  \midrule
  \textbf{Default priors (0.8/0.5/0.2)} & 73.6 ± 3.7  \\
  Flat priors (0.6/0.5/0.4) & 70.7 ± 3.8 \\
  Sharp priors (0.9/0.5/0.1) & 52.1 ± 4.2  \\
  \bottomrule
  \end{tabular}
\end{table}
We study the effect of different prior mappings. \cref{tab:ablation_grpB} shows that BALAR is robust to moderate changes in the
prior mapping: flatter priors incur only a $3.9\%$ relative drop. However,
overly sharp priors degrade performance substantially ($29.1\%$ relative drop),
as they over-concentrate the initial belief.

\section{Round Distribution}
\label{subsec:round_distribution}

We analyze how BALAR allocates its interaction budget across \textsc{Ask} and \textsc{Expand} actions. \cref{fig:round_distributions_qwen32b} shows the distribution of the number of \textsc{Ask} and \textsc{Expand} rounds per instance under \texttt{Qwen2.5-32B-Instruct}. On AR-Bench-DC, nearly all runs
exhaust the full \textsc{Ask} budget with few \textsc{Expand} calls,
suggesting the initial dimensions suffice. On AR-Bench-SP, \textsc{Expand} is called $1$--$5$ times per run. On iCraft-MD, \textsc{Expand} is called $5$--$8$ times consistently, indicating that clinical cases regularly require state refinement.
\begin{figure*}[htbp]
    \centering
    \includegraphics[width=\textwidth]{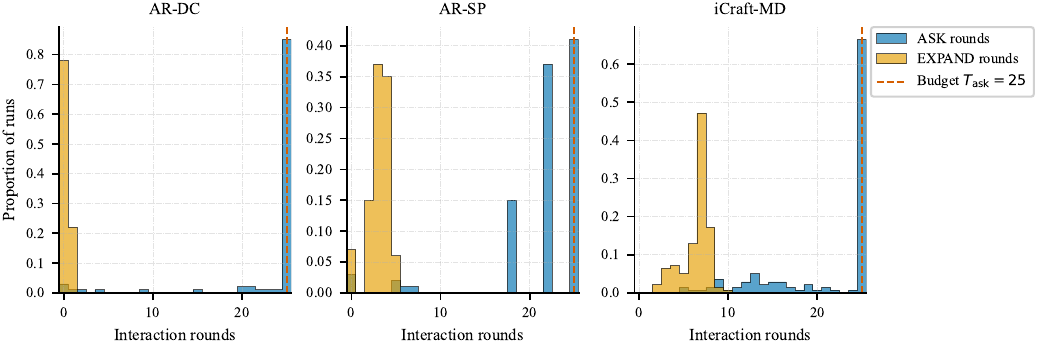}
    \caption{
    Distribution of \textsc{Ask} and \textsc{Expand} rounds for BALAR under \texttt{Qwen2.5-32B-Instruct} on each dataset. Each panel shows normalized histograms over runs, so bar heights correspond to proportions. The dashed vertical line indicates the selected $T_{\mathrm{ask}}$ budget. Fixed configs: DC ($T_{\text{ask}}{=}25$, $\alpha{=}0.3$, $p{=}5$, $|\mathcal{Q}|{=}10$), SP ($T_{\text{ask}}{=}25$, $\alpha{=}0.3$, $\beta{=}0.5$, $p{=}5$, $|\mathcal{Q}|{=}10$), iCraft-MD ($T_{\text{ask}}{=}25$, $\alpha{=}0.1$, $p{=}1$, $|\mathcal{Q}|{=}2$).}
    \label{fig:round_distributions_qwen32b}
\end{figure*}

\newpage
\nolinenumbers
\section{Prompts}
\label{app:prompts}

All BALAR prompts consist of a \emph{system prompt} and a \emph{user prompt}.
Each subsection below covers one prompt role and contains, for each of the
four domains (AR-Bench-DC, AR-Bench-SP, iCraft-MD), the system
prompt followed by the user prompt(s).
Every prompt requests a short \texttt{reason} field before the structured
output, implementing the lightweight chain-of-thought verification described
in \cref{subsec:practical}.

\subsection{User Simulator}
\label{app:prompts:user_sim}

\subsubsection*{AR-Bench-DC}

\paragraph{System prompt.}\leavevmode\par
\noindent
\begin{lstlisting}[style=prompt]
You are simulating a suspect being interrogated in a murder investigation. You must stay in character and answer based only on your character's knowledge and story. If you are the murderer, you will try to deflect suspicion while remaining consistent with your story. If you are innocent, answer honestly based on what your character knows.
\end{lstlisting}

\paragraph{With discrete choices.}\leavevmode\par
\begin{lstlisting}[style=prompt]
<USER_CONTEXT>
{user_context}
</USER_CONTEXT>

<QUESTION>
{question}
</QUESTION>

<CHOICES_WITH_IDS>
{choices_with_ids}
</CHOICES_WITH_IDS>

Task:
You are a suspect being interrogated by a detective. Answer <QUESTION> in a way that is consistent with your character described in <USER_CONTEXT>. If you are the murderer, you may deflect or be evasive, but you must remain consistent with your story.

What to generate:
- reason : a short one-sentence explanation of why you chose the answer you did.
- answer_id : the id of the choice you are selecting as your answer to the question.

Constraints:
Use ONLY information that is supported by <USER_CONTEXT>.
answer_id must be one of the ids provided in <CHOICES_WITH_IDS>.

Output format:
Return STRICT JSON only with the following schema:
{{
    "reason": string,
    "answer_id": string
}}
\end{lstlisting}

\paragraph{Without discrete choices (free-form).}\leavevmode\par
\begin{lstlisting}[style=prompt]
<USER_CONTEXT>
{user_context}
</USER_CONTEXT>

<QUESTION>
{question}
</QUESTION>

Task:
You are a suspect being interrogated by a detective. Answer <QUESTION> in a way that is consistent with your character described in <USER_CONTEXT>. If you are the murderer, you may deflect or be evasive, but you must remain consistent with your story.

What to generate:
- reason : a short one-sentence explanation of why you chose the answer you did.
- answer : your answer to the question.

Constraints:
Use ONLY information that is supported by <USER_CONTEXT>.
answer must be a natural language answer to the question.

Output format:
Return STRICT JSON only with the following schema:
{{
    "reason": string,
    "answer": string
}}
\end{lstlisting}

\subsubsection*{AR-Bench-SP}

\paragraph{System prompt.}\leavevmode\par
\begin{lstlisting}[style=prompt]
You are the host of a thinking puzzle (situation puzzle). You know the hidden explanation behind the puzzle. Answer the solver's questions truthfully based on the solution. Never reveal the full solution directly; only confirm or deny specific aspects when asked.
\end{lstlisting}

\paragraph{With discrete choices.}\leavevmode\par
\begin{lstlisting}[style=prompt]
<USER_CONTEXT>
{user_context}
</USER_CONTEXT>

<QUESTION>
{question}
</QUESTION>

<CHOICES_WITH_IDS>
{choices_with_ids}
</CHOICES_WITH_IDS>

Task:
You are the host of a thinking puzzle. A solver is asking you <QUESTION> to try to figure out the hidden explanation. Answer based on your knowledge of the puzzle solution described in <USER_CONTEXT>. Give a truthful answer — do not mislead, but do not give extra information.

What to generate:
- reason : a short one-sentence explanation of why you chose the answer you did.
- answer_id : the id of the choice you are selecting as your answer to the question.

Constraints:
Use ONLY information that is supported by <USER_CONTEXT>.
answer_id must be one of the ids provided in <CHOICES_WITH_IDS>.

Output format:
Return STRICT JSON only with the following schema:
{{
    "reason": string,
    "answer_id": string
}}
\end{lstlisting}

\paragraph{Without discrete choices (free-form).}\leavevmode\par
\begin{lstlisting}[style=prompt]
<USER_CONTEXT>
{user_context}
</USER_CONTEXT>

<QUESTION>
{question}
</QUESTION>

Task:
You are the host of a thinking puzzle. A solver is asking you <QUESTION> to try to figure out the hidden explanation. Answer based on your knowledge of the puzzle solution described in <USER_CONTEXT>. Give a truthful answer — do not mislead, but do not volunteer extra information.

What to generate:
- reason : a short one-sentence explanation of why you chose the answer you did.
- answer : your answer to the question.

Constraints:
Use ONLY information that is supported by <USER_CONTEXT>.
answer must be a natural language answer to the question.

Output format:
Return STRICT JSON only with the following schema:
{{
    "reason": string,
    "answer": string
}}
\end{lstlisting}

\subsubsection*{iCraft-MD}

\paragraph{System prompt.}\leavevmode\par
\begin{lstlisting}[style=prompt]
You are simulating a patient being examined by a physician. Answer questions based ONLY on your medical record and symptoms. Use facts from your medical record. Do not give information that was not specifically asked about.
\end{lstlisting}

\paragraph{With discrete choices.}\leavevmode\par
\begin{lstlisting}[style=prompt]
<USER_CONTEXT>
{user_context}
</USER_CONTEXT>

<QUESTION>
{question}
</QUESTION>

<CHOICES_WITH_IDS>
{choices_with_ids}
</CHOICES_WITH_IDS>

Task:
You are a patient being examined by a physician. Answer <QUESTION> in a way that is consistent with your medical record and symptoms described in <USER_CONTEXT>. Use facts from your medical record. Do not give information that was not specifically asked about.

What to generate:
- reason : a short one-sentence explanation of why you chose the answer you did.
- answer_id : the id of the choice you are selecting as your answer to the question.

Constraints:
Use ONLY information that is supported by <USER_CONTEXT>.
answer_id must be one of the ids provided in <CHOICES_WITH_IDS>.

Output format:
Return STRICT JSON only with the following schema:
{{
    "reason": string,
    "answer_id": string
}}
\end{lstlisting}

\paragraph{Without discrete choices (free-form).}\leavevmode\par
\begin{lstlisting}[style=prompt]
<USER_CONTEXT>
{user_context}
</USER_CONTEXT>

<QUESTION>
{question}
</QUESTION>

Task:
You are a patient being examined by a physician. Answer <QUESTION> in a way that is consistent with your medical record and symptoms described in <USER_CONTEXT>. Use facts from your medical record. Do not give information that was not specifically asked about.

What to generate:
- reason : a short one-sentence explanation of why you chose the answer you did.
- answer : your answer to the question.

Constraints:
Use ONLY information that is supported by <USER_CONTEXT>.
answer must be a natural language answer to the question.

Output format:
Return STRICT JSON only with the following schema:
{{
    "reason": string,
    "answer": string
}}
\end{lstlisting}

\subsection{Step 1 --- Dimension Proposal}
\label{app:prompts:init_dims}

\subsubsection*{AR-Bench-DC}

\paragraph{System prompt.}\leavevmode\par
\begin{lstlisting}[style=prompt]
You are an experienced detective analyzing a murder case. Your goal is to identify the key investigative dimensions that must be resolved to determine who the real murderer is.
\end{lstlisting}

\paragraph{User prompt.}\leavevmode\par
\begin{lstlisting}[style=prompt]
<CASE_QUESTION>
{ambiguous_prompt}
</CASE_QUESTION>

<CASE_BACKGROUND>
{meta_context}
</CASE_BACKGROUND>

Task:
Identify the key investigative dimensions in the <CASE_QUESTION> that must be resolved to determine who the real murderer is.

Definition:
An investigative dimension is a specific aspect of the murder case (e.g., motive, alibi, access to the murder weapon, relationship to the victim) where multiple suspects could plausibly be implicated, and resolving it would narrow down the true murderer. Once the dimension's value is known, the case moves toward identifying a single suspect as the murderer.

What to generate:
- Produce a minimal, non-overlapping set of investigative dimensions.
- Produce exactly {num_initial_dims} investigative dimensions.
- Each dimension must correspond to a distinct, investigative uncertainty.
- If <CASE_BACKGROUND> already resolves a dimension, do not include it.
- If <CASE_BACKGROUND> proposes some investigative dimensions, use them.

For each investigative dimension, provide :
- reason: a short one-sentence explanation of why this dimension is critical for identifying the murderer.
- name: a short, specific label (e.g., "Motive", "Alibi at time of death", "Access to murder weapon", etc.)
- values: a list of plausible values (e.g., one per suspect or per scenario), no larger than {max_num_values_per_dim}, that this dimension could take in the context of the case.

Constraints:
- Use ONLY the information provided in <CASE_QUESTION> and <CASE_BACKGROUND>.
- Do NOT answer the <CASE_QUESTION> itself. Focus ONLY on identifying investigative dimensions.
- Do NOT rewrite or restate the <CASE_QUESTION> or <CASE_BACKGROUND>.

Output format:
Return STRICT JSON only with the following schema:
{{
    "dimensions": [
        {{
            "reason": string,
            "name": string,
            "values": [string, ...]
        }},
        ...
    ]
}}
\end{lstlisting}

\subsubsection*{AR-Bench-SP}

\paragraph{System prompt.}\leavevmode\par
\begin{lstlisting}[style=prompt]
You are an expert thinking puzzle solver. Your goal is to identify the hidden dimensions of the puzzle — the unstated aspects of the scenario whose true values would explain the strange situation presented.
\end{lstlisting}

\paragraph{User prompt.}\leavevmode\par
\begin{lstlisting}[style=prompt]
<PUZZLE>
{ambiguous_prompt}
</PUZZLE>

<PUZZLE_CONTEXT>
{meta_context}
</PUZZLE_CONTEXT>

Task:
Identify the hidden dimensions of the thinking puzzle in <PUZZLE> that must be uncovered to explain the strange scenario.

Definition:
A puzzle dimension is a hidden aspect of the scenario — such as a non-obvious word meaning, an unstated context, a surprising identity, or an unusual causal mechanism — where knowing its true value would explain the puzzle. Once the dimension's value is known, the puzzle moves toward a single coherent explanation.

What to generate:
- Produce a minimal, non-overlapping set of puzzle dimensions.
- Produce exactly {num_initial_dims} puzzle dimensions.
- Each dimension must correspond to a distinct hidden aspect of the puzzle.
- If <PUZZLE_CONTEXT> already resolves a dimension, do not include it.
- If <PUZZLE_CONTEXT> proposes some puzzle dimensions, use them.

For each puzzle dimension, provide :
- reason: a short one-sentence explanation of why this dimension is a key unknown in the puzzle.
- name: a short, specific label
- values: a list of plausible interpretations, no larger than {max_num_values_per_dim}, that this dimension could take.

Constraints:
- Use ONLY the information provided in <PUZZLE> and <PUZZLE_CONTEXT>.
- Do NOT solve the <PUZZLE> itself. Focus ONLY on identifying hidden dimensions.
- Do NOT rewrite or restate the <PUZZLE> or <PUZZLE_CONTEXT>.

Output format:
Return STRICT JSON only with the following schema:
{{
    "dimensions": [
        {{
            "reason": string,
            "name": string,
            "values": [string, ...]
        }},
        ...
    ]
}}
\end{lstlisting}

\subsubsection*{iCraft-MD}

\paragraph{System prompt.}\leavevmode\par
\begin{lstlisting}[style=prompt]
You are an experienced physician performing a diagnosis. Your goal is to identify the key clinical dimensions that must be clarified to arrive at the correct diagnosis or clinical decision.
\end{lstlisting}

\paragraph{User prompt.}\leavevmode\par
\begin{lstlisting}[style=prompt]
<CLINICAL_QUESTION>
{ambiguous_prompt}
</CLINICAL_QUESTION>

<PATIENT_INFORMATION>
{meta_context}
</PATIENT_INFORMATION>

Task:
Identify the key clinical dimensions in the <CLINICAL_QUESTION> that must be clarified to arrive at the correct diagnosis or clinical decision.

Definition:
A clinical dimension is a specific clinical factor (e.g., symptom characterization, lab finding, risk factor, past medical history) where different values would point toward different diagnoses or clinical decisions. Once the dimension's value is known, the diagnosis narrows toward a single correct answer.

What to generate:
- Produce a minimal, non-overlapping set of clinical dimensions.
- Produce exactly {num_initial_dims} clinical dimensions.
- Each dimension must correspond to a distinct clinical uncertainty.
- If <PATIENT_INFORMATION> already resolves a dimension, do not include it.
- If <PATIENT_INFORMATION> proposes some clinical dimensions, use them.

For each clinical dimension, provide :
- reason: a short one-sentence explanation of why this clinical factor is discriminating between diagnoses.
- name: a short, specific clinical label
- values: a list of clinically plausible values, no larger than {max_num_values_per_dim}, that this dimension could take.

Constraints:
- Use ONLY the information provided in <CLINICAL_QUESTION> and <PATIENT_INFORMATION>.
- Do NOT answer the <CLINICAL_QUESTION> itself. Focus ONLY on identifying clinical dimensions.
- Do NOT rewrite or restate the <CLINICAL_QUESTION> or <PATIENT_INFORMATION>.

Output format:
Return STRICT JSON only with the following schema:
{{
    "dimensions": [
        {{
            "reason": string,
            "name": string,
            "values": [string, ...]
        }},
        ...
    ]
}}
\end{lstlisting}

\subsection{Step 2 --- Prior Elicitation}
\label{app:prompts:init_priors}

\subsubsection*{AR-Bench-DC}

\paragraph{System prompt.}\leavevmode\par
\begin{lstlisting}[style=prompt]
You are an experienced detective forming initial hypotheses about a murder case based on the available evidence and case background.
\end{lstlisting}

\paragraph{User prompt.}\leavevmode\par
\begin{lstlisting}[style=prompt]
<CASE_QUESTION>
{ambiguous_prompt}
</CASE_QUESTION>

<CASE_BACKGROUND>
{meta_context}
</CASE_BACKGROUND>

<DIMENSION_NAME>
{dimension_name}
</DIMENSION_NAME>

<DIMENSION_VALUE>
{dimension_value}
</DIMENSION_VALUE>

Task:
Given <CASE_QUESTION> and <CASE_BACKGROUND>, judge how likely the investigative dimension <DIMENSION_NAME> takes on the value <DIMENSION_VALUE>.

What to generate:
- reason: a short one-sentence explanation of why the <DIMENSION_NAME> is likely, unlikely, or neutral to take on the value <DIMENSION_VALUE>.
- label: one of "likely", "unlikely", or "neutral" according to the following definitions:
    - likely: <DIMENSION_VALUE> is explicitly stated, strongly implied, or is the most natural assumption given the evidence in the <CASE_QUESTION> and <CASE_BACKGROUND>.
    - neutral: <DIMENSION_VALUE> is plausible but not implied or supported by specific evidence in the <CASE_QUESTION> or <CASE_BACKGROUND>.
    - unlikely: <DIMENSION_VALUE> is contradicted by the <CASE_QUESTION> or <CASE_BACKGROUND>, or would require assumptions that are inconsistent with the available evidence.

Constraints:
- Use ONLY the information provided in <CASE_QUESTION> and <CASE_BACKGROUND>.
- Do NOT answer the <CASE_QUESTION> itself. Focus ONLY on judging the likelihood of the dimension value.
- Do NOT rewrite or restate the <CASE_QUESTION> or <CASE_BACKGROUND>.
- label must be one of "likely", "unlikely", or "neutral".

Output format:
Return STRICT JSON only with the following schema:
{{
    "reason": string,
    "label": string
}}
\end{lstlisting}

\subsubsection*{AR-Bench-SP}

\paragraph{System prompt.}\leavevmode\par
\begin{lstlisting}[style=prompt]
You are an expert thinking puzzle solver forming initial hypotheses about the hidden aspects of a puzzle based on the scenario description.
\end{lstlisting}

\paragraph{User prompt.}\leavevmode\par
\begin{lstlisting}[style=prompt]
<PUZZLE>
{ambiguous_prompt}
</PUZZLE>

<PUZZLE_CONTEXT>
{meta_context}
</PUZZLE_CONTEXT>

<DIMENSION_NAME>
{dimension_name}
</DIMENSION_NAME>

<DIMENSION_VALUE>
{dimension_value}
</DIMENSION_VALUE>

Task:
Given <PUZZLE> and <PUZZLE_CONTEXT>, judge how likely the puzzle dimension <DIMENSION_NAME> takes on the value <DIMENSION_VALUE>.

What to generate:
- reason: a short one-sentence explanation of why the <DIMENSION_NAME> is likely, unlikely, or neutral to take on the value <DIMENSION_VALUE>.
- label: one of "likely", "unlikely", or "neutral" according to the following definitions:
    - likely: <DIMENSION_VALUE> is explicitly suggested by, strongly implied by, or is the most natural interpretation given the clues in the <PUZZLE> and <PUZZLE_CONTEXT>.
    - neutral: <DIMENSION_VALUE> is a plausible interpretation but not implied or supported by specific clues in the <PUZZLE> or <PUZZLE_CONTEXT>.
    - unlikely: <DIMENSION_VALUE> is contradicted by the <PUZZLE> or <PUZZLE_CONTEXT>, or would require assumptions that are inconsistent with the scenario.

Constraints:
- Use ONLY the information provided in <PUZZLE> and <PUZZLE_CONTEXT>.
- Do NOT solve the <PUZZLE> itself. Focus ONLY on judging the likelihood of the dimension value.
- Do NOT rewrite or restate the <PUZZLE> or <PUZZLE_CONTEXT>.
- label must be one of "likely", "unlikely", or "neutral".

Output format:
Return STRICT JSON only with the following schema:
{{
    "reason": string,
    "label": string
}}
\end{lstlisting}

\subsubsection*{iCraft-MD}

\paragraph{System prompt.}\leavevmode\par
\begin{lstlisting}[style=prompt]
You are an experienced physician forming initial clinical hypotheses based on the available patient information. Use your clinical knowledge to assess the likelihood of different clinical findings.
\end{lstlisting}

\paragraph{User prompt.}\leavevmode\par
\begin{lstlisting}[style=prompt]
<CLINICAL_QUESTION>
{ambiguous_prompt}
</CLINICAL_QUESTION>

<PATIENT_INFORMATION>
{meta_context}
</PATIENT_INFORMATION>

<DIMENSION_NAME>
{dimension_name}
</DIMENSION_NAME>

<DIMENSION_VALUE>
{dimension_value}
</DIMENSION_VALUE>

Task:
Given <CLINICAL_QUESTION> and <PATIENT_INFORMATION>, judge how likely the clinical dimension <DIMENSION_NAME> takes on the value <DIMENSION_VALUE>.

What to generate:
- reason: a short one-sentence explanation of why the <DIMENSION_NAME> is likely, unlikely, or neutral to take on the value <DIMENSION_VALUE>.
- label: one of "likely", "unlikely", or "neutral" according to the following definitions:
    - likely: <DIMENSION_VALUE> is explicitly stated, strongly implied, or is the most natural clinical assumption given the patient information in <CLINICAL_QUESTION> and <PATIENT_INFORMATION>.
    - neutral: <DIMENSION_VALUE> is clinically plausible but not implied or supported by specific evidence in the <CLINICAL_QUESTION> or <PATIENT_INFORMATION>.
    - unlikely: <DIMENSION_VALUE> is contradicted by the <CLINICAL_QUESTION> or <PATIENT_INFORMATION>, or would require assumptions that are inconsistent with the patient's presentation.

Constraints:
- Use ONLY the information provided in <CLINICAL_QUESTION> and <PATIENT_INFORMATION>.
- Do NOT answer the <CLINICAL_QUESTION> itself. Focus ONLY on judging the likelihood of the dimension value.
- Do NOT rewrite or restate the <CLINICAL_QUESTION> or <PATIENT_INFORMATION>.
- label must be one of "likely", "unlikely", or "neutral".

Output format:
Return STRICT JSON only with the following schema:
{{
    "reason": string,
    "label": string
}}
\end{lstlisting}

\subsection{Step 3 --- Question Generation}
\label{app:prompts:init_questions}

\subsubsection*{AR-Bench-DC}

\paragraph{System prompt.}\leavevmode\par
\begin{lstlisting}[style=prompt]
You are an experienced detective preparing interrogation questions for suspects in a murder investigation. Your questions should be designed to reveal inconsistencies, uncover motives, and verify alibis.
\end{lstlisting}

\paragraph{User prompt.}\leavevmode\par
\begin{lstlisting}[style=prompt]
<CASE_QUESTION>
{ambiguous_prompt}
</CASE_QUESTION>

<CASE_BACKGROUND>
{meta_context}
</CASE_BACKGROUND>

<INVESTIGATIVE_DIMENSIONS>
{dimensions_with_values}
</INVESTIGATIVE_DIMENSIONS>

Task:
Given <CASE_QUESTION>, <CASE_BACKGROUND>, and <INVESTIGATIVE_DIMENSIONS>, generate exactly {num_initial_questions} interrogation questions to ask the suspects that would help identify the real murderer. Each question should target one or more <INVESTIGATIVE_DIMENSIONS> and have multiple-choice answers.

Definition:
<INVESTIGATIVE_DIMENSIONS> is a list of investigative dimensions, where each dimension has a name and a list of possible values it could take. An investigative dimension is a specific aspect of the murder case where multiple suspects could plausibly be implicated, and resolving it would narrow down the true murderer.

What to generate:
For each of the {num_initial_questions} questions, provide:
- reason: a short one-sentence explanation of why this question would help identify the murderer.
- question: the text of the interrogation question.
- choices: a list of multiple-choice answer options for the question, no larger than {max_choices_per_question}.

Constraints:
- Use ONLY the information provided in <CASE_QUESTION>, <CASE_BACKGROUND>, and <INVESTIGATIVE_DIMENSIONS>.
- Do NOT answer the <CASE_QUESTION> itself. Focus ONLY on generating interrogation questions.
- Do NOT rewrite or restate the <CASE_QUESTION> or <CASE_BACKGROUND>.
- Each question must be designed to elicit information about one or more of the dimensions in <INVESTIGATIVE_DIMENSIONS>.
- Each question must have multiple-choice answers.

Output format:
Return STRICT JSON only with the following schema:
{{
    "questions": [
        {{
            "reason": string,
            "question": string,
            "choices": [string, ...]
        }},
        ...
    ]
}}
\end{lstlisting}

\subsubsection*{AR-Bench-SP}

\paragraph{System prompt.}\leavevmode\par
\begin{lstlisting}[style=prompt]
You are an expert thinking puzzle solver. Generate clarifying questions to ask the puzzle host that will help you uncover the hidden explanation. Good puzzle questions test specific hypotheses about what is really going on in the scenario.
\end{lstlisting}

\paragraph{User prompt.}\leavevmode\par
\begin{lstlisting}[style=prompt]
<PUZZLE>
{ambiguous_prompt}
</PUZZLE>

<PUZZLE_CONTEXT>
{meta_context}
</PUZZLE_CONTEXT>

<PUZZLE_DIMENSIONS>
{dimensions_with_values}
</PUZZLE_DIMENSIONS>

Task:
Given <PUZZLE>, <PUZZLE_CONTEXT>, and <PUZZLE_DIMENSIONS>, generate exactly {num_initial_questions} clarifying questions to ask the puzzle host that would help uncover the hidden explanation. Each question should target one or more <PUZZLE_DIMENSIONS> and have "yes"/"no" answers.

Definition:
<PUZZLE_DIMENSIONS> is a list of puzzle dimensions, where each dimension has a name and a list of possible values it could take. A puzzle dimension is a hidden aspect of the scenario where knowing its true value would explain the puzzle.

What to generate:
For each of the {num_initial_questions} questions, provide:
- reason: a short one-sentence explanation of why this question would help solve the puzzle.
- question: the text of the clarifying question to ask the puzzle host.
- choices: ["yes", "no"] as the {max_choices_per_question} multiple-choice answer options for the question.

Constraints:
- Use ONLY the information provided in <PUZZLE>, <PUZZLE_CONTEXT>, and <PUZZLE_DIMENSIONS>.
- Do NOT solve the <PUZZLE> itself. Focus ONLY on generating clarifying questions.
- Do NOT rewrite or restate the <PUZZLE> or <PUZZLE_CONTEXT>.
- Each question must be designed to elicit information about one or more of the dimensions in <PUZZLE_DIMENSIONS>.
- Each question must have "yes"/"no" answers.
- Keep each question short: at most 20 words.
- Keep each reason short: at most 15 words.

Output format:
Return STRICT JSON only with the following schema:
{{
    "questions": [
        {{
            "reason": string,
            "question": string,
            "choices": ["yes", "no"]
        }},
        ...
    ]
}}
\end{lstlisting}

\subsubsection*{iCraft-MD}

\paragraph{System prompt.}\leavevmode\par
\begin{lstlisting}[style=prompt]
You are an experienced physician conducting a patient interview to gather clinical information for a diagnosis. Your questions should be targeted, clinically relevant, and designed to discriminate between competing diagnoses.
\end{lstlisting}

\paragraph{User prompt.}\leavevmode\par
\begin{lstlisting}[style=prompt]
<CLINICAL_QUESTION>
{ambiguous_prompt}
</CLINICAL_QUESTION>

<PATIENT_INFORMATION>
{meta_context}
</PATIENT_INFORMATION>

<CLINICAL_DIMENSIONS>
{dimensions_with_values}
</CLINICAL_DIMENSIONS>

Task:
Given <CLINICAL_QUESTION>, <PATIENT_INFORMATION>, and <CLINICAL_DIMENSIONS>, generate exactly {num_initial_questions} clinical questions to ask the patient that would help arrive at the correct diagnosis. Each question should target one or more <CLINICAL_DIMENSIONS> and have multiple-choice answers.

Definition:
<CLINICAL_DIMENSIONS> is a list of clinical dimensions, where each dimension has a name and a list of possible values it could take. A clinical dimension is a specific clinical factor where different values would point toward different diagnoses.

What to generate:
For each of the {num_initial_questions} questions, provide:
- reason: a short one-sentence explanation of why this question would help narrow the diagnosis.
- question: the text of the clinical question to ask the patient.
- choices: a list of multiple-choice answer options for the question, no larger than {max_choices_per_question}.

Constraints:
- Use ONLY the information provided in <CLINICAL_QUESTION>, <PATIENT_INFORMATION>, and <CLINICAL_DIMENSIONS>.
- Do NOT answer the <CLINICAL_QUESTION> itself. Focus ONLY on generating clinical questions.
- Do NOT rewrite or restate the <CLINICAL_QUESTION> or <PATIENT_INFORMATION>.
- Each question must be designed to elicit information about one or more of the dimensions in <CLINICAL_DIMENSIONS>.
- Each question must have multiple-choice answers.

Output format:
Return STRICT JSON only with the following schema:
{{
    "questions": [
        {{
            "reason": string,
            "question": string,
            "choices": [string, ...]
        }},
        ...
    ]
}}
\end{lstlisting}

\subsection{Step 4 --- Likelihood Table Construction}
\label{app:prompts:likelihood}

\subsubsection*{AR-Bench-DC}

\paragraph{System prompt.}\leavevmode\par
\begin{lstlisting}[style=prompt]
You are an experienced detective evaluating how a suspect would likely respond to an interrogation question under different assumptions about the case. Consider that guilty suspects may deflect, lie, or give evasive answers, while innocent suspects will answer based on their genuine knowledge.
\end{lstlisting}

\paragraph{Without conversation history.}\leavevmode\par
\begin{lstlisting}[style=prompt]
<CASE_QUESTION>
{ambiguous_prompt}
</CASE_QUESTION>

<CASE_BACKGROUND>
{meta_context}
</CASE_BACKGROUND>

<SUSPECT_INFO>
{user_info}
</SUSPECT_INFO>

<DIMENSION_NAME>
{dimension_name}
</DIMENSION_NAME>

<DIMENSION_VALUES_WITH_IDS>
{dimension_values_with_ids}
</DIMENSION_VALUES_WITH_IDS>

<QUESTION>
{question_text}
</QUESTION>

<QUESTION_CHOICES_WITH_IDS>
{question_choices_with_ids}
</QUESTION_CHOICES_WITH_IDS>

Definition:
An investigative dimension is a specific aspect of the murder case where multiple suspects could plausibly be implicated, and resolving it would narrow down the true murderer.
<DIMENSION_VALUES_WITH_IDS> is a list of dicts with "id" and "text" fields. Each dict corresponds to a possible value that the <DIMENSION_NAME> could take.
<QUESTION_CHOICES_WITH_IDS> is a list of dicts with "id" and "text" fields. Each dict corresponds to a multiple-choice answer option for the question.
Let values[i] be the i-th element of <DIMENSION_VALUES_WITH_IDS>.
Let choices[j] be the j-th element of <QUESTION_CHOICES_WITH_IDS>.

Task (row-major order):
For i = 0..len(values)-1:
  For j = 0..len(choices)-1:
    - Assume the true state of the case is <DIMENSION_NAME> = values[i]["text"].
    - Impersonate the suspect described in <SUSPECT_INFO>.
    - Judge how likely it is that this suspect would answer the question <QUESTION> with choices[j]["text"] under that assumption. Consider that a guilty suspect may try to deflect or mislead.

What to generate:
For i = 0..len(values)-1:
  For j = 0..len(choices)-1:
    - question_choice_id: the id of the question choice being evaluated, i.e. choices[j]["id"]
    - dimension_value_id: the id of the dimension value being evaluated, i.e. values[i]["id"]
    - reason: a short one-sentence explanation of why choices[j] is labeled likely/neutral/unlikely and why the other two labels were not chosen.
    - label: one of "likely", "neutral", or "unlikely" according to the following definitions:
      - "likely": Given <DIMENSION_NAME> = values[i]["text"] and the suspect acts according to <SUSPECT_INFO>, the suspect is expected to give choices[j]["text"] for <QUESTION>.
      - "neutral": Given <DIMENSION_NAME> = values[i]["text"] and the suspect acts according to <SUSPECT_INFO>, choices[j]["text"] is plausible but not specifically supported; there is insufficient evidence to say that the suspect would or would not prefer it over other choices.
      - "unlikely": Given <DIMENSION_NAME> = values[i]["text"] and the suspect acts according to <SUSPECT_INFO>, the suspect is not expected to give choices[j]["text"] for <QUESTION>.

Constraints:
- Use ONLY the information provided in <CASE_QUESTION>, <CASE_BACKGROUND>, and <SUSPECT_INFO>.
- Do NOT answer the <CASE_QUESTION> itself. Focus ONLY on judging the likelihood of the question choices under different assumptions about the dimension value.
- Do NOT rewrite or restate the <CASE_QUESTION>, <CASE_BACKGROUND>, or <SUSPECT_INFO>.
- label must be one of "likely", "neutral", or "unlikely".
- The output must include an entry for every combination of dimension value and question choice.

Output format:
Return STRICT JSON only with the following schema:
{{
    "evaluations": [
        [
            {{
                "question_choice_id": string,
                "dimension_value_id": string,
                "reason": string,
                "label": string
            }},
            ... // one object for each question choice
        ],
        ... // one array for each dimension value
    ]
}}
The "evaluations" field must contain exactly {num_dimension_values} arrays (one per dimension value).
Each inner array must contain exactly {num_question_choices} objects (one per question choice).
\end{lstlisting}

\paragraph{With conversation history.}\leavevmode\par
\begin{lstlisting}[style=prompt]
<CASE_QUESTION>
{ambiguous_prompt}
</CASE_QUESTION>

<CASE_BACKGROUND>
{meta_context}
</CASE_BACKGROUND>

<SUSPECT_INFO>
{user_info}
</SUSPECT_INFO>

<INTERROGATION_LOG>
{conversation_log}
</INTERROGATION_LOG>

<DIMENSION_NAME>
{dimension_name}
</DIMENSION_NAME>

<DIMENSION_VALUES_WITH_IDS>
{dimension_values_with_ids}
</DIMENSION_VALUES_WITH_IDS>

<QUESTION>
{question_text}
</QUESTION>

<QUESTION_CHOICES_WITH_IDS>
{question_choices_with_ids}
</QUESTION_CHOICES_WITH_IDS>

Definition:
An investigative dimension is a specific aspect of the murder case where multiple suspects could plausibly be implicated, and resolving it would narrow down the true murderer.
<DIMENSION_VALUES_WITH_IDS> is a list of dicts with "id" and "text" fields. Each dict corresponds to a possible value that the <DIMENSION_NAME> could take.
<QUESTION_CHOICES_WITH_IDS> is a list of dicts with "id" and "text" fields. Each dict corresponds to a multiple-choice answer option for the question.
<INTERROGATION_LOG> is a list of dicts with "question_text", "user_name", and "user_answer" fields, representing the history of the interrogation between the detective and the suspects up to this point. This information may provide additional context and reveal inconsistencies.
Let values[i] be the i-th element of <DIMENSION_VALUES_WITH_IDS>.
Let choices[j] be the j-th element of <QUESTION_CHOICES_WITH_IDS>.

Task (row-major order):
For i = 0..len(values)-1:
  For j = 0..len(choices)-1:
    - Assume the true state of the case is <DIMENSION_NAME> = values[i]["text"].
    - Impersonate the suspect described in <SUSPECT_INFO>.
    - Judge how likely it is that this suspect would answer the question <QUESTION> with choices[j]["text"] under that assumption. Consider that a guilty suspect may try to deflect or mislead.

What to generate:
For i = 0..len(values)-1:
  For j = 0..len(choices)-1:
    - question_choice_id: the id of the question choice being evaluated, i.e. choices[j]["id"]
    - dimension_value_id: the id of the dimension value being evaluated, i.e. values[i]["id"]
    - reason: a short one-sentence explanation of why choices[j] is labeled likely/neutral/unlikely and why the other two labels were not chosen.
    - label: one of "likely", "neutral", or "unlikely" according to the following definitions:
      - "likely": Given <DIMENSION_NAME> = values[i]["text"] and the suspect acts according to <SUSPECT_INFO>, the suspect is expected to give choices[j]["text"] for <QUESTION>.
      - "neutral": Given <DIMENSION_NAME> = values[i]["text"] and the suspect acts according to <SUSPECT_INFO>, choices[j]["text"] is plausible but not specifically supported; there is insufficient evidence to say that the suspect would or would not prefer it over other choices.
      - "unlikely": Given <DIMENSION_NAME> = values[i]["text"] and the suspect acts according to <SUSPECT_INFO>, the suspect is not expected to give choices[j]["text"] for <QUESTION>.

Constraints:
- Use ONLY the information provided in <CASE_QUESTION>, <CASE_BACKGROUND>, <SUSPECT_INFO>, and <INTERROGATION_LOG>.
- Do NOT answer the <CASE_QUESTION> itself. Focus ONLY on judging the likelihood of the question choices under different assumptions about the dimension value.
- Do NOT rewrite or restate the <CASE_QUESTION>, <CASE_BACKGROUND>, <SUSPECT_INFO>, or <INTERROGATION_LOG>.
- label must be one of "likely", "neutral", or "unlikely".
- The output must include an entry for every combination of dimension value and question choice.

Output format:
Return STRICT JSON only with the following schema:
{{
    "evaluations": [
        [
            {{
                "question_choice_id": string,
                "dimension_value_id": string,
                "reason": string,
                "label": string
            }},
            ... // one object for each question choice
        ],
        ... // one array for each dimension value
    ]
}}
The "evaluations" field must contain exactly {num_dimension_values} arrays (one per dimension value).
Each inner array must contain exactly {num_question_choices} objects (one per question choice).
\end{lstlisting}

\subsubsection*{AR-Bench-SP}

\paragraph{System prompt.}\leavevmode\par
\begin{lstlisting}[style=prompt]
You are an expert thinking puzzle analyst. Evaluate how a puzzle host who knows the hidden explanation would likely respond to a solver's question under different assumptions about the puzzle's hidden aspects. The host answers truthfully, without giving extra information.
\end{lstlisting}

\paragraph{Without conversation history.}\leavevmode\par
\begin{lstlisting}[style=prompt]
<PUZZLE>
{ambiguous_prompt}
</PUZZLE>

<PUZZLE_CONTEXT>
{meta_context}
</PUZZLE_CONTEXT>

<HOST_INFO>
{user_info}
</HOST_INFO>

<DIMENSION_NAME>
{dimension_name}
</DIMENSION_NAME>

<DIMENSION_VALUES_WITH_IDS>
{dimension_values_with_ids}
</DIMENSION_VALUES_WITH_IDS>

<QUESTION>
{question_text}
</QUESTION>

<QUESTION_CHOICES_WITH_IDS>
{question_choices_with_ids}
</QUESTION_CHOICES_WITH_IDS>

Definition:
A puzzle dimension is a hidden aspect of the scenario where knowing its true value would explain the puzzle.
<DIMENSION_VALUES_WITH_IDS> is a list of dicts with "id" and "text" fields. Each dict corresponds to a possible value that the <DIMENSION_NAME> could take.
<QUESTION_CHOICES_WITH_IDS> is a list of dicts with "id" and "text" fields. Each dict corresponds to a multiple-choice answer option for the question.
Let values[i] be the i-th element of <DIMENSION_VALUES_WITH_IDS>.
Let choices[j] be the j-th element of <QUESTION_CHOICES_WITH_IDS>.

Task (row-major order):
For i = 0..len(values)-1:
  For j = 0..len(choices)-1:
    - Assume the hidden explanation is such that <DIMENSION_NAME> = values[i]["text"].
    - Impersonate the puzzle host described in <HOST_INFO>, who knows the hidden explanation.
    - Judge how likely it is that this host would answer the question <QUESTION> with choices[j]["text"] under that assumption. The host answers truthfully.

What to generate:
For i = 0..len(values)-1:
  For j = 0..len(choices)-1:
    - question_choice_id: the id of the question choice being evaluated, i.e. choices[j]["id"]
    - dimension_value_id: the id of the dimension value being evaluated, i.e. values[i]["id"]
    - reason: a short one-sentence explanation of why choices[j] is labeled likely/neutral/unlikely and why the other two labels were not chosen.
    - label: one of "likely", "neutral", or "unlikely" according to the following definitions:
      - "likely": Given <DIMENSION_NAME> = values[i]["text"] and the host acts according to <HOST_INFO>, the host is expected to give choices[j]["text"] for <QUESTION>.
      - "neutral": Given <DIMENSION_NAME> = values[i]["text"] and the host acts according to <HOST_INFO>, choices[j]["text"] is plausible but not specifically supported; there is insufficient evidence to say that the host would or would not prefer it over other choices.
      - "unlikely": Given <DIMENSION_NAME> = values[i]["text"] and the host acts according to <HOST_INFO>, the host is not expected to give choices[j]["text"] for <QUESTION>.

Constraints:
- Use ONLY the information provided in <PUZZLE>, <PUZZLE_CONTEXT>, and <HOST_INFO>.
- Do NOT solve the <PUZZLE> itself. Focus ONLY on judging the likelihood of the question choices under different assumptions about the dimension value.
- Do NOT rewrite or restate the <PUZZLE>, <PUZZLE_CONTEXT>, or <HOST_INFO>.
- label must be one of "likely", "neutral", or "unlikely".
- The output must include an entry for every combination of dimension value and question choice.

Output format:
Return STRICT JSON only with the following schema:
{{
    "evaluations": [
        [
            {{
                "question_choice_id": string,
                "dimension_value_id": string,
                "reason": string,
                "label": string
            }},
            ... // one object for each question choice
        ],
        ... // one array for each dimension value
    ]
}}
The "evaluations" field must contain exactly {num_dimension_values} arrays (one per dimension value).
Each inner array must contain exactly {num_question_choices} objects (one per question choice).
\end{lstlisting}

\paragraph{With conversation history.}\leavevmode\par
\begin{lstlisting}[style=prompt]
<PUZZLE>
{ambiguous_prompt}
</PUZZLE>

<PUZZLE_CONTEXT>
{meta_context}
</PUZZLE_CONTEXT>

<HOST_INFO>
{user_info}
</HOST_INFO>

<CONVERSATION_LOG>
{conversation_log}
</CONVERSATION_LOG>

<DIMENSION_NAME>
{dimension_name}
</DIMENSION_NAME>

<DIMENSION_VALUES_WITH_IDS>
{dimension_values_with_ids}
</DIMENSION_VALUES_WITH_IDS>

<QUESTION>
{question_text}
</QUESTION>

<QUESTION_CHOICES_WITH_IDS>
{question_choices_with_ids}
</QUESTION_CHOICES_WITH_IDS>

Definition:
A puzzle dimension is a hidden aspect of the scenario where knowing its true value would explain the puzzle.
<DIMENSION_VALUES_WITH_IDS> is a list of dicts with "id" and "text" fields. Each dict corresponds to a possible value that the <DIMENSION_NAME> could take.
<QUESTION_CHOICES_WITH_IDS> is a list of dicts with "id" and "text" fields. Each dict corresponds to a multiple-choice answer option for the question.
<CONVERSATION_LOG> is a list of dicts with "question_text", "user_name", and "user_answer" fields, representing the history of the conversation between the solver and the host up to this point. This information may provide additional clues.
Let values[i] be the i-th element of <DIMENSION_VALUES_WITH_IDS>.
Let choices[j] be the j-th element of <QUESTION_CHOICES_WITH_IDS>.

Task (row-major order):
For i = 0..len(values)-1:
  For j = 0..len(choices)-1:
    - Assume the hidden explanation is such that <DIMENSION_NAME> = values[i]["text"].
    - Impersonate the puzzle host described in <HOST_INFO>, who knows the hidden explanation.
    - Judge how likely it is that this host would answer the question <QUESTION> with choices[j]["text"] under that assumption. The host answers truthfully.

What to generate:
For i = 0..len(values)-1:
  For j = 0..len(choices)-1:
    - question_choice_id: the id of the question choice being evaluated, i.e. choices[j]["id"]
    - dimension_value_id: the id of the dimension value being evaluated, i.e. values[i]["id"]
    - reason: a short one-sentence explanation of why choices[j] is labeled likely/neutral/unlikely and why the other two labels were not chosen.
    - label: one of "likely", "neutral", or "unlikely" according to the following definitions:
      - "likely": Given <DIMENSION_NAME> = values[i]["text"] and the host acts according to <HOST_INFO>, the host is expected to give choices[j]["text"] for <QUESTION>.
      - "neutral": Given <DIMENSION_NAME> = values[i]["text"] and the host acts according to <HOST_INFO>, choices[j]["text"] is plausible but not specifically supported; there is insufficient evidence to say that the host would or would not prefer it over other choices.
      - "unlikely": Given <DIMENSION_NAME> = values[i]["text"] and the host acts according to <HOST_INFO>, the host is not expected to give choices[j]["text"] for <QUESTION>.

Constraints:
- Use ONLY the information provided in <PUZZLE>, <PUZZLE_CONTEXT>, <HOST_INFO>, and <CONVERSATION_LOG>.
- Do NOT solve the <PUZZLE> itself. Focus ONLY on judging the likelihood of the question choices under different assumptions about the dimension value.
- Do NOT rewrite or restate the <PUZZLE>, <PUZZLE_CONTEXT>, <HOST_INFO>, or <CONVERSATION_LOG>.
- label must be one of "likely", "neutral", or "unlikely".
- The output must include an entry for every combination of dimension value and question choice.

Output format:
Return STRICT JSON only with the following schema:
{{
    "evaluations": [
        [
            {{
                "question_choice_id": string,
                "dimension_value_id": string,
                "reason": string,
                "label": string
            }},
            ... // one object for each question choice
        ],
        ... // one array for each dimension value
    ]
}}
The "evaluations" field must contain exactly {num_dimension_values} arrays (one per dimension value).
Each inner array must contain exactly {num_question_choices} objects (one per question choice).
\end{lstlisting}

\subsubsection*{iCraft-MD}

\paragraph{System prompt.}\leavevmode\par
\begin{lstlisting}[style=prompt]
You are an experienced physician evaluating how a patient would likely respond to a clinical question under different assumptions about their underlying condition.
\end{lstlisting}

\paragraph{Without conversation history.}\leavevmode\par
\begin{lstlisting}[style=prompt]
<CLINICAL_QUESTION>
{ambiguous_prompt}
</CLINICAL_QUESTION>

<PATIENT_INFORMATION>
{meta_context}
</PATIENT_INFORMATION>

<PATIENT_PROFILE>
{user_info}
</PATIENT_PROFILE>

<DIMENSION_NAME>
{dimension_name}
</DIMENSION_NAME>

<DIMENSION_VALUES_WITH_IDS>
{dimension_values_with_ids}
</DIMENSION_VALUES_WITH_IDS>

<QUESTION>
{question_text}
</QUESTION>

<QUESTION_CHOICES_WITH_IDS>
{question_choices_with_ids}
</QUESTION_CHOICES_WITH_IDS>

Definition:
A clinical dimension is a specific clinical factor where different values would point toward different diagnoses or clinical decisions.
<DIMENSION_VALUES_WITH_IDS> is a list of dicts with "id" and "text" fields. Each dict corresponds to a possible value that the <DIMENSION_NAME> could take.
<QUESTION_CHOICES_WITH_IDS> is a list of dicts with "id" and "text" fields. Each dict corresponds to a multiple-choice answer option for the question.
Let values[i] be the i-th element of <DIMENSION_VALUES_WITH_IDS>.
Let choices[j] be the j-th element of <QUESTION_CHOICES_WITH_IDS>.

Task (row-major order):
For i = 0..len(values)-1:
  For j = 0..len(choices)-1:
    - Assume the patient's true clinical state is <DIMENSION_NAME> = values[i]["text"].
    - Impersonate the patient described in <PATIENT_PROFILE>.
    - Judge how likely it is that this patient would answer the question <QUESTION> with choices[j]["text"] under that assumption.

What to generate:
For i = 0..len(values)-1:
  For j = 0..len(choices)-1:
    - question_choice_id: the id of the question choice being evaluated, i.e. choices[j]["id"]
    - dimension_value_id: the id of the dimension value being evaluated, i.e. values[i]["id"]
    - reason: a short one-sentence explanation of why choices[j] is labeled likely/neutral/unlikely and why the other two labels were not chosen.
    - label: one of "likely", "neutral", or "unlikely" according to the following definitions:
      - "likely": Given <DIMENSION_NAME> = values[i]["text"] and the patient acts according to <PATIENT_PROFILE>, the patient is expected to give choices[j]["text"] for <QUESTION>.
      - "neutral": Given <DIMENSION_NAME> = values[i]["text"] and the patient acts according to <PATIENT_PROFILE>, choices[j]["text"] is plausible but not specifically supported; there is insufficient evidence to say that the patient would or would not prefer it over other choices.
      - "unlikely": Given <DIMENSION_NAME> = values[i]["text"] and the patient acts according to <PATIENT_PROFILE>, the patient is not expected to give choices[j]["text"] for <QUESTION>.

Constraints:
- Use ONLY the information provided in <CLINICAL_QUESTION>, <PATIENT_INFORMATION>, and <PATIENT_PROFILE>.
- Do NOT answer the <CLINICAL_QUESTION> itself. Focus ONLY on judging the likelihood of the question choices under different assumptions about the dimension value.
- Do NOT rewrite or restate the <CLINICAL_QUESTION>, <PATIENT_INFORMATION>, or <PATIENT_PROFILE>.
- label must be one of "likely", "neutral", or "unlikely".
- The output must include an entry for every combination of dimension value and question choice.

Output format:
Return STRICT JSON only with the following schema:
{{
    "evaluations": [
        [
            {{
                "question_choice_id": string,
                "dimension_value_id": string,
                "reason": string,
                "label": string
            }},
            ... // one object for each question choice
        ],
        ... // one array for each dimension value
    ]
}}
The "evaluations" field must contain exactly {num_dimension_values} arrays (one per dimension value).
Each inner array must contain exactly {num_question_choices} objects (one per question choice).
\end{lstlisting}

\paragraph{With conversation history.}\leavevmode\par
\begin{lstlisting}[style=prompt]
<CLINICAL_QUESTION>
{ambiguous_prompt}
</CLINICAL_QUESTION>

<PATIENT_INFORMATION>
{meta_context}
</PATIENT_INFORMATION>

<PATIENT_PROFILE>
{user_info}
</PATIENT_PROFILE>

<CLINICAL_INTERVIEW_LOG>
{conversation_log}
</CLINICAL_INTERVIEW_LOG>

<DIMENSION_NAME>
{dimension_name}
</DIMENSION_NAME>

<DIMENSION_VALUES_WITH_IDS>
{dimension_values_with_ids}
</DIMENSION_VALUES_WITH_IDS>

<QUESTION>
{question_text}
</QUESTION>

<QUESTION_CHOICES_WITH_IDS>
{question_choices_with_ids}
</QUESTION_CHOICES_WITH_IDS>

Definition:
A clinical dimension is a specific clinical factor where different values would point toward different diagnoses or clinical decisions.
<DIMENSION_VALUES_WITH_IDS> is a list of dicts with "id" and "text" fields. Each dict corresponds to a possible value that the <DIMENSION_NAME> could take.
<QUESTION_CHOICES_WITH_IDS> is a list of dicts with "id" and "text" fields. Each dict corresponds to a multiple-choice answer option for the question.
<CLINICAL_INTERVIEW_LOG> is a list of dicts with "question_text", "user_name", and "user_answer" fields, representing the history of the clinical interview between the physician and the patient up to this point. This information may reveal additional symptoms or clinical details.
Let values[i] be the i-th element of <DIMENSION_VALUES_WITH_IDS>.
Let choices[j] be the j-th element of <QUESTION_CHOICES_WITH_IDS>.

Task (row-major order):
For i = 0..len(values)-1:
  For j = 0..len(choices)-1:
    - Assume the patient's true clinical state is <DIMENSION_NAME> = values[i]["text"].
    - Impersonate the patient described in <PATIENT_PROFILE>.
    - Judge how likely it is that this patient would answer the question <QUESTION> with choices[j]["text"] under that assumption.

What to generate:
For i = 0..len(values)-1:
  For j = 0..len(choices)-1:
    - question_choice_id: the id of the question choice being evaluated, i.e. choices[j]["id"]
    - dimension_value_id: the id of the dimension value being evaluated, i.e. values[i]["id"]
    - reason: a short one-sentence explanation of why choices[j] is labeled likely/neutral/unlikely and why the other two labels were not chosen.
    - label: one of "likely", "neutral", or "unlikely" according to the following definitions:
      - "likely": Given <DIMENSION_NAME> = values[i]["text"] and the patient acts according to <PATIENT_PROFILE>, the patient is expected to give choices[j]["text"] for <QUESTION>.
      - "neutral": Given <DIMENSION_NAME> = values[i]["text"] and the patient acts according to <PATIENT_PROFILE>, choices[j]["text"] is plausible but not specifically supported; there is insufficient evidence to say that the patient would or would not prefer it over other choices.
      - "unlikely": Given <DIMENSION_NAME> = values[i]["text"] and the patient acts according to <PATIENT_PROFILE>, the patient is not expected to give choices[j]["text"] for <QUESTION>.

Constraints:
- Use ONLY the information provided in <CLINICAL_QUESTION>, <PATIENT_INFORMATION>, <PATIENT_PROFILE>, and <CLINICAL_INTERVIEW_LOG>.
- Do NOT answer the <CLINICAL_QUESTION> itself. Focus ONLY on judging the likelihood of the question choices under different assumptions about the dimension value.
- Do NOT rewrite or restate the <CLINICAL_QUESTION>, <PATIENT_INFORMATION>, <PATIENT_PROFILE>, or <CLINICAL_INTERVIEW_LOG>.
- label must be one of "likely", "neutral", or "unlikely".
- The output must include an entry for every combination of dimension value and question choice.

Output format:
Return STRICT JSON only with the following schema:
{{
    "evaluations": [
        [
            {{
                "question_choice_id": string,
                "dimension_value_id": string,
                "reason": string,
                "label": string
            }},
            ... // one object for each question choice
        ],
        ... // one array for each dimension value
    ]
}}
The "evaluations" field must contain exactly {num_dimension_values} arrays (one per dimension value).
Each inner array must contain exactly {num_question_choices} objects (one per question choice).
\end{lstlisting}

\subsection{Soft-Map Scoring}
\label{app:prompts:softmap}

\subsubsection*{AR-Bench-DC}

\paragraph{System prompt.}\leavevmode\par
\begin{lstlisting}[style=prompt]
You are an experienced detective analyzing a suspect's response to an interrogation question. Your task is to judge how well the suspect's answer maps to each of the predefined answer choices.
\end{lstlisting}

\paragraph{User prompt.}\leavevmode\par
\begin{lstlisting}[style=prompt]
<QUESTION>
{question}
</QUESTION>

<CHOICES_WITH_IDS>
{choices_with_ids}
</CHOICES_WITH_IDS>

<SUSPECT_ANSWER>
{user_answer}
</SUSPECT_ANSWER>

Definition:
<CHOICES_WITH_IDS> is a list of dicts with "id" and "value" fields. Each dict corresponds to a multiple-choice answer option for the question.
Let choices[i] be the i-th element of <CHOICES_WITH_IDS>.

Task:
Judge how well the <SUSPECT_ANSWER> maps to each of the choices in <CHOICES_WITH_IDS> for the question <QUESTION>.

What to generate:
For i = 0..len(choices)-1:
- choice_id: the id of the question choice being evaluated, i.e. choices[i]["id"]
- reason: a short one-sentence explanation of choices[i]["value"] is likely/neutral/unlikely given the <SUSPECT_ANSWER>.
- label: one of "likely", "neutral", or "unlikely" according to the following definitions:
    -"likely": choices[i]["value"] aligns well with the <SUSPECT_ANSWER> and fits it better than most other choices.
    -"neutral": choices[i]["value"] is neither clearly supported nor clearly contradicted by the <SUSPECT_ANSWER>.
    -"unlikely": choices[i]["value"] fits the <SUSPECT_ANSWER> worse than other choices, or conflicts with the meaning of the <SUSPECT_ANSWER>.

Constraints:
- Use ONLY the information provided in <QUESTION>, <CHOICES_WITH_IDS>, and <SUSPECT_ANSWER>.
- Do NOT answer the <QUESTION> itself. Focus ONLY on judging how well the <SUSPECT_ANSWER> maps to the provided choices.
- Do NOT rewrite or restate the <QUESTION> or <SUSPECT_ANSWER>.
- label must be one of "likely", "neutral", or "unlikely".
- The output must include an entry for every choice in <CHOICES_WITH_IDS>.

Output format:
Return STRICT JSON only with the following schema:
{{
    "scores": [
        {{
            "choice_id": string,
            "reason": string,
            "label": string
        }},
        ...
    ]
}}
\end{lstlisting}

\subsubsection*{AR-Bench-SP}

\paragraph{System prompt.}\leavevmode\par
\begin{lstlisting}[style=prompt]
You are an expert thinking puzzle analyst. Your task is to judge how well a puzzle host's response maps to each of the predefined answer choices. The host gives truthful answers ("yes" or "no").
\end{lstlisting}

\paragraph{User prompt.}\leavevmode\par
\begin{lstlisting}[style=prompt]
<QUESTION>
{question}
</QUESTION>

<CHOICES_WITH_IDS>
{choices_with_ids}
</CHOICES_WITH_IDS>

<HOST_ANSWER>
{user_answer}
</HOST_ANSWER>

Definition:
<CHOICES_WITH_IDS> is a list of dicts with "id" and "value" fields. Each dict corresponds to a multiple-choice answer option for the question.
Let choices[i] be the i-th element of <CHOICES_WITH_IDS>.

Task:
Judge how well the <HOST_ANSWER> maps to each of the choices in <CHOICES_WITH_IDS> for the question <QUESTION>.

What to generate:
For i = 0..len(choices)-1:
- choice_id: the id of the question choice being evaluated, i.e. choices[i]["id"]
- reason: a short one-sentence explanation of choices[i]["value"] is likely/neutral/unlikely given the <HOST_ANSWER>.
- label: one of "likely", "neutral", or "unlikely" according to the following definitions:
    -"likely": choices[i]["value"] aligns well with the <HOST_ANSWER> and fits it better than most other choices.
    -"neutral": choices[i]["value"] is neither clearly supported nor clearly contradicted by the <HOST_ANSWER>.
    -"unlikely": choices[i]["value"] fits the <HOST_ANSWER> worse than other choices, or conflicts with the meaning of the <HOST_ANSWER>.

Constraints:
- Use ONLY the information provided in <QUESTION>, <CHOICES_WITH_IDS>, and <HOST_ANSWER>.
- Do NOT answer the <QUESTION> itself. Focus ONLY on judging how well the <HOST_ANSWER> maps to the provided choices.
- Do NOT rewrite or restate the <QUESTION> or <HOST_ANSWER>.
- label must be one of "likely", "neutral", or "unlikely".
- The output must include an entry for every choice in <CHOICES_WITH_IDS>.

Output format:
Return STRICT JSON only with the following schema:
{{
    "scores": [
        {{
            "choice_id": string,
            "reason": string,
            "label": string
        }},
        ...
    ]
}}
\end{lstlisting}

\subsubsection*{iCraft-MD}

\paragraph{System prompt.}\leavevmode\par
\begin{lstlisting}[style=prompt]
You are an experienced physician interpreting a patient's response to a clinical question. Your task is to judge how well the patient's answer maps to each of the predefined answer choices.
\end{lstlisting}

\paragraph{User prompt.}\leavevmode\par
\begin{lstlisting}[style=prompt]
<QUESTION>
{question}
</QUESTION>

<CHOICES_WITH_IDS>
{choices_with_ids}
</CHOICES_WITH_IDS>

<PATIENT_ANSWER>
{user_answer}
</PATIENT_ANSWER>

Definition:
<CHOICES_WITH_IDS> is a list of dicts with "id" and "value" fields. Each dict corresponds to a multiple-choice answer option for the question.
Let choices[i] be the i-th element of <CHOICES_WITH_IDS>.

Task:
Judge how well the <PATIENT_ANSWER> maps to each of the choices in <CHOICES_WITH_IDS> for the question <QUESTION>.

What to generate:
For i = 0..len(choices)-1:
- choice_id: the id of the question choice being evaluated, i.e. choices[i]["id"]
- reason: a short one-sentence explanation of choices[i]["value"] is likely/neutral/unlikely given the <PATIENT_ANSWER>.
- label: one of "likely", "neutral", or "unlikely" according to the following definitions:
    -"likely": choices[i]["value"] aligns well with the <PATIENT_ANSWER> and fits it better than most other choices.
    -"neutral": choices[i]["value"] is neither clearly supported nor clearly contradicted by the <PATIENT_ANSWER>.
    -"unlikely": choices[i]["value"] fits the <PATIENT_ANSWER> worse than other choices, or conflicts with the meaning of the <PATIENT_ANSWER>.

Constraints:
- Use ONLY the information provided in <QUESTION>, <CHOICES_WITH_IDS>, and <PATIENT_ANSWER>.
- Do NOT answer the <QUESTION> itself. Focus ONLY on judging how well the <PATIENT_ANSWER> maps to the provided choices.
- Do NOT rewrite or restate the <QUESTION> or <PATIENT_ANSWER>.
- label must be one of "likely", "neutral", or "unlikely".
- The output must include an entry for every choice in <CHOICES_WITH_IDS>.

Output format:
Return STRICT JSON only with the following schema:
{{
    "scores": [
        {{
            "choice_id": string,
            "reason": string,
            "label": string
        }},
        ...
    ]
}}
\end{lstlisting}

\subsection{Expand --- New Dimension Proposal}
\label{app:prompts:expand_dim}

\subsubsection*{AR-Bench-DC}

\paragraph{System prompt.}\leavevmode\par
\begin{lstlisting}[style=prompt]
You are an experienced detective who needs to explore a new line of investigation in a murder case. The current investigative dimensions have not been sufficient to identify the murderer, so you must identify a new aspect of the case to investigate.
\end{lstlisting}

\paragraph{User prompt.}\leavevmode\par
\begin{lstlisting}[style=prompt]
<CASE_QUESTION>
{ambiguous_prompt}
</CASE_QUESTION>

<CASE_BACKGROUND>
{meta_context}
</CASE_BACKGROUND>

<PAST_INVESTIGATIVE_DIMENSIONS>
{past_dimensions}
</PAST_INVESTIGATIVE_DIMENSIONS>

<INTERROGATION_LOG>
{conversation_log}
</INTERROGATION_LOG>

Definition:
An investigative dimension is a specific aspect of the murder case where multiple suspects could plausibly be implicated, and resolving it would narrow down the true murderer.
<PAST_INVESTIGATIVE_DIMENSIONS> is a list of dicts with "name" fields, representing the investigative dimensions that have already been explored in the investigation.
<INTERROGATION_LOG> is a list of dicts with "question_text", "user_name", and "user_answer" fields, representing the history of the interrogation between the detective and the suspects up to this point.

Task:
Identify a new investigative dimension in the murder case described by <CASE_QUESTION> that has not been previously identified in <PAST_INVESTIGATIVE_DIMENSIONS>. Use insights from the <INTERROGATION_LOG> to guide your choice.

What to generate:
- reason: a short one-sentence explanation of why this dimension is a critical new line of investigation.
- name: a short, specific label for this investigative dimension (e.g., "Forensic evidence", "Financial motive", "Witness credibility", etc.).
- values: a list of plausible values, no larger than {max_num_values_per_dim}, that this dimension could take in the context of the case.

Constraints:
- Use ONLY the information provided in <CASE_QUESTION>, <CASE_BACKGROUND>, <PAST_INVESTIGATIVE_DIMENSIONS>, and <INTERROGATION_LOG>.
- Do NOT answer the <CASE_QUESTION> itself. Focus ONLY on identifying a new investigative dimension.
- Do NOT rewrite or restate the <CASE_QUESTION>, <CASE_BACKGROUND>, <PAST_INVESTIGATIVE_DIMENSIONS>, or <INTERROGATION_LOG>.
- The generated dimension name must not be the same as any of the names in <PAST_INVESTIGATIVE_DIMENSIONS>.

Output format:
Return STRICT JSON only with the following schema:
{{
    "reason": string,
    "name": string,
    "values": [string, ...]
}}
\end{lstlisting}

\subsubsection*{AR-Bench-SP}

\paragraph{System prompt.}\leavevmode\par
\begin{lstlisting}[style=prompt]
You are an expert thinking puzzle solver. The puzzle dimensions explored so far have not been sufficient to explain the puzzle, so you must identify a new hidden aspect of the scenario to investigate.
\end{lstlisting}

\paragraph{User prompt.}\leavevmode\par
\begin{lstlisting}[style=prompt]
<PUZZLE>
{ambiguous_prompt}
</PUZZLE>

<PUZZLE_CONTEXT>
{meta_context}
</PUZZLE_CONTEXT>

<PAST_PUZZLE_DIMENSIONS>
{past_dimensions}
</PAST_PUZZLE_DIMENSIONS>

<CONVERSATION_LOG>
{conversation_log}
</CONVERSATION_LOG>

Definition:
A puzzle dimension is a hidden aspect of the scenario where knowing its true value would explain the puzzle.
<PAST_PUZZLE_DIMENSIONS> is a list of dicts with "name" fields, representing the puzzle dimensions that have already been explored.
<CONVERSATION_LOG> is a list of dicts with "question_text", "user_name", and "user_answer" fields, representing the history of the conversation between the solver and the host up to this point.

Task:
Identify a new puzzle dimension in <PUZZLE> that has not been previously identified in <PAST_PUZZLE_DIMENSIONS>. Use insights from the <CONVERSATION_LOG> to guide your choice.

What to generate:
- reason: a short one-sentence explanation of why this dimension is a key unknown in the puzzle.
- name: a short, specific label for this puzzle dimension.
- values: a list of plausible interpretations, no larger than {max_num_values_per_dim}, that this dimension could take.

Constraints:
- Use ONLY the information provided in <PUZZLE>, <PUZZLE_CONTEXT>, <PAST_PUZZLE_DIMENSIONS>, and <CONVERSATION_LOG>.
- Do NOT solve the <PUZZLE> itself. Focus ONLY on identifying a new puzzle dimension.
- Do NOT rewrite or restate the <PUZZLE>, <PUZZLE_CONTEXT>, <PAST_PUZZLE_DIMENSIONS>, or <CONVERSATION_LOG>.
- The generated dimension name must not be the same as any of the names in <PAST_PUZZLE_DIMENSIONS>.

Output format:
Return STRICT JSON only with the following schema:
{{
    "reason": string,
    "name": string,
    "values": [string, ...]
}}
\end{lstlisting}

\subsubsection*{iCraft-MD}

\paragraph{System prompt.}\leavevmode\par
\begin{lstlisting}[style=prompt]
You are an experienced physician who needs to explore a new line of clinical inquiry. The clinical dimensions investigated so far have not been sufficient to arrive at a definitive diagnosis, so you must identify a new clinical factor to assess.
\end{lstlisting}

\paragraph{User prompt.}\leavevmode\par
\begin{lstlisting}[style=prompt]
<CLINICAL_QUESTION>
{ambiguous_prompt}
</CLINICAL_QUESTION>

<PATIENT_INFORMATION>
{meta_context}
</PATIENT_INFORMATION>

<PAST_CLINICAL_DIMENSIONS>
{past_dimensions}
</PAST_CLINICAL_DIMENSIONS>

<CLINICAL_INTERVIEW_LOG>
{conversation_log}
</CLINICAL_INTERVIEW_LOG>

Definition:
A clinical dimension is a specific clinical factor where different values would point toward different diagnoses or clinical decisions.
<PAST_CLINICAL_DIMENSIONS> is a list of dicts with "name" fields, representing the clinical dimensions that have already been assessed in the patient interview.
<CLINICAL_INTERVIEW_LOG> is a list of dicts with "question_text", "user_name", and "user_answer" fields, representing the history of the clinical interview between the physician and the patient up to this point.

Task:
Identify a new clinical dimension relevant to the <CLINICAL_QUESTION> that has not been previously identified in <PAST_CLINICAL_DIMENSIONS>. Use insights from the <CLINICAL_INTERVIEW_LOG> to guide your choice — the patient's answers may reveal the need to investigate additional clinical factors.

What to generate:
- reason: a short one-sentence explanation of why this clinical factor is important for narrowing the diagnosis.
- name: a short, specific clinical label for this dimension.
- values: a list of clinically plausible values, no larger than {max_num_values_per_dim}, that this dimension could take.

Constraints:
- Use ONLY the information provided in <CLINICAL_QUESTION>, <PATIENT_INFORMATION>, <PAST_CLINICAL_DIMENSIONS>, and <CLINICAL_INTERVIEW_LOG>.
- Do NOT answer the <CLINICAL_QUESTION> itself. Focus ONLY on identifying a new clinical dimension.
- Do NOT rewrite or restate the <CLINICAL_QUESTION>, <PATIENT_INFORMATION>, <PAST_CLINICAL_DIMENSIONS>, or <CLINICAL_INTERVIEW_LOG>.
- The generated dimension name must not be the same as any of the names in <PAST_CLINICAL_DIMENSIONS>.

Output format:
Return STRICT JSON only with the following schema:
{{
    "reason": string,
    "name": string,
    "values": [string, ...]
}}
\end{lstlisting}

\subsection{Expand --- Prior Elicitation}
\label{app:prompts:expand_priors}

\subsubsection*{AR-Bench-DC}

\paragraph{System prompt.}\leavevmode\par
\begin{lstlisting}[style=prompt]
You are an experienced detective forming a hypothesis about a newly identified aspect of a murder case, taking into account both the case background and what has been revealed during the interrogation so far.
\end{lstlisting}

\paragraph{User prompt.}\leavevmode\par
\begin{lstlisting}[style=prompt]
<CASE_QUESTION>
{ambiguous_prompt}
</CASE_QUESTION>

<CASE_BACKGROUND>
{meta_context}
</CASE_BACKGROUND>

<INTERROGATION_LOG>
{conversation_log}
</INTERROGATION_LOG>

<DIMENSION_NAME>
{dimension_name}
</DIMENSION_NAME>

<DIMENSION_VALUE>
{dimension_value}
</DIMENSION_VALUE>

Definition:
An investigative dimension is a specific aspect of the murder case where multiple suspects could plausibly be implicated, and resolving it would narrow down the true murderer.
<INTERROGATION_LOG> is a list of dicts with "question_text", "user_name", and "user_answer" fields, representing the history of the interrogation between the detective and the suspects up to this point.

Task:
Given <CASE_QUESTION>, <CASE_BACKGROUND>, <INTERROGATION_LOG>, and a specific investigative dimension defined by <DIMENSION_NAME> and <DIMENSION_VALUE>, judge how likely it is that the <DIMENSION_NAME> takes on the value <DIMENSION_VALUE>.

What to generate:
- reason: a short one-sentence explanation of why the <DIMENSION_NAME> is likely, unlikely, or neutral to take on the value <DIMENSION_VALUE>.
- label: one of "likely", "unlikely", or "neutral" according to the following definitions:
    - likely: <DIMENSION_VALUE> is explicitly stated, strongly implied, or is the most natural assumption given the evidence in the <CASE_QUESTION>, <CASE_BACKGROUND>, and <INTERROGATION_LOG>.
    - neutral: <DIMENSION_VALUE> is plausible but not implied or supported by specific evidence in the <CASE_QUESTION>, <CASE_BACKGROUND>, or <INTERROGATION_LOG>.
    - unlikely: <DIMENSION_VALUE> is contradicted by the <CASE_QUESTION>, <CASE_BACKGROUND> or <INTERROGATION_LOG>, or would require assumptions that are inconsistent with the available evidence.

Constraints:
- Use ONLY the information provided in <CASE_QUESTION>, <CASE_BACKGROUND>, and <INTERROGATION_LOG>.
- Do NOT answer the <CASE_QUESTION> itself. Focus ONLY on judging the likelihood of the dimension value.
- Do NOT rewrite or restate the <CASE_QUESTION>, <CASE_BACKGROUND>, or <INTERROGATION_LOG>.
- label must be one of "likely", "unlikely", or "neutral".

Output format:
Return STRICT JSON only with the following schema:
{{
    "reason": string,
    "label": string
}}
\end{lstlisting}

\subsubsection*{AR-Bench-SP}

\paragraph{System prompt.}\leavevmode\par
\begin{lstlisting}[style=prompt]
You are an expert thinking puzzle solver forming a hypothesis about a newly identified hidden aspect of the puzzle, taking into account both the puzzle scenario and what the host has revealed so far.
\end{lstlisting}

\paragraph{User prompt.}\leavevmode\par
\begin{lstlisting}[style=prompt]
<PUZZLE>
{ambiguous_prompt}
</PUZZLE>

<PUZZLE_CONTEXT>
{meta_context}
</PUZZLE_CONTEXT>

<CONVERSATION_LOG>
{conversation_log}
</CONVERSATION_LOG>

<DIMENSION_NAME>
{dimension_name}
</DIMENSION_NAME>

<DIMENSION_VALUE>
{dimension_value}
</DIMENSION_VALUE>

Definition:
A puzzle dimension is a hidden aspect of the scenario where knowing its true value would explain the puzzle.
<CONVERSATION_LOG> is a list of dicts with "question_text", "user_name", and "user_answer" fields, representing the history of the conversation between the solver and the host up to this point.

Task:
Given <PUZZLE>, <PUZZLE_CONTEXT>, <CONVERSATION_LOG>, and a specific puzzle dimension defined by <DIMENSION_NAME> and <DIMENSION_VALUE>, judge how likely it is that the <DIMENSION_NAME> takes on the value <DIMENSION_VALUE>.

What to generate:
- reason: a short one-sentence explanation of why the <DIMENSION_NAME> is likely, unlikely, or neutral to take on the value <DIMENSION_VALUE>.
- label: one of "likely", "unlikely", or "neutral" according to the following definitions:
    - likely: <DIMENSION_VALUE> is explicitly suggested by, strongly implied by, or is the most natural interpretation given the clues in the <PUZZLE>, <PUZZLE_CONTEXT>, and <CONVERSATION_LOG>.
    - neutral: <DIMENSION_VALUE> is plausible but not implied or supported by specific clues in the <PUZZLE>, <PUZZLE_CONTEXT>, or <CONVERSATION_LOG>.
    - unlikely: <DIMENSION_VALUE> is contradicted by the <PUZZLE>, <PUZZLE_CONTEXT> or <CONVERSATION_LOG>, or would require assumptions that are inconsistent with the scenario.

Constraints:
- Use ONLY the information provided in <PUZZLE>, <PUZZLE_CONTEXT>, and <CONVERSATION_LOG>.
- Do NOT solve the <PUZZLE> itself. Focus ONLY on judging the likelihood of the dimension value.
- Do NOT rewrite or restate the <PUZZLE>, <PUZZLE_CONTEXT>, or <CONVERSATION_LOG>.
- label must be one of "likely", "unlikely", or "neutral".

Output format:
Return STRICT JSON only with the following schema:
{{
    "reason": string,
    "label": string
}}
\end{lstlisting}

\subsubsection*{iCraft-MD}

\paragraph{System prompt.}\leavevmode\par
\begin{lstlisting}[style=prompt]
You are an experienced physician forming a clinical hypothesis about a newly identified clinical factor, taking into account the patient's baseline information and what has been revealed during the clinical interview so far.
\end{lstlisting}

\paragraph{User prompt.}\leavevmode\par
\begin{lstlisting}[style=prompt]
<CLINICAL_QUESTION>
{ambiguous_prompt}
</CLINICAL_QUESTION>

<PATIENT_INFORMATION>
{meta_context}
</PATIENT_INFORMATION>

<CLINICAL_INTERVIEW_LOG>
{conversation_log}
</CLINICAL_INTERVIEW_LOG>

<DIMENSION_NAME>
{dimension_name}
</DIMENSION_NAME>

<DIMENSION_VALUE>
{dimension_value}
</DIMENSION_VALUE>

Definition:
A clinical dimension is a specific clinical factor where different values would point toward different diagnoses or clinical decisions.
<CLINICAL_INTERVIEW_LOG> is a list of dicts with "question_text", "user_name", and "user_answer" fields, representing the history of the clinical interview between the physician and the patient up to this point.

Task:
Given <CLINICAL_QUESTION>, <PATIENT_INFORMATION>, <CLINICAL_INTERVIEW_LOG>, and a specific clinical dimension defined by <DIMENSION_NAME> and <DIMENSION_VALUE>, judge how likely it is that the <DIMENSION_NAME> takes on the value <DIMENSION_VALUE>.

What to generate:
- reason: a short one-sentence explanation of why the <DIMENSION_NAME> is likely, unlikely, or neutral to take on the value <DIMENSION_VALUE>.
- label: one of "likely", "unlikely", or "neutral" according to the following definitions:
    - likely: <DIMENSION_VALUE> is explicitly stated, strongly implied, or is the most natural clinical assumption given the patient information in <CLINICAL_QUESTION>, <PATIENT_INFORMATION>, and <CLINICAL_INTERVIEW_LOG>.
    - neutral: <DIMENSION_VALUE> is clinically plausible but not implied or supported by specific evidence in the <CLINICAL_QUESTION>, <PATIENT_INFORMATION>, or <CLINICAL_INTERVIEW_LOG>.
    - unlikely: <DIMENSION_VALUE> is contradicted by the <CLINICAL_QUESTION>, <PATIENT_INFORMATION> or <CLINICAL_INTERVIEW_LOG>, or would require assumptions that are inconsistent with the patient's presentation.

Constraints:
- Use ONLY the information provided in <CLINICAL_QUESTION>, <PATIENT_INFORMATION>, and <CLINICAL_INTERVIEW_LOG>.
- Do NOT answer the <CLINICAL_QUESTION> itself. Focus ONLY on judging the likelihood of the dimension value.
- Do NOT rewrite or restate the <CLINICAL_QUESTION>, <PATIENT_INFORMATION>, or <CLINICAL_INTERVIEW_LOG>.
- label must be one of "likely", "unlikely", or "neutral".

Output format:
Return STRICT JSON only with the following schema:
{{
    "reason": string,
    "label": string
}}
\end{lstlisting}

\subsection{Expand --- Question Generation}
\label{app:prompts:expand_questions}

\subsubsection*{AR-Bench-DC}

\paragraph{System prompt.}\leavevmode\par
\begin{lstlisting}[style=prompt]
You are an experienced detective preparing new interrogation questions based on a newly discovered line of investigation and unresolved aspects of the murder case.
\end{lstlisting}

\paragraph{User prompt.}\leavevmode\par
\begin{lstlisting}[style=prompt]
<CASE_QUESTION>
{ambiguous_prompt}
</CASE_QUESTION>

<CASE_BACKGROUND>
{meta_context}
</CASE_BACKGROUND>

<INTERROGATION_LOG>
{conversation_log}
</INTERROGATION_LOG>

<NEW_INVESTIGATIVE_DIMENSION>
{new_dimension_with_values}
</NEW_INVESTIGATIVE_DIMENSION>

<UNRESOLVED_INVESTIGATIVE_DIMENSIONS>
{high_uncertainty_dimensions_with_values}
</UNRESOLVED_INVESTIGATIVE_DIMENSIONS>

Definition:
An investigative dimension is a specific aspect of the murder case where multiple suspects could plausibly be implicated, and resolving it would narrow down the true murderer.
<INTERROGATION_LOG> is a list of dicts with "question_text", "user_name", and "user_answer" fields, representing the history of the interrogation between the detective and the suspects up to this point.
<NEW_INVESTIGATIVE_DIMENSION> is a dict with "name" and "values" fields, representing the newly identified line of investigation along with its possible values.
<UNRESOLVED_INVESTIGATIVE_DIMENSIONS> is a list of dicts with "name" and "values" fields, representing the investigative dimensions that currently have the highest uncertainty. They do not include the new dimension in <NEW_INVESTIGATIVE_DIMENSION>.

Task:
Given <CASE_QUESTION>, <CASE_BACKGROUND>, <INTERROGATION_LOG>, a newly identified investigative dimension in <NEW_INVESTIGATIVE_DIMENSION>, and the most uncertain dimensions in <UNRESOLVED_INVESTIGATIVE_DIMENSIONS>, generate interrogation questions that would help identify the murderer by targeting the new dimension and/or the unresolved dimensions.

What to generate:
Generate at most {max_new_questions_per_round} interrogation questions. For each question, provide:
- reason: a short one-sentence explanation of why this question would help identify the murderer.
- question: the text of the interrogation question.
- choices: a list of multiple-choice answer options for the question, no larger than {max_choices_per_question}.

Constraints:
- Use ONLY the information provided in <CASE_QUESTION>, <CASE_BACKGROUND>, <INTERROGATION_LOG>, <NEW_INVESTIGATIVE_DIMENSION>, and <UNRESOLVED_INVESTIGATIVE_DIMENSIONS>.
- Do NOT answer the <CASE_QUESTION> itself. Focus ONLY on generating interrogation questions.
- Do NOT rewrite or restate the <CASE_QUESTION>, <CASE_BACKGROUND>, <INTERROGATION_LOG>, <NEW_INVESTIGATIVE_DIMENSION>, or <UNRESOLVED_INVESTIGATIVE_DIMENSIONS>.
- Each question must be designed to elicit information about the new dimension in <NEW_INVESTIGATIVE_DIMENSION> and/or the unresolved dimensions in <UNRESOLVED_INVESTIGATIVE_DIMENSIONS>.
- Each question must have multiple-choice answers.
- Generate at most {max_new_questions_per_round} questions.

Output format:
Return STRICT JSON only with the following schema:
{{
    "questions": [
        {{
            "reason": string,
            "question": string,
            "choices": [string, ...]
        }},
        ...
    ]
}}
\end{lstlisting}

\subsubsection*{AR-Bench-SP}

\paragraph{System prompt.}\leavevmode\par
\begin{lstlisting}[style=prompt]
You are an expert thinking puzzle solver generating new questions to ask the puzzle host. Focus on testing specific hypotheses about the hidden explanation, especially targeting newly discovered or still-uncertain aspects of the puzzle.
\end{lstlisting}

\paragraph{User prompt.}\leavevmode\par
\begin{lstlisting}[style=prompt]
<PUZZLE>
{ambiguous_prompt}
</PUZZLE>

<PUZZLE_CONTEXT>
{meta_context}
</PUZZLE_CONTEXT>

<CONVERSATION_LOG>
{conversation_log}
</CONVERSATION_LOG>

<NEW_PUZZLE_DIMENSION>
{new_dimension_with_values}
</NEW_PUZZLE_DIMENSION>

<UNCERTAIN_PUZZLE_DIMENSIONS>
{high_uncertainty_dimensions_with_values}
</UNCERTAIN_PUZZLE_DIMENSIONS>

Definition:
A puzzle dimension is a hidden aspect of the scenario where knowing its true value would explain the puzzle.
<CONVERSATION_LOG> is a list of dicts with "question_text", "user_name", and "user_answer" fields, representing the history of the conversation between the solver and the host up to this point.
<NEW_PUZZLE_DIMENSION> is a dict with "name" and "values" fields, representing the newly identified hidden aspect of the puzzle along with its possible interpretations.
<UNCERTAIN_PUZZLE_DIMENSIONS> is a list of dicts with "name" and "values" fields, representing the puzzle dimensions that currently have the highest uncertainty. They do not include the new dimension in <NEW_PUZZLE_DIMENSION>.

Task:
Given <PUZZLE>, <PUZZLE_CONTEXT>, <CONVERSATION_LOG>, a newly identified puzzle dimension in <NEW_PUZZLE_DIMENSION>, and the most uncertain dimensions in <UNCERTAIN_PUZZLE_DIMENSIONS>, generate questions to ask the puzzle host that would help uncover the hidden explanation by targeting the new dimension and/or the uncertain dimensions.

What to generate:
Generate at most {max_new_questions_per_round} clarifying questions. For each question, provide:
- reason: a short one-sentence explanation of why this question would help solve the puzzle.
- question: the text of the clarifying question to ask the puzzle host.
- choices: ["yes", "no"] as the {max_choices_per_question} multiple-choice answer options for the question.

Constraints:
- Use ONLY the information provided in <PUZZLE>, <PUZZLE_CONTEXT>, <CONVERSATION_LOG>, <NEW_PUZZLE_DIMENSION>, and <UNCERTAIN_PUZZLE_DIMENSIONS>.
- Do NOT solve the <PUZZLE> itself. Focus ONLY on generating clarifying questions.
- Do NOT rewrite or restate the <PUZZLE>, <PUZZLE_CONTEXT>, <CONVERSATION_LOG>, <NEW_PUZZLE_DIMENSION>, or <UNCERTAIN_PUZZLE_DIMENSIONS>.
- Each question must be designed to elicit information about the new dimension in <NEW_PUZZLE_DIMENSION> and/or the uncertain dimensions in <UNCERTAIN_PUZZLE_DIMENSIONS>.
- Each question must have multiple-choice answers.
- Generate at most {max_new_questions_per_round} questions.
- Keep each question short: at most 20 words.
- Keep each reason short: at most 15 words.

Output format:
Return STRICT JSON only with the following schema:
{{
    "questions": [
        {{
            "reason": string,
            "question": string,
            "choices": ["yes", "no"]
        }},
        ...
    ]
}}
\end{lstlisting}

\subsubsection*{iCraft-MD}

\paragraph{System prompt.}\leavevmode\par
\begin{lstlisting}[style=prompt]
You are an experienced physician preparing additional clinical questions for a patient interview based on a newly identified clinical factor and unresolved aspects of the diagnosis.
\end{lstlisting}

\paragraph{User prompt.}\leavevmode\par
\begin{lstlisting}[style=prompt]
<CLINICAL_QUESTION>
{ambiguous_prompt}
</CLINICAL_QUESTION>

<PATIENT_INFORMATION>
{meta_context}
</PATIENT_INFORMATION>

<CLINICAL_INTERVIEW_LOG>
{conversation_log}
</CLINICAL_INTERVIEW_LOG>

<NEW_CLINICAL_DIMENSION>
{new_dimension_with_values}
</NEW_CLINICAL_DIMENSION>

<UNRESOLVED_CLINICAL_DIMENSIONS>
{high_uncertainty_dimensions_with_values}
</UNRESOLVED_CLINICAL_DIMENSIONS>

Definition:
A clinical dimension is a specific clinical factor where different values would point toward different diagnoses or clinical decisions.
<CLINICAL_INTERVIEW_LOG> is a list of dicts with "question_text", "user_name", and "user_answer" fields, representing the history of the clinical interview between the physician and the patient up to this point.
<NEW_CLINICAL_DIMENSION> is a dict with "name" and "values" fields, representing the newly identified clinical factor along with its possible values.
<UNRESOLVED_CLINICAL_DIMENSIONS> is a list of dicts with "name" and "values" fields, representing the clinical dimensions that currently have the highest diagnostic uncertainty. They do not include the new dimension in <NEW_CLINICAL_DIMENSION>.

Task:
Given <CLINICAL_QUESTION>, <PATIENT_INFORMATION>, <CLINICAL_INTERVIEW_LOG>, a newly identified clinical dimension in <NEW_CLINICAL_DIMENSION>, and the most uncertain dimensions in <UNRESOLVED_CLINICAL_DIMENSIONS>, generate clinical questions to ask the patient that would help narrow the diagnosis by targeting the new dimension and/or the unresolved dimensions.

What to generate:
Generate at most {max_new_questions_per_round} clinical questions. For each question, provide:
- reason: a short one-sentence explanation of why this question would help narrow the diagnosis.
- question: the text of the clinical question to ask the patient.
- choices: a list of multiple-choice answer options for the question, no larger than {max_choices_per_question}.

Constraints:
- Use ONLY the information provided in <CLINICAL_QUESTION>, <PATIENT_INFORMATION>, <CLINICAL_INTERVIEW_LOG>, <NEW_CLINICAL_DIMENSION>, and <UNRESOLVED_CLINICAL_DIMENSIONS>.
- Do NOT answer the <CLINICAL_QUESTION> itself. Focus ONLY on generating clinical questions.
- Do NOT rewrite or restate the <CLINICAL_QUESTION>, <PATIENT_INFORMATION>, <CLINICAL_INTERVIEW_LOG>, <NEW_CLINICAL_DIMENSION>, or <UNRESOLVED_CLINICAL_DIMENSIONS>.
- Each question must be designed to elicit information about the new dimension in <NEW_CLINICAL_DIMENSION> and/or the unresolved dimensions in <UNRESOLVED_CLINICAL_DIMENSIONS>.
- Each question must have multiple-choice answers.
- Generate at most {max_new_questions_per_round} questions.

Output format:
Return STRICT JSON only with the following schema:
{{
    "questions": [
        {{
            "reason": string,
            "question": string,
            "choices": [string, ...]
        }},
        ...
    ]
}}
\end{lstlisting}

\subsection{Final Answer}
\label{app:prompts:final}

\subsubsection*{AR-Bench-DC}

\paragraph{System prompt.}\leavevmode\par
\begin{lstlisting}[style=prompt]
You are an experienced detective concluding a murder investigation. Based on all the evidence gathered from interrogating the suspects and your analysis of the case, you must now identify the real murderer.
\end{lstlisting}

\paragraph{Without discrete choices.}\leavevmode\par
\begin{lstlisting}[style=prompt]
<CASE_QUESTION>
{ambiguous_prompt}
</CASE_QUESTION>

<CASE_BACKGROUND>
{meta_context}
</CASE_BACKGROUND>

<INTERROGATION_LOG>
{conversation_log}
</INTERROGATION_LOG>

<INVESTIGATION_CONCLUSION>
{map_state}
</INVESTIGATION_CONCLUSION>

Definition:
<INTERROGATION_LOG> is a list of dicts with "question_text", "user_name", and "user_answer" fields, representing the full history of the interrogation between the detective and the suspects.
An investigative dimension is a specific aspect of the murder case where multiple suspects could plausibly be implicated, and resolving it would narrow down the true murderer.
<INVESTIGATION_CONCLUSION> is a structured representation of the detective's current understanding of the case, where each investigative dimension is mapped to its most likely value. This represents the detective's best assessment of the true state of the case based on the investigation so far.

Task:
Given <CASE_QUESTION>, <CASE_BACKGROUND>, <INTERROGATION_LOG>, and <INVESTIGATION_CONCLUSION>, identify the real murderer.

What to generate:
- reason: a short one-sentence explanation of why the identified suspect is the real murderer given the evidence in <CASE_QUESTION>, <CASE_BACKGROUND>, <INTERROGATION_LOG>, and <INVESTIGATION_CONCLUSION>.
- final_answer: the name of the suspect you are identifying as the real murderer.

Constraints:
- The final answer must be consistent with the evidence in <CASE_QUESTION>, <CASE_BACKGROUND>, <INTERROGATION_LOG>, and <INVESTIGATION_CONCLUSION>.
- Do NOT rewrite or restate the <CASE_QUESTION>, <CASE_BACKGROUND>, <INTERROGATION_LOG>, or <INVESTIGATION_CONCLUSION>.
- answer must be a natural language answer to the <CASE_QUESTION>.

Output format:
Return STRICT JSON only with the following schema:
{{
    "reason": string,
    "final_answer": string
}}
\end{lstlisting}

\paragraph{With discrete choices.}\leavevmode\par
\begin{lstlisting}[style=prompt]
<CASE_QUESTION>
{ambiguous_prompt}
</CASE_QUESTION>

<CASE_BACKGROUND>
{meta_context}
</CASE_BACKGROUND>

<INTERROGATION_LOG>
{conversation_log}
</INTERROGATION_LOG>

<INVESTIGATION_CONCLUSION>
{map_state}
</INVESTIGATION_CONCLUSION>

<SUSPECTS_WITH_IDS>
{possible_answers_with_ids}
</SUSPECTS_WITH_IDS>

Definition:
<INTERROGATION_LOG> is a list of dicts with "question_text", "user_name", and "user_answer" fields, representing the full history of the interrogation between the detective and the suspects.
An investigative dimension is a specific aspect of the murder case where multiple suspects could plausibly be implicated, and resolving it would narrow down the true murderer.
<INVESTIGATION_CONCLUSION> is a structured representation of the detective's current understanding of the case, where each investigative dimension is mapped to its most likely value. This represents the detective's best assessment of the true state of the case based on the investigation so far.
<SUSPECTS_WITH_IDS> is a list of dicts with "id" and "value" fields. Each dict corresponds to a suspect who could be the murderer.
Let possible_answers[i] be the i-th element of <SUSPECTS_WITH_IDS>.

Task:
Given <CASE_QUESTION>, <CASE_BACKGROUND>, <INTERROGATION_LOG>, and <INVESTIGATION_CONCLUSION>, identify which suspect from <SUSPECTS_WITH_IDS> is the real murderer.

What to generate:
- reason: a short one-sentence explanation of why the identified suspect is the real murderer given the evidence in <CASE_QUESTION>, <CASE_BACKGROUND>, <INTERROGATION_LOG>, and <INVESTIGATION_CONCLUSION>.
- final_answer_id: the id of the suspect in <SUSPECTS_WITH_IDS> that you are identifying as the real murderer.

Constraints:
- The final answer must be consistent with the evidence in <CASE_QUESTION>, <CASE_BACKGROUND>, <INTERROGATION_LOG>, and <INVESTIGATION_CONCLUSION>.
- Do NOT rewrite or restate the <CASE_QUESTION>, <CASE_BACKGROUND>, <INTERROGATION_LOG>, or <INVESTIGATION_CONCLUSION>.
- final_answer_id must be one of the ids (i.e possible_answers[i]["id"]) provided in <SUSPECTS_WITH_IDS>.

Output format:
Return STRICT JSON only with the following schema:
{{
    "reason": string,
    "final_answer_id": string
}}
\end{lstlisting}

\subsubsection*{AR-Bench-SP}

\paragraph{System prompt.}\leavevmode\par
\begin{lstlisting}[style=prompt]
You are an expert thinking puzzle solver. Based on all the clues gathered from the puzzle host's responses and your analysis of the puzzle dimensions, you must now provide the hidden explanation of the puzzle.
\end{lstlisting}

\paragraph{Without discrete choices.}\leavevmode\par
\begin{lstlisting}[style=prompt]
<PUZZLE>
{ambiguous_prompt}
</PUZZLE>

<PUZZLE_CONTEXT>
{meta_context}
</PUZZLE_CONTEXT>

<CONVERSATION_LOG>
{conversation_log}
</CONVERSATION_LOG>

<PUZZLE_SOLUTION_STATE>
{map_state}
</PUZZLE_SOLUTION_STATE>

Definition:
<CONVERSATION_LOG> is a list of dicts with "question_text", "user_name", and "user_answer" fields, representing the full history of the conversation between the solver and the puzzle host.
A puzzle dimension is a hidden aspect of the scenario where knowing its true value would explain the puzzle.
<PUZZLE_SOLUTION_STATE> is a structured representation of the solver's current understanding of the puzzle, where each puzzle dimension is mapped to its most likely value. This represents the solver's best guess of the hidden explanation based on the clues gathered so far.

Task:
Given <PUZZLE>, <PUZZLE_CONTEXT>, <CONVERSATION_LOG>, and <PUZZLE_SOLUTION_STATE>, provide the hidden explanation of the puzzle.

What to generate:
- reason: a short one-sentence explanation of why this is the correct explanation given the clues in <PUZZLE>, <PUZZLE_CONTEXT>, <CONVERSATION_LOG>, and <PUZZLE_SOLUTION_STATE>.
- final_answer: the hidden explanation of the puzzle.

Constraints:
- The final answer must be consistent with the clues in <PUZZLE>, <PUZZLE_CONTEXT>, <CONVERSATION_LOG>, and <PUZZLE_SOLUTION_STATE>.
- Do NOT rewrite or restate the <PUZZLE>, <PUZZLE_CONTEXT>, <CONVERSATION_LOG>, or <PUZZLE_SOLUTION_STATE>.
- answer must be a natural language explanation of the puzzle.

Output format:
Return STRICT JSON only with the following schema:
{{
    "reason": string,
    "final_answer": string
}}
\end{lstlisting}

\paragraph{With discrete choices.}\leavevmode\par
\begin{lstlisting}[style=prompt]
<PUZZLE>
{ambiguous_prompt}
</PUZZLE>

<PUZZLE_CONTEXT>
{meta_context}
</PUZZLE_CONTEXT>

<CONVERSATION_LOG>
{conversation_log}
</CONVERSATION_LOG>

<PUZZLE_SOLUTION_STATE>
{map_state}
</PUZZLE_SOLUTION_STATE>

<POSSIBLE_EXPLANATIONS_WITH_IDS>
{possible_answers_with_ids}
</POSSIBLE_EXPLANATIONS_WITH_IDS>

Definition:
<CONVERSATION_LOG> is a list of dicts with "question_text", "user_name", and "user_answer" fields, representing the full history of the conversation between the solver and the puzzle host.
A puzzle dimension is a hidden aspect of the scenario where knowing its true value would explain the puzzle.
<PUZZLE_SOLUTION_STATE> is a structured representation of the solver's current understanding of the puzzle, where each puzzle dimension is mapped to its most likely value. This represents the solver's best guess of the hidden explanation based on the clues gathered so far.
<POSSIBLE_EXPLANATIONS_WITH_IDS> is a list of dicts with "id" and "value" fields. Each dict corresponds to a possible explanation of the puzzle.
Let possible_answers[i] be the i-th element of <POSSIBLE_EXPLANATIONS_WITH_IDS>.

Task:
Given <PUZZLE>, <PUZZLE_CONTEXT>, <CONVERSATION_LOG>, and <PUZZLE_SOLUTION_STATE>, select the correct explanation from <POSSIBLE_EXPLANATIONS_WITH_IDS>.

What to generate:
- reason: a short one-sentence explanation of why this is the correct explanation given the clues in <PUZZLE>, <PUZZLE_CONTEXT>, <CONVERSATION_LOG>, and <PUZZLE_SOLUTION_STATE>.
- final_answer_id: the id of the explanation in <POSSIBLE_EXPLANATIONS_WITH_IDS> that you are selecting as the hidden explanation of the puzzle.

Constraints:
- The final answer must be consistent with the clues in <PUZZLE>, <PUZZLE_CONTEXT>, <CONVERSATION_LOG>, and <PUZZLE_SOLUTION_STATE>.
- Do NOT rewrite or restate the <PUZZLE>, <PUZZLE_CONTEXT>, <CONVERSATION_LOG>, or <PUZZLE_SOLUTION_STATE>.
- final_answer_id must be one of the ids (i.e possible_answers[i]["id"]) provided in <POSSIBLE_EXPLANATIONS_WITH_IDS>.

Output format:
Return STRICT JSON only with the following schema:
{{
    "reason": string,
    "final_answer_id": string
}}
\end{lstlisting}

\subsubsection*{iCraft-MD}

\paragraph{System prompt.}\leavevmode\par
\begin{lstlisting}[style=prompt]
You are an experienced physician concluding a clinical assessment. Based on all the clinical information gathered from the patient interview and your analysis of the clinical dimensions, you must now provide your diagnosis or clinical decision.
\end{lstlisting}

\paragraph{Without discrete choices.}\leavevmode\par
\begin{lstlisting}[style=prompt]
<CLINICAL_QUESTION>
{ambiguous_prompt}
</CLINICAL_QUESTION>

<PATIENT_INFORMATION>
{meta_context}
</PATIENT_INFORMATION>

<CLINICAL_INTERVIEW_LOG>
{conversation_log}
</CLINICAL_INTERVIEW_LOG>

<CLINICAL_ASSESSMENT>
{map_state}
</CLINICAL_ASSESSMENT>

Definition:
<CLINICAL_INTERVIEW_LOG> is a list of dicts with "question_text", "user_name", and "user_answer" fields, representing the full history of the clinical interview between the physician and the patient.
A clinical dimension is a specific clinical factor where different values would point toward different diagnoses or clinical decisions.
<CLINICAL_ASSESSMENT> is a structured representation of the physician's current understanding of the patient's condition, where each clinical dimension is mapped to its most likely value. This represents the physician's best assessment of the true clinical state based on the interview so far.

Task:
Given <CLINICAL_QUESTION>, <PATIENT_INFORMATION>, <CLINICAL_INTERVIEW_LOG>, and <CLINICAL_ASSESSMENT>, provide your diagnosis or clinical decision.

What to generate:
- reason: a short one-sentence explanation of why this is the correct diagnosis or clinical decision given the clinical evidence in <CLINICAL_QUESTION>, <PATIENT_INFORMATION>, <CLINICAL_INTERVIEW_LOG>, and <CLINICAL_ASSESSMENT>.
- final_answer: your diagnosis or clinical decision.

Constraints:
- The final answer must be consistent with the clinical evidence in <CLINICAL_QUESTION>, <PATIENT_INFORMATION>, <CLINICAL_INTERVIEW_LOG>, and <CLINICAL_ASSESSMENT>.
- Do NOT rewrite or restate the <CLINICAL_QUESTION>, <PATIENT_INFORMATION>, <CLINICAL_INTERVIEW_LOG>, or <CLINICAL_ASSESSMENT>.
- answer must be a natural language answer to the <CLINICAL_QUESTION>.

Output format:
Return STRICT JSON only with the following schema:
{{
    "reason": string,
    "final_answer": string
}}
\end{lstlisting}

\paragraph{With discrete choices.}\leavevmode\par
\begin{lstlisting}[style=prompt]
<CLINICAL_QUESTION>
{ambiguous_prompt}
</CLINICAL_QUESTION>

<PATIENT_INFORMATION>
{meta_context}
</PATIENT_INFORMATION>

<CLINICAL_INTERVIEW_LOG>
{conversation_log}
</CLINICAL_INTERVIEW_LOG>

<CLINICAL_ASSESSMENT>
{map_state}
</CLINICAL_ASSESSMENT>

<DIAGNOSTIC_OPTIONS_WITH_IDS>
{possible_answers_with_ids}
</DIAGNOSTIC_OPTIONS_WITH_IDS>

Definition:
<CLINICAL_INTERVIEW_LOG> is a list of dicts with "question_text", "user_name", and "user_answer" fields, representing the full history of the clinical interview between the physician and the patient.
A clinical dimension is a specific clinical factor where different values would point toward different diagnoses or clinical decisions.
<CLINICAL_ASSESSMENT> is a structured representation of the physician's current understanding of the patient's condition, where each clinical dimension is mapped to its most likely value. This represents the physician's best assessment of the true clinical state based on the interview so far.
<DIAGNOSTIC_OPTIONS_WITH_IDS> is a list of dicts with "id" and "value" fields. Each dict corresponds to a possible diagnosis or clinical decision.
Let possible_answers[i] be the i-th element of <DIAGNOSTIC_OPTIONS_WITH_IDS>.

Task:
Given <CLINICAL_QUESTION>, <PATIENT_INFORMATION>, <CLINICAL_INTERVIEW_LOG>, and <CLINICAL_ASSESSMENT>, select the correct diagnosis or clinical decision from <DIAGNOSTIC_OPTIONS_WITH_IDS>.

What to generate:
- reason: a short one-sentence explanation of why this is the correct diagnosis or clinical decision given the clinical evidence in <CLINICAL_QUESTION>, <PATIENT_INFORMATION>, <CLINICAL_INTERVIEW_LOG>, and <CLINICAL_ASSESSMENT>.
- final_answer_id: the id of the choice in <DIAGNOSTIC_OPTIONS_WITH_IDS> that you are selecting as your diagnosis or clinical decision.

Constraints:
- The final answer must be consistent with the clinical evidence in <CLINICAL_QUESTION>, <PATIENT_INFORMATION>, <CLINICAL_INTERVIEW_LOG>, and <CLINICAL_ASSESSMENT>.
- Do NOT rewrite or restate the <CLINICAL_QUESTION>, <PATIENT_INFORMATION>, <CLINICAL_INTERVIEW_LOG>, or <CLINICAL_ASSESSMENT>.
- final_answer_id must be one of the ids (i.e possible_answers[i]["id"]) provided in <DIAGNOSTIC_OPTIONS_WITH_IDS>.

Output format:
Return STRICT JSON only with the following schema:
{{
    "reason": string,
    "final_answer_id": string
}}
\end{lstlisting}

\subsection{Answer Likelihood}
\label{app:prompts:answer_lik}

\subsubsection*{AR-Bench-DC}

\paragraph{System prompt.}\leavevmode\par
\begin{lstlisting}[style=prompt]
You are an experienced detective evaluating how likely each suspect is to be the real murderer under different assumptions about the state of the case.
\end{lstlisting}

\paragraph{User prompt.}\leavevmode\par
\begin{lstlisting}[style=prompt]
<CASE_QUESTION>
{ambiguous_prompt}
</CASE_QUESTION>

<CASE_BACKGROUND>
{meta_context}
</CASE_BACKGROUND>

<DIMENSION_NAME>
{dimension_name}
</DIMENSION_NAME>

<DIMENSION_VALUES_WITH_IDS>
{dimension_values_with_ids}
</DIMENSION_VALUES_WITH_IDS>

<SUSPECTS_WITH_IDS>
{possible_answers_with_ids}
</SUSPECTS_WITH_IDS>

Definition:
An investigative dimension is a specific aspect of the murder case where multiple suspects could plausibly be implicated, and resolving it would narrow down the true murderer.
<DIMENSION_VALUES_WITH_IDS> is a list of dicts with "id" and "text" fields. Each dict corresponds to a possible value that the <DIMENSION_NAME> could take.
<SUSPECTS_WITH_IDS> is a list of dicts with "id" and "text" fields. Each dict corresponds to a suspect who could be the real murderer.
Let values[i] be the i-th element of <DIMENSION_VALUES_WITH_IDS>.
Let answers[j] be the j-th element of <SUSPECTS_WITH_IDS>.

Task (row-major order):
For i = 0..len(values)-1:
For j = 0..len(answers)-1:
    - Assume the true state of the case is <DIMENSION_NAME> = values[i]["text"].
    - Judge how likely it is that answers[j]["text"] is the real murderer given that assumption and the evidence in <CASE_BACKGROUND>.

What to generate:
For i = 0..len(values)-1:
For j = 0..len(answers)-1:
    - answer_id: the id of the suspect being evaluated, i.e. answers[j]["id"]
    - dimension_value_id: the id of the dimension value being evaluated, i.e. values[i]["id"]
    - reason: a short one-sentence explanation of why answers[j] is likely/neutral/unlikely to be the real murderer given <DIMENSION_NAME> = values[i]["text"].
    - label: one of "likely", "neutral", or "unlikely" according to the following definitions:
        - "likely": Given <DIMENSION_NAME> = values[i]["text"] and the evidence in <CASE_BACKGROUND>, answers[j]["text"] is the expected real murderer.
        - "neutral": Given <DIMENSION_NAME> = values[i]["text"] and the evidence in <CASE_BACKGROUND>, answers[j]["text"] is a plausible suspect but not specifically implicated; there is insufficient evidence to say this suspect is more or less guilty than others.
        - "unlikely": Given <DIMENSION_NAME> = values[i]["text"] and the evidence in <CASE_BACKGROUND>, answers[j]["text"] is not expected to be the real murderer.

Constraints:
- Use ONLY the information provided in <CASE_QUESTION>, <CASE_BACKGROUND>, and the assumed dimension value.
- Do NOT answer the <CASE_QUESTION> itself. Focus ONLY on judging how likely each suspect is to be the real murderer under different assumptions about the investigative dimension.
- Do NOT rewrite or restate the <CASE_QUESTION> or <CASE_BACKGROUND>.
- label must be one of "likely", "neutral", or "unlikely".
- The output must include an entry for every combination of dimension value and suspect.

Output format:
Return STRICT JSON only with the following schema:
{{
    "evaluations": [
        [
            {{
                "answer_id": string,
                "dimension_value_id": string,
                "reason": string,
                "label": string
            }},
            ... // one object for each suspect
        ],
        ... // one array for each dimension value
    ]
}}
The "evaluations" field must contain exactly {num_dimension_values} arrays (one per dimension value).
Each inner array must contain exactly {num_possible_answers} objects (one per suspect).
\end{lstlisting}

\subsubsection*{AR-Bench-SP}

\paragraph{System prompt.}\leavevmode\par
\begin{lstlisting}[style=prompt]
You are an expert thinking puzzle analyst. Evaluate how likely each candidate explanation is to be the correct hidden explanation of the puzzle under different assumptions about the puzzle's hidden dimensions.
\end{lstlisting}

\paragraph{User prompt.}\leavevmode\par
\begin{lstlisting}[style=prompt]
<PUZZLE>
{ambiguous_prompt}
</PUZZLE>

<PUZZLE_CONTEXT>
{meta_context}
</PUZZLE_CONTEXT>

<DIMENSION_NAME>
{dimension_name}
</DIMENSION_NAME>

<DIMENSION_VALUES_WITH_IDS>
{dimension_values_with_ids}
</DIMENSION_VALUES_WITH_IDS>

<POSSIBLE_EXPLANATIONS_WITH_IDS>
{possible_answers_with_ids}
</POSSIBLE_EXPLANATIONS_WITH_IDS>

Definition:
A puzzle dimension is a hidden aspect of the scenario where knowing its true value would explain the puzzle.
<DIMENSION_VALUES_WITH_IDS> is a list of dicts with "id" and "text" fields. Each dict corresponds to a possible value that the <DIMENSION_NAME> could take.
<POSSIBLE_EXPLANATIONS_WITH_IDS> is a list of dicts with "id" and "text" fields. Each dict corresponds to a candidate hidden explanation of the <PUZZLE>.
Let values[i] be the i-th element of <DIMENSION_VALUES_WITH_IDS>.
Let answers[j] be the j-th element of <POSSIBLE_EXPLANATIONS_WITH_IDS>.

Task (row-major order):
For i = 0..len(values)-1:
For j = 0..len(answers)-1:
    - Assume the hidden explanation is such that <DIMENSION_NAME> = values[i]["text"].
    - Judge how likely it is that answers[j]["text"] is the correct hidden explanation of the <PUZZLE> under that assumption.

What to generate:
For i = 0..len(values)-1:
For j = 0..len(answers)-1:
    - answer_id: the id of the candidate explanation being evaluated, i.e. answers[j]["id"]
    - dimension_value_id: the id of the dimension value being evaluated, i.e. values[i]["id"]
    - reason: a short one-sentence explanation of why answers[j] is likely/neutral/unlikely to be the correct hidden explanation given <DIMENSION_NAME> = values[i]["text"].
    - label: one of "likely", "neutral", or "unlikely" according to the following definitions:
        - "likely": Given <DIMENSION_NAME> = values[i]["text"] and the clues in <PUZZLE_CONTEXT>, answers[j]["text"] is the expected correct hidden explanation of the <PUZZLE>.
        - "neutral": Given <DIMENSION_NAME> = values[i]["text"] and the clues in <PUZZLE_CONTEXT>, answers[j]["text"] is a plausible explanation but not specifically supported; there is insufficient evidence to say it is more or less correct than other explanations.
        - "unlikely": Given <DIMENSION_NAME> = values[i]["text"] and the clues in <PUZZLE_CONTEXT>, answers[j]["text"] is not expected to be the correct hidden explanation of the <PUZZLE>.

Constraints:
- Use ONLY the information provided in <PUZZLE>, <PUZZLE_CONTEXT>, and the assumed dimension value.
- Do NOT solve the <PUZZLE> itself. Focus ONLY on judging how likely each candidate explanation is to be correct under different assumptions about the puzzle dimension.
- Do NOT rewrite or restate the <PUZZLE> or <PUZZLE_CONTEXT>.
- label must be one of "likely", "neutral", or "unlikely".
- The output must include an entry for every combination of dimension value and candidate explanation.

Output format:
Return STRICT JSON only with the following schema:
{{
    "evaluations": [
        [
            {{
                "answer_id": string,
                "dimension_value_id": string,
                "reason": string,
                "label": string
            }},
            ... // one object for each candidate explanation
        ],
        ... // one array for each dimension value
    ]
}}
The "evaluations" field must contain exactly {num_dimension_values} arrays (one per dimension value).
Each inner array must contain exactly {num_possible_answers} objects (one per candidate explanation).
\end{lstlisting}

\subsubsection*{iCraft-MD}

\paragraph{System prompt.}\leavevmode\par
\begin{lstlisting}[style=prompt]
You are an experienced physician evaluating how likely each candidate diagnosis or clinical decision is to be correct under different assumptions about the patient's clinical state.
\end{lstlisting}

\paragraph{User prompt.}\leavevmode\par
\begin{lstlisting}[style=prompt]
<CLINICAL_QUESTION>
{ambiguous_prompt}
</CLINICAL_QUESTION>

<PATIENT_INFORMATION>
{meta_context}
</PATIENT_INFORMATION>

<DIMENSION_NAME>
{dimension_name}
</DIMENSION_NAME>

<DIMENSION_VALUES_WITH_IDS>
{dimension_values_with_ids}
</DIMENSION_VALUES_WITH_IDS>

<DIAGNOSTIC_OPTIONS_WITH_IDS>
{possible_answers_with_ids}
</DIAGNOSTIC_OPTIONS_WITH_IDS>

Definition:
A clinical dimension is a specific clinical factor where different values would point toward different diagnoses or clinical decisions.
<DIMENSION_VALUES_WITH_IDS> is a list of dicts with "id" and "text" fields. Each dict corresponds to a possible value that the <DIMENSION_NAME> could take.
<DIAGNOSTIC_OPTIONS_WITH_IDS> is a list of dicts with "id" and "text" fields. Each dict corresponds to a candidate diagnosis or clinical decision for the <CLINICAL_QUESTION>.
Let values[i] be the i-th element of <DIMENSION_VALUES_WITH_IDS>.
Let answers[j] be the j-th element of <DIAGNOSTIC_OPTIONS_WITH_IDS>.

Task (row-major order):
For i = 0..len(values)-1:
For j = 0..len(answers)-1:
    - Assume the patient's true clinical state is <DIMENSION_NAME> = values[i]["text"].
    - Judge how likely it is that answers[j]["text"] is the correct diagnosis or clinical decision for <CLINICAL_QUESTION> under that assumption.

What to generate:
For i = 0..len(values)-1:
For j = 0..len(answers)-1:
    - answer_id: the id of the candidate diagnosis being evaluated, i.e. answers[j]["id"]
    - dimension_value_id: the id of the dimension value being evaluated, i.e. values[i]["id"]
    - reason: a short one-sentence explanation of why answers[j] is likely/neutral/unlikely to be the correct diagnosis given <DIMENSION_NAME> = values[i]["text"].
    - label: one of "likely", "neutral", or "unlikely" according to the following definitions:
        - "likely": Given <DIMENSION_NAME> = values[i]["text"] and the patient information in <PATIENT_INFORMATION>, answers[j]["text"] is the expected correct diagnosis or clinical decision for <CLINICAL_QUESTION>.
        - "neutral": Given <DIMENSION_NAME> = values[i]["text"] and the patient information in <PATIENT_INFORMATION>, answers[j]["text"] is a plausible diagnosis but not specifically supported; there is insufficient clinical evidence to say it is more or less correct than other options.
        - "unlikely": Given <DIMENSION_NAME> = values[i]["text"] and the patient information in <PATIENT_INFORMATION>, answers[j]["text"] is not expected to be the correct diagnosis or clinical decision for <CLINICAL_QUESTION>.

Constraints:
- Use ONLY the information provided in <CLINICAL_QUESTION>, <PATIENT_INFORMATION>, and the assumed dimension value.
- Do NOT answer the <CLINICAL_QUESTION> itself. Focus ONLY on judging how likely each candidate diagnosis is to be correct under different assumptions about the clinical dimension.
- Do NOT rewrite or restate the <CLINICAL_QUESTION> or <PATIENT_INFORMATION>.
- label must be one of "likely", "neutral", or "unlikely".
- The output must include an entry for every combination of dimension value and candidate diagnosis.

Output format:
Return STRICT JSON only with the following schema:
{{
    "evaluations": [
        [
            {{
                "answer_id": string,
                "dimension_value_id": string,
                "reason": string,
                "label": string
            }},
            ... // one object for each candidate diagnosis
        ],
        ... // one array for each dimension value
    ]
}}
The "evaluations" field must contain exactly {num_dimension_values} arrays (one per dimension value).
Each inner array must contain exactly {num_possible_answers} objects (one per candidate diagnosis).
\end{lstlisting}

\end{document}